\theoremstyle{plain}
\newtheorem{theorem}{Theorem}[section]
\newtheorem{proposition}[theorem]{Proposition}
\newtheorem{lemma}[theorem]{Lemma}
\newtheorem{corollary}[theorem]{Corollary}
\theoremstyle{definition}
\newtheorem{definition}[theorem]{Definition}
\newtheorem{assumption}{Assumption}
\theoremstyle{remark}
\newtheorem{remark}[theorem]{Remark}
\icmltitlerunning{Learning from Streaming Data when Users Choose}
\begin{document}

\twocolumn[
\icmltitle{Learning from Streaming Data when Users Choose}



\icmlsetsymbol{equal}{*}

\begin{icmlauthorlist}
\icmlauthor{Jinyan Su}{cornell}
\icmlauthor{Sarah Dean}{cornell}
\end{icmlauthorlist}

\icmlaffiliation{cornell}{Department of Computer Science, Cornell University}

\icmlcorrespondingauthor{Jinyan Su}{js3673@cornell.edu}
\icmlcorrespondingauthor{Sarah Dean}{sdean@cornell.edu}

\icmlkeywords{Machine Learning, ICML}

\vskip 0.3in
]



\printAffiliationsAndNotice{}  

\begin{abstract}
In digital markets comprised of many competing services, each user chooses between multiple service providers according to their preferences, and the chosen service makes use of the user data to incrementally improve its model. The service providers' models influence which service the user will choose at the next time step, and the user's choice, in return, influences the model update, leading to a feedback loop. In this paper, we formalize the above dynamics and develop a simple and efficient decentralized algorithm to locally minimize the overall user loss. Theoretically, we show that our algorithm
asymptotically converges to stationary points of of the overall loss almost surely. We also experimentally demonstrate the utility of our algorithm with real world data.

\end{abstract}

\section{Introduction}
 Online services, ranging from social media and music streaming to chatbots and search engines, collect user data in real time to make small adjustments to the models they use to serve and personalize content. 
 Such digital platforms must contend with the demand for instant action, handle continuous data streams, and update their models in an incremental manner. For example, music streaming services continuously process new feedback from user interaction data to refine and enhance their personalized playlist and recommendation models \cite{prey2016musica, eriksson2019spotify, anderson2013popular, morris2015selling, webster2023promise}. Search engines analyze user queries and click through rates to personalize future search suggestions \cite{yoganarasimhan2020search, bi2021leveraging, ustinovskiy2013personalization}. Chatbots learn from user interactions to provide more accurate and context-aware responses over time \cite{clarizia2019context, shumanov2021making, ma2021one}.

Moreover, due to the data-driven nature of digital platforms, interesting dynamics emerge among users and service providers: on the one hand, users choose amongst  providers based on the quality of their services; on the other hand, providers use the user data to improve and update their services, affecting future user choices~\cite{ginart2021competing,kwon2022competition, dean2022multi, jagadeesan2023competition}. For example, in personalized music streaming platform, a user chooses amongst different music streaming platforms based on how well they meet the user's needs. Data from the user's interaction with the platform (such as the music user searches for, saves, or skips) can be used to update its recommendation model in order to better predict users' listening habits and create personalized playlists. The newly updated model affects how well the platform will meet a new user's needs, impacting the future user choice.

In this paper, we study the dynamics of such interactions between users who  choose and services which update their models.
Our focus is on streaming data, meaning that data points arrive sequentially, uncoordinated services, meaning that data is not shared, and imperfectly rational users, who do not always select the best performing service.
In particular, we study a range of imperfect user behaviors to account for the fact that while users prefer better performing services, they might make mistakes or have limited information. 
The degree of imperfection is  characterized by the parameter $\zeta$: with probability $\zeta$, users choose amongst service providers uniformly at random, while with probability $1-\zeta$, they choose the best performing model (i.e. the one with the lowest loss).
This setting is challenging due to the fact that services have only limited information: they observe only the data points of users who choose them.
Indeed, each service must contend with sampled data from a high non-stationary distribution.
Despite these challenges, we propose a simple decentralized algorithm, Multi-learner Streaming Gradient Descent (MSGD), and show that it converges to fixed points with desirable properties.

Our analysis rests on the observation that user sub-populations are naturally induced by the selection between models.
Of course, these sub-populations are highly non-stationary and evolve in feedback with model updates.
Prior work shows that the coupled evolution of users and models gives rise to nonlinear dynamics with multiple equilibria.
\citet{ginart2021competing} and \citet{dean2022multi}
empirically and theoretically demonstrate that services will \emph{specialize} when they repeatedly retrain their models on distributions induced by user selection dynamics.
However, this prior work does not adequately handle the realistic setting in which user data is sampled from the population and arrives in a streaming manner.
When services only observe data from a single user at each time step, they have only partial information about the loss, so model updates cannot guarantee monotonic improvements in performance.
Our key insight is to connect streaming data and user choice to induced sub-populations.
This allows us to analyze our algorithm with tools originally developed in the context of stochastic gradient descent.

In summary, our contributions are:
    (1) We formalize the dynamics of streaming users choosing amongst multiple service providers using the notion of induced sub-populations. By considering imperfectly rational users and streaming data, our setting is general and practical. 
    (2) We provide Multi-learner Streaming Gradient Descent (MSGD), an intuitive and efficient algorithm in which service providers simply perform a step of gradient descent with the \textbf{single user loss} when they are chosen. 
    (3) We theoretically prove that the proposed algorithm converges to the local optima of the \textbf{overall loss function}, which quantifies the social welfare of users. We empirically support our result with experiments\footnote{Code can be found at \url{https://github.com/sdean-group/MSGD}} on real data.
\section{Related Work}
We discuss three strands of related work.

First, learning from non-stationary distributions has roots in
\textit{concept drift}, which studies the problem of learning when the target distribution drifts over time \cite{bartlett1992learning, bartlett2000learning, kuh1990learning, gama2014survey}.
For arbitrary sources of shifts, it is difficult to creating unified objective \cite{gama2014survey, webb2016characterizing}. \textit{Performative prediction} \cite{perdomo2020performative} simplifies the problem by assuming the distribution is induced by the deployed model. 
Most closely related to our setting is multi-player performative prediction \cite{li2022multi, narang2023multiplayer, piliouras2023multi}. 
However, the shifts considered in this literature do not adequately model the partition distributions induced by (bounded) rational user choice.

Second, \textit{learning when users choose} has been studied from several perspectives, including opting-out~\cite{hashimoto2018fairness,zhang2019group}, 
data consent~\cite{kwon2022competition,james2023participatory}, competition between strategic learners~\cite{ben2017best,ben2019regression,jagadeesan2023competition,jagadeesan2023improved}, and strategic users~\cite{shekhtman2024strategic}.
Most closely related our work are papers by \citet{ginart2021competing,dean2022multi,bose2023initializing}
who characterize the specialization that results from the combination of user choice and model retraining. 
These works form a conceptual foundation for our paper, but they do not provide insight into the streaming data setting that we study. 

Third and lastly, \textit{learning from streaming samples} has been studied extensively, with many algorithms based on gradient descent~\cite{yang2021simple, wood2021online}.
Of particular relevance to our analysis is work  on stochastic gradient descent for nonconvex functions~\cite{arous2021online, li2019convergence, cutkosky2019momentum}, distributed stochastic gradient descent \cite{swenson2022distributed, cutkosky2018distributed}
and in particular works connecting this perspective to clustering algorithms~\cite{so2022convergence, tang2017convergence, cohen2021online, liberty2016algorithm, tang2016lloyd}, which share similar structure to the specialization that results from MSGD.

\section{Problem setting}
In this section, we formalize the interaction dynamics between users and service providers.

\textbf{Notation}\quad
Let $\mathcal{P}$ be the data distribution on user data space $\mathcal{X}\subseteq \mathbb{R}^d$.
Let $x$ be a data point of $d$ dimensions, i.e., $x\in \mathbb{R}^d$. Denote $x\sim \mathcal{P}$ if data is drawn from distribution $\mathcal{P}$.
Without loss of generality, we assume that $\mathcal{P}$ is a density.  Given a set $S\subset \mathcal{X}$ with positive probability mass $\mathcal{P}(S)>0$, we denote the distribution obtained by restricting $\mathcal{P}$ onto $S$ as $\mathcal{P}|_{S}$. 

Denote the tuple of $k$ services providers' models by $\Theta=(\theta_1, \cdots, \theta_k)$. For the sake of presentation, we slightly abuse 
notation and refer the model parameter $\theta_i$ as service provider $i$'s model. Given a loss function $\ell$, the loss of model $\theta$ for a user with data $x$ is $\ell(x, \theta)$.
The loss measures the performance of model $\theta$ for user $x$, with low loss corresponding to good performance. 
For example, user data $x=(z,y)$ may contain both features $x$ and a label $y$, and $\theta$ parameterized a model which predicts the label from the features, e.g. $\theta^\top z$. For a regression problem with squared loss, we would have $\ell((z,y),\theta)=(\theta^\top z-y)^2$. In a classification setting, we could similarly define $\ell$ as logistic loss.

\subsection{User-Service Interaction Dynamics} \label{sec: setting}
\textbf{Streaming Data from Users}\quad
Data comes from users sequentially: at each time step $t$, a user $x^t\sim\mathcal{P}$ selects among service providers according to their preferences, and commits their data to the selected provider. 
Our notion of data and user is rather general: at one end of the spectrum, all data could come from a single individual, who distributes specific tasks among different service providers.
At the other end, each data point could come from a different individual within a larger population. 

User preferences for different models can be evaluated by the loss functions $(\ell(x^t, \theta^t_1), \cdots, \ell(x^t, \theta^t_k))$. If users were perfectly rational and had perfect information, they would always choose the model with the lowest loss function. However, considering humans may not have full information when making decisions and the fact that humans sometimes make mistakes, we study a more generalized setting where users may have only bounded rationality. Instead of always choosing the model with the lowest loss, we allow users to make mistakes: with probability $\zeta$, they choose randomly among all the models, while with probability $1-\zeta$, they choose the model that suits them best. This randomness captures the fact that, unlike algorithms and computers, humans are not always stable when making decisions. They may choose randomly because they don't know how to make a choice (due to limited information) or because they don't bother to do the optimization (they just don't care much about which service provider to choose).
We refer to this mode of user behavior as a \textbf{no preference user} to describe user who \textit{``has trouble thinking straight or taking care for the future but who at the same time is actuated by a concern with being fair to other people"} \cite{posner1997rational}.
When users select models according to their preference, we call them a \textbf{perfect rational user}, who \textit{chooses the best means to the chooser's ends} \cite{posner1997rational}.

We remark that there are other user behavior models in the literature such as the Boltzmann-rational model \cite{ziebart2010modeling, luce2005individual, luce1977choice}  where users choose proportionally to $e^{-\alpha\ell(x, \theta_i)}$, and $\alpha$ controls the user rationality. Though not in scope of our present results, we provide further discussion on this behavior model in Section~\ref{subsec:grad}, as this could be of interest to consider in future work.

\textbf{Model Updates by Services}\quad
At each time step $t$, once the user makes a choice, only the chosen service provider $i$ receives the data $x^t$. In general, services have no information about the user population $\mathcal{P}$ other than the data points of users who selected them.
This differs from the usual streaming setting where models see every sampled data point and user choice plays no role.
It is also distinct from the setting in which models receive more than a single sample and can estimate the full distribution of users choosing them. Unlike the usual streaming setting, services cannot repeatedly sample from the same distribution because users choices change in feedback. 
Indeed, services only observe a single sample from time-varying distributions, which is not enough to estimate the distribution. 
As a result of these challenges, it is natural to consider a streaming algorithm that immediately and incrementally updates models based on each observed data point. We introduce such an algorithm in Section 3.

Once the selected service updates its model based on data it receives, the same user-service dynamics repeats at the next time step $t+1$.
The new data point $x^{t+1}$ arrives and is assigned to a service provider based on user choice over the new models $\Theta^{t+1}=(\theta_1^{t+1},\dots,\theta^{t+1}_k)$.

\subsection{Learning Objective}

Given the interaction between model quality and user choice, what is the right learning objective?
In the traditional setting, machine learning aims to minimize the expected loss over the entire population: $\mathbb E_{x\sim \mathcal P}[\ell(x,\theta)]$.
However, this fails to account for the fact that users tend to choose the best performing model.
Motivated by the goal of providing users with the highest quality services we aim to minimize the average loss experienced by users, which we refer to as the \emph{overall loss function}.
This objective can be understood as minimizing the loss of the distribution induced by the parameters, similar to the notion of performative optimality introduced by 
~\citet{perdomo2020performative}.

For ease of presenting the overall loss function, we now introduce the following notation related to the subpopulations induced by $\Theta$:
$X(\Theta) = (X_1(\Theta), X_2(\Theta),\cdots, X_k(\Theta))$
is the data partitioning on $\mathcal{X}$ induced by $\Theta$, where $X_i(\Theta)$ is the set $\{x: i \in \arg\min_{j\in [k]} \ell(x, \theta_j)\mid\Theta\}$. Let 
$a(\Theta) = (a_1(\Theta), \cdots, a_k(\Theta))$ be the proportion of the population $\mathcal P$ contained in $X_i(\Theta)$. Naturally, we have $\sum_{i=1}^k a_i(\Theta) =1$.
Finally, $\mathcal{D}(\Theta) = (\mathcal{D}_1(\Theta), \cdots, \mathcal{D}_k(\Theta))$ is the distribution within each partition, where each $\mathcal{D}_i(\Theta)$ is obtained by restricting $\mathcal{P}$ onto partition $X_i(\Theta)$, i.e., $\mathcal{D}_i(\Theta) = \mathcal{P}|_{X_i(\Theta)}$.
Note for fixed $\Theta$, $X(\Theta)$, $a(\Theta)$ and $\mathcal{D}(\Theta)$ are all fixed. 

To make the models identifiable, and to ensure the above partitions/distribution notations being well-defined, we additional assume the service providers don't have exact the same model. 
\begin{assumption}\label{ass: not the same model}
$
\forall i, j\in [k], \theta_i\neq \theta_j, \forall i\neq j.$ 
\end{assumption}

This assumption says that the services providers have different parameters, which is realistic considering that the service providers don't share parameters. 

Using this notation of induced distributions, we can formally define an objective function that prioritizes user experience
We will start  by defining the learning objective under rational user behavior and then extend it to bounded rationality setting. 

\textbf{Perfect Rationality}\quad
For perfectly rational users and  a fixed $\Theta$, model $i$ is faced with data sampled $x\sim \mathcal{D}_i(\Theta)$. Namely, $x$ is sampled from a distribution supported on $X_i(\Theta)$, where all the $x\in X_i(\Theta)$ prefer model $\theta_i$ over all the other models $\theta_j$ for $j\in [k]$ with $j\neq i$. 
The set $X_i(\Theta)$ can be understood as a subpopulation naturally induced  by the models $\Theta$ and user preference; that is, users choosing the same service provider make up a single subpopulation. 
The expected loss of service provider $i$ over the induced subpopulation can be written as $\mathbb{E}_{x\sim \mathcal{D}_i(\Theta)}[\ell(x, \theta_i)]$. 
In order to write the overall loss function, we need to consider all models.
Since $a_i(\Theta)$ is the portion of subpopulation $X_{i}(\Theta)$ within the total population $\mathcal{P}$, the overall learning objective under perfect rationality can be written as
\begin{equation}\label{eq: 1}
\begin{aligned}
f_{\text{PR}}(\Theta)=&\sum_{i=1}^ka_i(\Theta)\cdot \underset{x\sim \mathcal{D}_i(\Theta)}{\mathbb{E}}[\ell(x,\theta_i)].
\end{aligned}
\end{equation}

This expression can also be understood as the expected loss over users $x$ sampled from the population $\mathcal P$ and models $\theta$ sampled according to rational user choice.
From this perspective, the expression in~\eqref{eq: 1} is the result of applying the tower property of expectation and conditioning on the model choice being $i$.
Now we extend this loss function to bounded rationality. 

\textbf{Bounded Rationality}\quad
A perfectly rational user would deterministically choose service $\theta_i$ to minimize their loss $\ell(x, \theta)$. However, due to the complexity and uncertainty in real world decision making, such as limited knowledge, resource, and time, users don't always act to maximize their utility \cite{selten1990bounded, jones1999bounded, gigerenzer2002bounded}. As a result, they might just randomly choose a service provider due to limited time or imperfect information. We refer to this as a user with \textit{bounded rationality}. We introduce the parameter $\zeta$ to characterize this user behavior:
with probability $1-\zeta$, users rationally choose the model $\theta_i$ which minimizes their loss,
and with probability $\zeta$, they choose uniformly at random amongst all the models $\Theta=(\theta_1, \cdots, \theta_k)$.
Conditioned on this latter event, the probability that the user selects model $i$ is $1/k$ for all $i\in[k]$.
Conditioned on a perfectly rational choice, the probability that the user selects model $i$ is zero unless $x\in X_i(\Theta)$. 
Combining these two events, the expected loss for users sampled from population $\mathcal P$ and models sampled according to the user choice is
\begin{equation}\label{eq: 2}
\begin{aligned}
f(\Theta) 
=&  \underset{x\sim \mathcal P}{\mathbb{E}}\Big[\sum_{i=1}^k \Big ((1-\zeta) \mathbb 1_i(x,\Theta) +\frac{\zeta}{k}\Big)\ell(x,\theta_i)\Big]\:,\\
\end{aligned}
\end{equation}
where we define $\mathbb 1_i(x,\Theta)=\mathbb 1\{x\in X_i(\Theta)\}$.
This expected loss defines the learning objective in the bounded rationality setting.
Compared to perfect rationality, the learning objective under bounded rationality accounts for the fact that users may not always select the best service for them.

\section{Multi-learner Streaming Gradient Descent}
In this section, we investigate important properties of the learning objective and then present an algorithm for the multi-learner setting which works for both perfect and bounded rational users.

\subsection{Properties of Learning Objective}\label{sec: boundedness}

We begin by deriving some important properties of the learning objective $f(\Theta)$, laying the groundwork for the subsequent sections. Formal proofs of these properties are provided in Appendix \ref{app: properties}.

\textbf{Property 1: Decomposability.} \quad
With some simple algebraic manipulation of $f(\Theta)$, we can decompose $f(\Theta)$ as
$f(\Theta) = (1-\zeta) \cdot f_{\text{PR}}(\Theta) +\zeta \cdot f_{\text{NP}}(\Theta)$, where $$\textstyle f_{\text{NP}}(\Theta) = \frac{1}{k}\sum_{i=1}^k\underset{x\sim \mathcal{P}}{\mathbb{E}}[\ell(x,\theta_i)]$$
represents the loss function when users have no preference over the services providers. 
When $\zeta = 0$, the learning objective reduces to perfect rationality $f_{\text{PR}}(\Theta)$; when $\zeta = 1$, the it reduces to ``no preference" $f_{\text{NP}}(\Theta)$.

\textbf{Property 2: Boundedness.}\quad
$f(\Theta)$ is lower bounded by $f_{\text{PR}}(\Theta)$ and upper bounded by $f_{\text{NP}}(\Theta)$, i.e., $f_{\text{PR}}(\Theta)\leq f(\Theta)\leq f_{\text{NP}}(\Theta)$. 

We now make a set of common assumptions on the loss as a function of parameters: $\ell(x, \cdot)$.
\begin{assumption}\label{ass: loss function}
Assume for all $x\in\mathcal X$, the loss function $\ell(x, \cdot)$ is non-negative, convex, differentiable, $L$-Lipschitz, and $\beta$-smooth. 
\end{assumption}
\begin{figure}[b]

\centerline{\includegraphics[width=0.7\columnwidth]{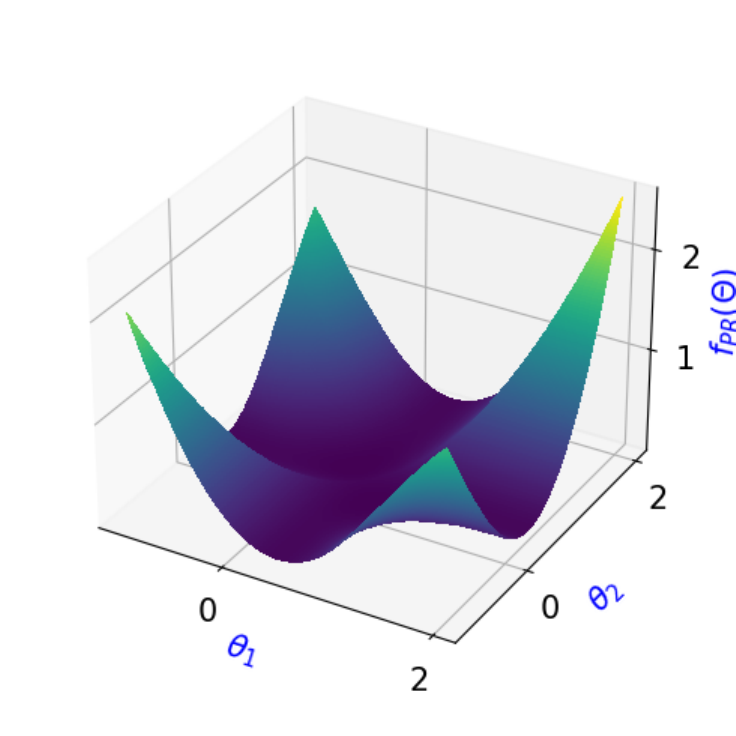}}
\caption{Example of $f_{\text{PR}}(\Theta)$ being non-convex.}
\label{fig: showing f(PR) is non-convex}
\end{figure}
\textbf{Property 3: Non-convexity.}\quad
When $\ell(x, \theta)$ is convex in $\theta$, $f_{\text{NP}}(\Theta)$ is convex in $\Theta$, but since $f_{\text{PR}}(\Theta)$ is generally non-convex,  $f(\Theta)$ is also non-convex when $\zeta<1$.

In Figure \ref{fig: showing f(PR) is non-convex}, we give a simple example in which the loss $f_{\text{PR}}(\Theta)$ is non-convex. In this example, we let $\mathcal P$ be a uniform distribution over the interval $[0, 1]$, the number of services $k=2$, and the loss function to be $\ell(x, \theta) = (x-\theta)^2$. Full details are provided in Appendix \ref{app: illustration of f being non-convex}. 

Since $f(\Theta)$ is generally non-convex, we aim to design algorithms whose outputs converge to \emph{local optima}, i.e. the stationary points of learning objective $f(\Theta)$.
\begin{definition}(Stationary Points)\label{def: stationary point}
The stationary points of a differentiable function $f(\Theta)$ are $\{\Theta: \nabla f(\Theta) = 0\}$.
\end{definition}
\begin{remark}
{While stationary points are local minima of the $f(\Theta)$ objective, they are \textbf{global} minima for a decoupled objective where the effect of $\Theta$ on the partition is not accounted for. As we will show in the proof of Lemma \ref{lemma: gradient of learning objective}, stationary points contains the points where each model is at the global optimum of the induced distribution of users that they see.}
\end{remark}

\subsection{Multi-Learner Streaming Gradient Descent}

We propose an intuitive and simple algorithm, Multi-Learner Streaming Gradient Descent (MSGD), presented in Algorithm \ref{alg: simplified single model update}. 
\begin{algorithm}[tb]
	\caption{Multi-learner Streaming Gradient Descent (MSGD) \label{alg: simplified single model update}}
	\begin{algorithmic}
		\STATE {\bfseries Input:} Rationality parameter $\zeta$; loss function $\ell(\cdot,\cdot)\geq 0$; Initial models $\Theta^0 = (\theta^0_1, \cdots, \theta^0_k)$; Learning rate $\{\eta^{t}\}_{t=1}^{T+1}$.
       \FOR{$t=0, 1,2, \dots, T$ }
\STATE   Sample data point $x\sim \mathcal{P}$,

\STATE\textcolor{lightgray}{User Side: \textbf{User Selects a Service Provider:}}\\
\STATE
User selects best model $i=\arg\min_{j\in [k]} \ell(x, \theta_j^t)$ w.p. $1-\zeta$, otherwise the user selects $i$ from $\{1,...,k\}$ uniformly at random w.p. $\zeta$. 
\STATE\textcolor{lightgray}{Learner Side: \textbf{Selected Learner Updates its Model:}}\\
\STATE   The selected model $i$ receives data and performs a gradient step: 
$\theta^{t+1}_{i}= \theta^t_{i}-\eta^{t+1}\cdot\nabla\ell(x,\theta^t_i)$, while all other models remain the same $\theta^{t+1}_j=\theta^t_j$ for all $j\neq i$.

        \ENDFOR\\
        \STATE \textbf{Return }
        $\Theta^T$
	\end{algorithmic}
\end{algorithm}
We write the algorithm to reflect the setting (Section \ref{sec: setting}) from both the user and the learner side. 

On user side, a user comes into the digital platform at each time step. The user selects a model amongst service providers according to their preference and rationality.
The user shares their data only with the selected service.
We emphasize that these steps reflect the user behavior, which is not under the control of service providers or algorithm designers.
On the learner side, one service will receive data; this service computes the gradient of the loss for this single user and then performs a gradient descent step. Because the services are not coordinated, the parameters of the models that were not selected remain the same. We denote model parameter tuple at time $t$ as $\Theta^t = (\theta_1^t, \cdots, \theta_k^t)$. At each time step, the selected model $i$ updates with a gradient step $\theta_i^{t+1}\leftarrow \theta_i^t-\eta^{t+1}\cdot \nabla \ell(x, \theta_i^t)$. In the next time step $t+1$, a new user will arrive and decide which model to select based on the updated model parameters $\Theta^{t+1}$. 

MSGD is practical due to three main advantages. First, it is computationally affordable, memory efficient and privacy-conscious. At each time step, when a data point arrives, the service only has to perform a lightweight gradient descent step, which enables services to adapt quickly to incoming information without using extensive computational resources. Moreover, no extra storage is needed to retain the past user data, which may also address privacy concerns. Second,  MSGD is amenable to the partial information setting: services do not need to know anything other than the data they receive. Third, MSGD handles non-stationary user distribution: user preferences update along with model parameters. Despite the overall user population being constant, the subpopulation that will choose a specific model evolves over time. To enhance user experience, service providers optimize over the population that chooses them, i.e., $\mathcal{D}_i(\Theta)$, which not a static distribution but a function of $\Theta$.

Although the gradient update from learner side in Algorithm \ref{alg: simplified single model update} is intuitively simple, it is not straightforward to see whether Algorithm \ref{alg: simplified single model update} will perform well with respect to the overall objective $f(\Theta)$. 
One challenge arises due to the fact that $f(\Theta)$ is non-convex. Even focusing on convergence to local optima (Definition~\ref{def: stationary point}) leaves several additional challenges.
First, notice that instead of updating all the learners at the same time with batched data, in MSGD, learners are updated asynchronously, one at a time depending on user choice, fostering the potential for competition amongst learners.
Indeed, an update to model $i$ can affect the distribution of users selecting any model $j\neq i$, meaning that each model must content with a highly non-stationary distribution.
Moreover, since the update uses the gradient of a single data point, rather than the gradient of the objective $f(\Theta)$, we can't guarantee that the update will decreases $f(\Theta)$ (which would imply convergence to
a local optimum).
It might be natural to consider something like the \emph{learner expected loss} $\mathbb E_{x\sim \mathcal D_i(\Theta)}[\ell(x,\theta_i)]$ which would measure the \emph{local} performance of each model.
However, due to the streaming nature of the update, it is still not possible to show that this learner loss  will decrease.
In other words, the streaming update step in Algorithm \ref{alg: simplified single model update} is not
\textit{risk reducing}, a property that \citet{dean2022multi} leveraged to prove convergence in the full information setting. 

In Section~\ref{sec: convergence}, we theoretically prove the convergence of Algorithm \ref{alg: simplified single model update} to stationary points of the learning objective.
First, in the following subsection, we connect the gradient of a single user $\nabla \ell(x, \theta)$ with the gradient of the learning objective $\nabla f(\Theta)$.
\subsection{Gradient of learning objective $\nabla f(\Theta)$}\label{subsec:grad}
The following lemma computes the gradient of the objective function $f(\Theta)$.
This lemma facilitates the analysis of MSGD.
\begin{lemma}\label{lemma: gradient of learning objective}
For the learning objective 
 $f(\Theta)$ defined in Eq.~\eqref{eq: 2}, the  gradient with respect to $\theta_i$ is:
 \begin{align*}
 \nabla_{\theta_i} f(\Theta) = &(1-\zeta) \cdot a_i(\Theta)\cdot\underset{x\sim \mathcal{D}_i (\Theta)}{\mathbb{E}}[\nabla_{\theta_i}\ell(x,\theta_i)] \\
 &+  \frac{\zeta }{k}\cdot \underset{x\sim \mathcal{P}}{\mathbb{E}}[\nabla_{\theta_i} \ell(x, \theta_i)]\:.
 \end{align*}
\end{lemma}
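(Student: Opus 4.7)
My plan is to compute $\nabla_{\theta_i} f(\Theta)$ by splitting $f$ along the decomposition $f = (1-\zeta) f_{\text{PR}} + \zeta f_{\text{NP}}$ (Property 1) and treating each term separately.

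The $f_{\text{NP}}$ term is routine. Since $f_{\text{NP}}(\Theta)=\frac{1}{k}\sum_{j=1}^k \mathbb{E}_{x\sim\mathcal{P}}[\ell(x,\theta_j)]$ and only the $j=i$ summand involves $\theta_i$, I use Assumption~\ref{ass: loss function} ($\ell$ differentiable, $L$-Lipschitz, so dominated) to exchange expectation and gradient, yielding $\nabla_{\theta_i} f_{\text{NP}}(\Theta)=\frac{1}{k}\mathbb{E}_{x\sim\mathcal{P}}[\nabla_{\theta_i}\ell(x,\theta_i)]$. This matches the second term in the claim.

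The substantive step is $f_{\text{PR}}$. The plan is to rewrite
\begin{equation*}
f_{\text{PR}}(\Theta)=\sum_{j=1}^k \int_{X_j(\Theta)} \ell(x,\theta_j)\, d\mathcal{P}(x) = \mathbb{E}_{x\sim\mathcal{P}}\Bigl[\min_{j\in[k]}\ell(x,\theta_j)\Bigr],
\end{equation*}
where the second equality holds because by definition $X_j(\Theta)=\{x : j\in\arg\min_{j'}\ell(x,\theta_{j'})\}$ (and overlaps occur only on the tie set, which I'll argue has $\mathcal{P}$-measure zero). Then I apply an envelope-type argument (Danskin's theorem): for $\mathcal{P}$-a.e. $x$, the minimum is attained by a unique index $j^\star(x,\Theta)$, and at such $x$ the map $\Theta\mapsto \min_j\ell(x,\theta_j)$ is differentiable in $\theta_i$ with derivative $\mathbb{1}\{j^\star(x,\Theta)=i\}\,\nabla_{\theta_i}\ell(x,\theta_i)=\mathbb{1}_i(x,\Theta)\nabla_{\theta_i}\ell(x,\theta_i)$. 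Exchanging the gradient and expectation (again via Lipschitz/smoothness as a dominating function) gives
\begin{equation*}
\nabla_{\theta_i} f_{\text{PR}}(\Theta)=\mathbb{E}_{x\sim\mathcal{P}}[\mathbb{1}_i(x,\Theta)\nabla_{\theta_i}\ell(x,\theta_i)]=a_i(\Theta)\,\mathbb{E}_{x\sim\mathcal{D}_i(\Theta)}[\nabla_{\theta_i}\ell(x,\theta_i)],
\end{equation*}
where the last equality uses the definition of $\mathcal{D}_i(\Theta)=\mathcal{P}|_{X_i(\Theta)}$ and $a_i(\Theta)=\mathcal{P}(X_i(\Theta))$. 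Combining with the $f_{\text{NP}}$ term delivers the lemma.

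The main obstacle is the envelope step: specifically, justifying (i) that the tie set $\{x : \arg\min_j \ell(x,\theta_j)\text{ is not a singleton}\}$ has $\mathcal{P}$-measure zero so that the differentiation-under-the-integral is valid pointwise a.e., and (ii) that the boundary ``flux'' contributions from how $X_j(\Theta)$ itself moves with $\theta_i$ genuinely cancel. Intuitively this is because on the boundary between $X_i$ and $X_j$ one has $\ell(x,\theta_i)=\ell(x,\theta_j)$, so mass transferring across the boundary contributes the same value to $f_{\text{PR}}$ either way and the leading-order boundary term vanishes; formally, Assumption~\ref{ass: not the same model} plus the $\mathcal{P}$ being a density and $\ell$ being smooth suffice to keep the tie set a null set, after which the standard envelope/Leibniz rule handles the rest. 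This is the only place care is needed; everything else is bookkeeping.
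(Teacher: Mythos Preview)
Your proposal is correct and follows the same overall strategy as the paper: split $f=(1-\zeta)f_{\text{PR}}+\zeta f_{\text{NP}}$, handle $f_{\text{NP}}$ by differentiating under the integral, and for $f_{\text{PR}}$ show that the moving-partition (boundary) contribution vanishes because losses coincide on the tie set. The only difference is packaging: the paper computes the directional derivative of $f_{\text{PR}}$ by hand, introducing the surrogate $F(\Theta;\Theta')=\sum_j a_j(\Theta')\,\mathbb{E}_{x\sim\mathcal D_j(\Theta')}[\ell(x,\theta_j)]$ with the partition frozen at $\Theta'$, splitting $f_{\text{PR}}(\Theta+\gamma v)-f_{\text{PR}}(\Theta)$ into a fixed-domain piece (which yields $D_vF(\Theta;\Theta)$) and a boundary piece, and bounding the latter by $|\ell(x,\tilde\theta_i)-\ell(x,\tilde\theta_j)|=O(\gamma)$ on a switching set whose measure tends to zero; you instead write $f_{\text{PR}}(\Theta)=\mathbb{E}_x[\min_j\ell(x,\theta_j)]$ and invoke Danskin/envelope pointwise, then interchange via the uniform $L$-Lipschitz bound. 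Both routes hinge on the same fact (the tie set is $\mathcal P$-null) and yield the same formula; your version is more concise by appealing to a named theorem, while the paper's is more self-contained and makes the $O(\gamma)$ cancellation on the boundary explicit.
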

\begin{proof}[Proof Sketch]
Recall the decomposition of $f(\Theta) = (1-\zeta)\cdot f_{\text{PR}}(\Theta) +\zeta \cdot f_{\text{NP}}(\Theta)$. The gradient of  $f_{\text{NP}}(\Theta)$ is easy to compute, since by linearity of the gradient and Lipschitzness of the loss we may write $\nabla_{\theta_i} f_{\text{NP}}(\Theta) = \frac{1 }{k}\cdot \underset{x\sim \mathcal{P}}{\mathbb{E}}[\nabla_{\theta_i} \ell(x, \theta_i)]$. 
It is more difficult to compute the gradient of $f_{\text{PR}}(\Theta)$ because the distribution $\mathcal D_i(\Theta)$, and thus the domain of integration, is also a function of $\Theta$. In our proof, we calculate $\nabla_{\theta_i}f_{\text{PR}}(\Theta)$ through its directional derivative $D_{v}f_{\text{PR}}(\Theta) = \lim_{\gamma\rightarrow 0}\frac{1}{\gamma}(f_{\text{PR}}(\Theta+\gamma v)-f_{\text{PR}}(\Theta))$. Similar to \citet{so2022convergence}, we decouple the difference $f_{\text{PR}}(\Theta+\gamma v)-f_{\text{PR}}(\Theta)$ into two parts; one has fixed integral domain that is independent of $\Theta$, and the other term is an integration on a zero measure set, which is 0 when we take the limit. In the end, it turns out that the derivative of $f_{\text{PR}}(\Theta)$ can be computed by treating its domain of integral to be fixed. Thus, we can move the derivative inside the integral and get
\begin{equation} \label{eq18}
\nabla_{\theta_i} f_{\text{PR}}(\Theta) = a_i(\Theta)\cdot\underset{x\sim \mathcal{D}_i(\Theta)}{\mathbb{E}}[\nabla_{\theta_i}\ell(x,\theta_i)]
\end{equation}
The complete proof can be found in Appendix \ref{app: calculate gradient}. 
\end{proof}

This lemma shows that the gradient of the objective with respect to model $i$ depends only on the gradient of the loss with respect to model $i$'s parameters.
The decomposability allows for decentralized algorithms, like the one proposed in Algorithm~\ref{alg: simplified single model update}.
We now turn to formalizing the guarantees of this algorithm in terms of convergence.

\begin{remark}
Though it is interesting to consider the Boltzmann-rational model \cite{ziebart2010modeling, luce2005individual, luce1977choice}, this form of user choice poses additional difficulties for the development of decentralized algorithms. 
Under this behavior model, the gradient $\nabla_{\theta_i} f(\Theta)$ w.r.t. model $i$ unavoidably depends on the loss of other service providers. This poses a challenge to our setting: though service providers are all interested in providing an accurate service, they are not \textbf{coordinated} and can't share information with each other.
This challenge may be of interest for future work. \end{remark}

\section{Asymptotic Convergence Analysis}\label{sec: convergence}
 Define filtration $(\mathcal{F}^t)_{t=0}^{\infty}$ associated with MSGD. In particular, let $\mathcal{F}^0=\sigma(\Theta^0)$ be the $\sigma$-algebra generated by $\Theta^0$.
Let $\sigma$-algebra $\mathcal{F}^t =\sigma (x^t, \eta^t, i^t, \mathcal{F}^{t-1})$  contain all the information up to iteration $t$, where $i^t$ indicates the random user choice. 
We now show that MSGD will converge to stationary points of the learning objective.
Formally, convergence is defined as follows.
\begin{definition}(Convergence)\label{def: convergence}
We say a sequence $\{\Theta^t\}_{t=0}^{\infty}$ converges to a set $\mathcal T$ if $\exists ~T\in \mathbb{N}$, s.t., $\forall t>T, \Theta^t\in \mathcal T$.
\end{definition}
We will prove that MSGD converges under the following additional assumptions on step size, the user population, and the loss function.
These assumptions are standard in the literature on stochastic (non)convex optimization \cite{ge2015escaping, bertsekas2000gradient, wang2019stochastic}.

\begin{assumption}\label{ass: stepsize}
The stepsize $\{\eta^{t+1}\}_{t=0}^{\infty}$ satisfies the  condition: $\sum_{t=0}^{\infty}{(\eta^{t+1})^2}<\infty$. 
\end{assumption}

\begin{assumption}\label{ass: compact}
We assume $\{\nabla f(\Theta)=0\}$ to be compact. 
\end{assumption}
The above compactness assumption is rather common \cite{leluc2020asymptotic, so2022convergence}, which allows us to prove that $\{\Theta^{t}\}_{t=0}^{\infty}$ converges to $\{\nabla f(\Theta)=0\}$ by showing that $\nabla f(\Theta^t)$ converges to 0. 
\begin{assumption}\label{ass: distribution}
    Assume the underlying data distribution $\mathcal{P}$ has a continuous density function $p$ with a bounded support, namely, $\text{Pr}_{x\sim \mathcal{P}}(\|x\|>R) = 0$ for some $R>0$.
\end{assumption}
\begin{assumption}\label{ass: assumptions for proving a being Lipschitz}
For any {$\theta\neq \theta^{'}$, there exists a $d_0$ such that for all small $d<d_0$}, the Lebesgue measure of set $S_d=\{x: |\ell(x, \theta)-\ell(x,\theta^{'})|<d\}$ is bounded by $d$, i.e., $\text{Vol}(S_d)\leq d$. 
\end{assumption}
This assumption states that the loss function  are good enough for most users to distinguish different services (so that they can make a choice) and {a \textbf{sufficiently small} perturbation on one of the models won't dramatically change user preference. } {We provide further intuition and examples in Appendix \ref{appedix: assumption intuition}.}

Under these assumptions, we have the following theorem showing that the learning objective converges.
Moreover,
under an additional condition that $\eta^t$ decreases with a rate of $\frac{1}{t}$, the objective
converges
to a local optimum and the iterates $\{\Theta^t\}_{t=1}^T$ converge to stationary points.  The following theorem makes this formal. 
\begin{theorem}\label{thm: main}(Convergence of Algorithm \ref{alg: simplified single model update})
Denote the iterates from Algorithm \ref{alg: simplified single model update} and their overall loss to be $\{\Theta^t\}_{t=0}^T$ and $\{f(\Theta^t)\}_{t=0}^T$ respectively. 
Under Assumptions \ref{ass: not the same model}, \ref{ass: loss function}, \ref{ass: stepsize} 
there is an $\mathbb{R}$-valued random variable $f^{*}$ such that $f(\Theta^{t})$ converges to $f^{*}$ almost surely. Additionally, under Assumptions \ref{ass: compact}, \ref{ass: distribution}, \ref{ass: assumptions for proving a being Lipschitz} and setting $\eta^t = \frac{\eta_c}{t}$, where $\eta_c$ is a constant, 
the iterate $\{\Theta^{t}\}_{t=1}^T$
 converges to the set of stationary points of $f(\Theta)$, i.e, $\{\Theta:\nabla f(\Theta) = 0 \}$ almost surely.
\end{theorem}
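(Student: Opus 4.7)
The plan is to interpret MSGD as unbiased stochastic gradient descent on the composite objective $f(\Theta)$. Writing the update as $\Theta^{t+1} = \Theta^t - \eta^{t+1} G^t$, where $G^t$ has a single non-zero block (the selected service's gradient) and $\|G^t\| \le L$ by Assumption 2, Lemma 3.4 identifies the conditional mean $\mathbb{E}[G^t \mid \mathcal{F}^t] = \nabla f(\Theta^t)$. Both parts of the theorem will then follow from a stochastic descent analysis, differing only in the strength of the conclusion extracted from the common inequality.

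For the first conclusion, I would establish a descent-in-expectation inequality
\[\mathbb{E}[f(\Theta^{t+1}) \mid \mathcal{F}^t] \le f(\Theta^t) - \eta^{t+1}\|\nabla f(\Theta^t)\|^2 + C(\eta^{t+1})^2\]
from a Lipschitz property of $\nabla f$ and the uniform bound on $\|G^t\|$. Since $f \ge 0$ (Property 2) and $\sum_t (\eta^{t+1})^2 < \infty$ (Assumption 3), the Robbins-Siegmund almost supermartingale convergence theorem then yields both $f(\Theta^t) \to f^*$ almost surely and $\sum_t \eta^{t+1}\|\nabla f(\Theta^t)\|^2 < \infty$ almost surely.

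For the second conclusion, the choice $\eta^t = \eta_c/t$ makes $\sum_t \eta^t = \infty$, so the summability above forces $\liminf_t \|\nabla f(\Theta^t)\| = 0$ almost surely. To promote this to a genuine limit, I would use continuity of $\nabla f$ (coming from Assumption 6) together with $\|\Theta^{t+1} - \Theta^t\| \le L\eta^{t+1}\to 0$: if $\|\nabla f(\Theta^t)\|$ exceeded some $\epsilon$ infinitely often, each such excursion would have to be followed by a return to $\|\nabla f\| < \epsilon/2$, and the time required for that return, paired with the $\Omega(\epsilon)$ gradient norm along the way, contradicts $\sum_t \eta^{t+1}\|\nabla f(\Theta^t)\|^2 < \infty$. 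Combining $\|\nabla f(\Theta^t)\| \to 0$ with boundedness of the iterates (obtained by propagating Assumption 5 through the bounded gradient updates), continuity of $\nabla f$, and compactness of $\{\nabla f = 0\}$ (Assumption 4) then gives $d(\Theta^t, \{\nabla f = 0\}) \to 0$.

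The main obstacle in this plan is the Lipschitz continuity of $\nabla f$. From Lemma 3.4, $\nabla_{\theta_i} f$ involves the integral $\int_{X_i(\Theta)}\nabla\ell(x,\theta_i)p(x)\,dx$ whose domain of integration moves with $\Theta$. A perturbation of $\Theta$ by $\delta$ produces an \emph{interior} change bounded by $\beta\|\delta\|$ via the $\beta$-smoothness of $\ell$, plus a \emph{boundary} change proportional to the measure of the symmetric difference between $X_i(\Theta)$ and $X_i(\Theta+\delta)$; Assumption 6 is tailored precisely to bound this boundary measure linearly in $\|\delta\|$, mirroring the zero-measure boundary argument invoked in the proof of Lemma 3.4 (in the spirit of So et al., 2022). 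Once $\nabla f$ is shown to be Lipschitz, the rest of the proof reduces to the standard stochastic nonconvex optimization template sketched above.
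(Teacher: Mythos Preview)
Your approach is correct and in fact considerably cleaner than the paper's, particularly for the second conclusion. The unbiasedness observation $\mathbb{E}[G^t\mid\mathcal F^t]=\nabla f(\Theta^t)$ is correct (it is the content of the paper's Lemma~\ref{lemma: gradient of learning objective} combined with the selection probabilities), and once you have it, the Robbins--Siegmund argument for Part~1 is essentially the same as the paper's supermartingale argument, just packaged differently. The real divergence is in Part~2: the paper does \emph{not} exploit the unbiased-SGD framing. Instead, its descent term $A^{t+1}$ is written per selected coordinate $i$ as $(1-\zeta)\eta^{t+1}\|\nabla_{\theta_i}f_{\text{PR}}\|^2/a_i(\Theta^t)+\zeta\eta^{t+1}\|\nabla_{\theta_i}f_{\text{NP}}\|^2$, which forces the paper to control how often each individual model $i$ is selected. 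This is done through Lemma~\ref{lemma: convergence of iterates}, verified via Azuma--Hoeffding on $\sum_t\eta_i^{t+1}$ and the local Lipschitzness of $a_i(\Theta)$ (so that the selection probability of model $i$ stays bounded below over short windows where $\|\nabla_{\theta_i}f\|$ is large). Your route sidesteps all of this by working with the full gradient and the full step sequence from the start; what you gain is a textbook excursion argument, what you lose is the per-learner granularity that the paper's proof exposes.

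One point to tighten: your claim that boundedness of the iterates is ``obtained by propagating Assumption~\ref{ass: distribution} through the bounded gradient updates'' is not justified. Bounded data support gives $\|G^t\|\le L$, but with $\sum_t\eta^t=\infty$ this does not prevent $\Theta^t\to\infty$. You genuinely need either (i) a global Lipschitz bound on $\nabla f$ so that the excursion argument works without boundedness, or (ii) an a priori confinement of the iterates. Assumption~\ref{ass: assumptions for proving a being Lipschitz} as stated gives only \emph{local} Lipschitzness of $a_i(\Theta)$ (the constant $d_0$ depends on $\theta,\theta'$), so promoting this to a global Lipschitz constant for $\nabla f$ requires either boundedness or a uniformity not explicitly assumed. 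The paper glosses over the same issue when it asserts that $\{\|\nabla f\|\le\epsilon_0\}$ is compact merely because $\{\nabla f=0\}$ is compact and $\nabla f$ is continuous; that implication is false in general. So this gap is shared, but you should be aware that your final step from $\|\nabla f(\Theta^t)\|\to 0$ to $d(\Theta^t,\{\nabla f=0\})\to 0$ does depend on ruling out $\Theta^t\to\infty$.
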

To prove Theorem \ref{thm: main},
we first argue that the objective converges.
Then, we use this fact to show that the parameters converge to a stationary point.
The argument proceeds in the following subsections respectively.

\subsection{Convergence of $f(\Theta)$}

We first provide an outline of the proof of convergence of the learning objective.
To start, we show an analytic upper bound on the value of $f(\Theta)$ at time $t+1$ compared to time $t$.
This bound relies on the smoothness of $\ell(x, \cdot)$.
It writes this difference explicitly in terms of the gradient of the objective function $\nabla f(\Theta)$, as computed in Lemma \ref{lemma: gradient of learning objective}, and  the gradient of a single loss $\ell(x, \theta_i)$, as used for the gradient update in MSGD algorithm. 
Formally, we have the following lemma. 
\begin{lemma}\label{lemma: analytic upper bound of f}
Let $f(\Theta)$ be our learning objective proposed in Eq.~\eqref{eq: 2} and let $\bar\zeta = 1-\zeta$ denote the probability that the best model is selected (while w.p. $\zeta$ a random model is selected). 
Let $i$ be the model selected at time $t$.
Then the following inequality holds under Assumption \ref{ass: loss function}:
\begin{equation*}
f(\Theta^{t+1})\leq f(\Theta^t) -A^{t+1} +B^{t+1}-C^{t+1}
\end{equation*}
where 
\small
\begin{align}
A^{t+1}=&\bar\zeta\eta^{t+1} \frac{\|\nabla_{\theta_i}f_{\text{PR}}(\Theta^t)\|^2}{a_i(\Theta^t)}\notag +\zeta   \eta^{t+1} \|\nabla_{\theta_i}f_{\text{NP}}(\Theta^t)\|^2\notag\\
B^{t+1} =&\frac{\beta}{2} \sum_{i=1}^k (\eta_i^{t+1})^2 \cdot \|\nabla\ell(x^{t+1}, \theta_i^t)\|^2\notag\\
C^{t+1} = &\bar\zeta  \eta^{t+1} \langle \nabla_{\theta_i} f_{\text{PR}}(\Theta^t), \nabla \ell(x^{t+1}, \theta_i^t)-\frac{\nabla_{\theta_i} f_{\text{PR}}(\Theta^t)}{a_i(\Theta^t)}\rangle\notag\\
&+\zeta  \eta^{t+1} \langle \nabla_{\theta_i} f_{\text{NP}}(\Theta^t),\nabla \ell(x^{t+1}, \theta_i^t)- \nabla_{\theta_i} f_{\text{NP}}(\Theta^t)\rangle \notag
\end{align}
\end{lemma}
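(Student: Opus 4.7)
The plan is to leverage the decomposition from Property 1, namely $f(\Theta) = (1-\zeta) f_{\text{PR}}(\Theta) + \zeta f_{\text{NP}}(\Theta)$, and bound each summand separately, exploiting the fact that the MSGD update only modifies the single coordinate $\theta_i$ at step $t$.

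For the ``no preference'' term, $f_{\text{NP}}(\Theta) = \frac{1}{k}\sum_{j} \mathbb{E}_{x\sim\mathcal{P}}[\ell(x,\theta_j)]$ is separable across services, and each summand $\frac{1}{k}\mathbb{E}_{x\sim\mathcal{P}}[\ell(x,\theta_i)]$ is $\frac{\beta}{k}$-smooth in $\theta_i$ by Assumption~\ref{ass: loss function} and Jensen. Since only $\theta_i$ changes, the classical descent lemma immediately yields
\begin{equation*}
f_{\text{NP}}(\Theta^{t+1}) \leq f_{\text{NP}}(\Theta^t) + \langle \nabla_{\theta_i} f_{\text{NP}}(\Theta^t),\, \theta_i^{t+1}-\theta_i^t \rangle + \tfrac{\beta}{2k}\|\theta_i^{t+1}-\theta_i^t\|^2.
\end{equation*}

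For the rational term, $f_{\text{PR}}$ is not globally smooth because $\mathcal{D}_j(\Theta)$ depends on $\Theta$, so the descent lemma does not apply directly. The key observation is that $f_{\text{PR}}(\Theta) = \int \min_j \ell(x,\theta_j)\,p(x)\,dx$, hence for any $\Theta'$ the min-partition at $\Theta$ is suboptimal for $\Theta'$, giving $f_{\text{PR}}(\Theta') \leq \sum_j a_j(\Theta)\mathbb{E}_{x\sim\mathcal{D}_j(\Theta)}[\ell(x,\theta'_j)]$. Applying this at $\Theta'=\Theta^{t+1}$ and using $\theta_j^{t+1}=\theta_j^t$ for $j\neq i$, all such terms collapse to $f_{\text{PR}}(\Theta^t) - a_i(\Theta^t)\mathbb{E}_{x\sim\mathcal{D}_i(\Theta^t)}[\ell(x,\theta_i^t)]$, yielding
\begin{equation*}
f_{\text{PR}}(\Theta^{t+1}) \leq f_{\text{PR}}(\Theta^t) + a_i(\Theta^t)\bigl(\mathbb{E}_{x\sim\mathcal{D}_i(\Theta^t)}[\ell(x,\theta_i^{t+1})]-\mathbb{E}_{x\sim\mathcal{D}_i(\Theta^t)}[\ell(x,\theta_i^{t})]\bigr).
\end{equation*}
Since $\mathbb{E}_{x\sim\mathcal{D}_i(\Theta^t)}[\ell(x,\cdot)]$ is $\beta$-smooth (partition now fixed), the descent lemma, together with the identity $a_i(\Theta^t)\mathbb{E}_{x\sim\mathcal{D}_i(\Theta^t)}[\nabla\ell(x,\theta_i^t)] = \nabla_{\theta_i} f_{\text{PR}}(\Theta^t)$ from Lemma~\ref{lemma: gradient of learning objective}, gives the analogous descent inequality for $f_{\text{PR}}$ with quadratic coefficient $\frac{\beta a_i(\Theta^t)}{2}$.

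The remainder is bookkeeping: weight the two bounds by $(1-\zeta)$ and $\zeta$, substitute the MSGD update $\theta_i^{t+1}-\theta_i^t = -\eta^{t+1}\nabla\ell(x^{t+1},\theta_i^t)$, and split each inner product via
\begin{equation*}
\langle g,\, \nabla\ell(x^{t+1},\theta_i^t)\rangle = \bigl\langle g,\, \tfrac{g}{c}\bigr\rangle + \bigl\langle g,\, \nabla\ell(x^{t+1},\theta_i^t) - \tfrac{g}{c}\bigr\rangle,
\end{equation*}
with $(g,c) = (\nabla_{\theta_i} f_{\text{PR}}(\Theta^t), a_i(\Theta^t))$ and $(\nabla_{\theta_i} f_{\text{NP}}(\Theta^t), 1)$ respectively. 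The first summands reassemble into $-A^{t+1}$, the second into $-C^{t+1}$. Finally, $(1-\zeta)a_i(\Theta^t) + \zeta/k \leq 1$ bounds the combined quadratic term by $\frac{\beta}{2}(\eta^{t+1})^2\|\nabla\ell(x^{t+1},\theta_i^t)\|^2$, which matches $B^{t+1}$ (interpreting the sum in $B^{t+1}$ as nonzero only in the selected coordinate). The main obstacle is the perfect-rationality step: one has to recognize that even though $f_{\text{PR}}$ is nonsmooth, the variational characterization $f_{\text{PR}}(\Theta')\leq \sum_j a_j(\Theta)\mathbb{E}_{\mathcal{D}_j(\Theta)}[\ell(x,\theta_j')]$ freezes the partition and reduces the analysis to a smooth expectation, at which point everything else follows from standard manipulations.
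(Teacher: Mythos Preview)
Your proof is correct and takes essentially the same approach as the paper. The paper packages your variational inequality $f_{\text{PR}}(\Theta')\leq \sum_j a_j(\Theta)\mathbb{E}_{\mathcal{D}_j(\Theta)}[\ell(x,\theta_j')]$ as a separate lemma (via a surrogate function $F(\Theta;\Theta')$) establishing that $f_{\text{PR}}$ is $\beta$-smooth, and then applies the descent lemma directly; you inline this argument and obtain the slightly tighter quadratic coefficients $\tfrac{\beta a_i(\Theta^t)}{2}$ and $\tfrac{\beta}{2k}$ before relaxing to $\tfrac{\beta}{2}$, but the substance is identical.
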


This lemma quantifies the decrease (or lack thereof) in the objective function value from one time step to the next.
We therefore turn our focus to
the sequences $(A^t)_{t=0}^{\infty}$, $(B^t)_{t=0}^{\infty}$, $(C^t)_{t=0}^{\infty}$.
The sequence $(C^t)_{t=0}^{\infty}$ is $\mathcal{F}^t$ martingale difference sequence because, as we show in Eq.~\eqref{eq18} in the proof of Lemma~\ref{lemma: gradient of learning objective}, the single loss in the MSGD update is parallel to the gradient of the objective $f(\Theta)$ in expectation.
Then also notice that $(B^t)_{t=1}^{\infty}$ converges when  $\sum_{t=0}^{\infty}{(\eta^{t+1})^2}<\infty$ (Assumption~\ref{ass: stepsize}).  

Let $(M^{t})_{t=1}^{\infty} $ be defined by:
$$
M^{t+1} = f(\Theta^{t+1})-\sum_{\tau=0}^t B^{\tau+1}.$$
Then we have that
$M^{t+1}\leq M^t  -A^{t+1} -C^{t+1}$.
We show that $M^t$ is an $\mathcal{F}^t$ super-martingale, which converges almost surely by the martingale convergence theorem. From this, the convergence of $f(\Theta)$ follows by the convergence of $(B^t)_{t = 0}^{\infty}$. The complete proof can be found in Appendix \ref{app: main}
\subsection{Convergence of iterates $\Theta$}
Next, we show that the model parameters $\Theta$ also converge, and in particular that they converge to the stationary points of the overall loss.
The convergence of $\Theta$ follows from the convergence of $f(\Theta)$, under some additional conditions on the step size. To simplify the notation, we denote model update at each round as $\theta_i^{t+1} = \theta_i^t -\eta_i^{t+1}\cdot \nabla\ell(x^{t+1}, \theta_i^t)$, where $\eta_i^{t+1}\neq 0$ only when model $i$ was selected by the user.
We define the following event $\mathcal E_i(\tau, T, r, s)$:
$$\mathcal E_i(\tau, T, r, s) =\left\{\sum_{t=\tau}^{T(\tau)} \eta^{t+1}<r ~~\text{and}~~\sum_{t=\tau}^{ T(\tau)} \eta_i^{t+1}>s\right\}\:.$$
This event occurs when, for a particular time interval, the accumulated step size $\eta^t$ is bounded above by a constant, while the accumulation of steps of model $i$ is bounded below.
With this definition in hand, we have the following lemma.

\begin{lemma}\label{lemma: convergence of iterates}
Under Assumption \ref{ass: not the same model}-\ref{ass: assumptions for proving a being Lipschitz}, 
$\Theta^t$ converges to the stationary point of $f(\Theta)$ almost surely if the following condition on $\eta^t$ holds:
For any $\epsilon $, there is an $r_0>0$ 
such that if $r\in (0, r_0)$, then there exists a mapping $T: \mathbb{N}\rightarrow \mathbb{N}$, a time step $t_0\in \mathbb{N}$, and constants $s,c>0$ 
such that, for any $\tau >t_0$:
\begin{equation*}
\text{Pr}\left(\mathcal E_i(\tau,T,r,s)\bigg|\mathcal{F}^{\tau}, \|\nabla_{\theta_i} f(\Theta^{\tau})\|> \epsilon\right)>c\:.
\end{equation*}
\end{lemma}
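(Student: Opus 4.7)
My plan is to argue by contradiction, leveraging the almost-sure convergence of $f(\Theta^t)$ established in the previous subsection to rule out the possibility that $\|\nabla f(\Theta^t)\|$ stays bounded away from $0$ infinitely often. Concretely, suppose $\Theta^t$ does not converge to $\{\nabla f(\Theta)=0\}$. By the compactness in Assumption \ref{ass: compact} together with continuity of $\nabla f$, this means there exist $\epsilon>0$, an index $i\in [k]$, and a random subsequence of times $\tau_n\uparrow \infty$ such that $\|\nabla_{\theta_i} f(\Theta^{\tau_n})\|>\epsilon$ on an event of positive probability (pigeonhole in $i$). The goal is to show this is incompatible with $f(\Theta^t)\to f^{*}$ a.s. and $f\geq 0$.

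Next, I would quantify the guaranteed decrease of $f$ on the event $\mathcal E_i(\tau_n,T(\tau_n),r,s)$. On this event, $\sum_{t=\tau_n}^{T(\tau_n)} \eta^{t+1}<r$, so by $L$-Lipschitzness of $\ell$ the whole trajectory satisfies $\|\Theta^t-\Theta^{\tau_n}\|=O(Lr)$ on $[\tau_n,T(\tau_n)]$. Using smoothness of $\ell$ for continuity of $\nabla f$ and Assumption \ref{ass: assumptions for proving a being Lipschitz} for Lipschitzness of $a_i(\cdot)$ (the latter is what makes the partition mass stable under small perturbations), I can choose $r_0$ small enough that throughout the window $\|\nabla_{\theta_i} f(\Theta^t)\|\geq \epsilon/2$ and $a_i(\Theta^t)$ stays within a fixed interval bounded away from $0$ and $\infty$. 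Summing the $A^{t+1}$ terms from Lemma \ref{lemma: analytic upper bound of f} only over the indices $t$ at which model $i$ is actually selected, and using $\sum_{t=\tau_n}^{T(\tau_n)} \eta_i^{t+1}>s$, gives a telescoped lower bound $\sum A^{t+1}\geq \alpha$ for some $\alpha=\alpha(\epsilon,s,\zeta,L,\beta)>0$.

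I then need to absorb the error terms $B^{t+1}$ and $C^{t+1}$. The $B^{t+1}$ contribution over the window is bounded by $\tfrac{\beta L^2}{2}\sum_{t\geq \tau_n}(\eta^{t+1})^2$, which by Assumption \ref{ass: stepsize} is the tail of a convergent series and so can be made smaller than $\alpha/4$ by taking $t_0$ large. For $C^{t+1}$, I will use that it is an $\mathcal F^t$-martingale difference (a fact already noted after Lemma \ref{lemma: analytic upper bound of f} and traceable to the identity in Eq.~\eqref{eq18}), with conditional second moment at step $t+1$ of order $L^2(\eta^{t+1})^2$. Over the window, Doob's maximal inequality then gives $\Pr(|\sum C^{t+1}|\geq \alpha/4 \mid \mathcal F^{\tau_n})=O(r^2/\alpha^2)$, which by shrinking $r$ can be made smaller than $c/2$. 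Combining, on an event of conditional probability at least $c/2$, we obtain $f(\Theta^{T(\tau_n)})\leq f(\Theta^{\tau_n})-\alpha/2$.

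Finally, since the hypothesis guarantees such a window exists for every $\tau_n>t_0$, a conditional second Borel–Cantelli argument along the filtration $(\mathcal F^{\tau_n})$ shows that infinitely many disjoint windows produce a decrease of at least $\alpha/2$ on a set of positive probability; this forces $f(\Theta^t)\to -\infty$, contradicting both $f\geq 0$ and almost-sure convergence of $f(\Theta^t)$ to a finite limit. Hence $\|\nabla_{\theta_i} f(\Theta^t)\|\to 0$ for every $i$, and Assumption \ref{ass: compact} upgrades this to $\Theta^t\in\{\nabla f=0\}$ eventually. The main obstacle I foresee is the martingale control step: making sure that the window length implicit in $T(\tau)$ is short enough (in terms of $r$) that $\sum C^{t+1}$ cannot cancel the $\alpha$-sized drift, while simultaneously long enough that $\sum \eta_i^{t+1}>s$ still has conditional probability at least $c$. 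Balancing these two requirements is what forces the $\eta^t=\eta_c/t$ scaling in Theorem \ref{thm: main}, since that rate is exactly what permits $r$ small and $s$ bounded below within the same window.
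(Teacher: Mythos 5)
Your overall strategy matches the paper's: combine the descent inequality of Lemma \ref{lemma: analytic upper bound of f} with a lower bound on the accumulated drift $\sum A^{t+1}\gtrsim \epsilon^2 s$ over the window, control the noise terms, and then use the second Borel--Cantelli lemma (Lemma \ref{lemma: Second Borel-Cantelli Lemma}) together with the almost-sure convergence of $f(\Theta^t)$ to rule out infinitely many $\delta$-sized decreases. The one step where you genuinely diverge is the control of the martingale term $\sum C^{t+1}$, and as written that step has a gap. You bound the conditional variance of the window sum by $O(r^2)$ and invoke Doob's maximal inequality to get $\Pr\left(|\sum C^{t+1}|\ge \alpha/4\mid\mathcal F^{\tau}\right)=O(r^2/\alpha^2)$, to be made small ``by shrinking $r$''. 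But $\alpha$ is of order $\epsilon^2 s$, and $s$ cannot be chosen independently of $r$: in the only regime where the hypothesis is verifiable (and in the verification used for Theorem \ref{thm: main}, where $s<\epsilon\eta_c r/(4L)$), $s$ scales linearly with $r$, so $r^2/\alpha^2$ stays bounded below by a constant as $r\to0$. This is exactly the circularity you flag at the end, and it is not merely a balancing nuisance --- it breaks the step. Two repairs are available. The cheap one is the sharper estimate $\sum_{\tau\le t<T(\tau)}(\eta^{t+1})^2\le \eta^{\tau+1}\sum_{\tau\le t<T(\tau)}\eta^{t+1}<\eta^{\tau+1}r$, which tends to $0$ as $\tau\to\infty$ for fixed $r$, so the Doob bound is killed by taking $t_0$ large rather than $r$ small (the same knob you already use, correctly, for $B^{t+1}$). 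The paper's repair is structural: it proves once and for all (Lemma \ref{lemma: C being martingale}, via the $L^2$-bounded martingale convergence theorem, Theorem \ref{thm: martingale convergence}) that the series $\sum_t(B^{t+1}-C^{t+1})$ converges almost surely, and then uses the Cauchy tail criterion --- beyond a random time $M_\delta$ every tail partial sum is below $\delta$ --- to absorb both noise terms at once with an almost-sure rather than probabilistic statement.

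Two smaller omissions relative to the paper. First, the paper does not argue on $\{\|\nabla_{\theta_i}f\|>\epsilon\}$ directly but on the compact annulus $\{\|\nabla_{\theta_i}f\|\in[\epsilon,\epsilon_0]\}$, which is what lets it separate this set from $\{\|\nabla_{\theta_i}f\|\le\epsilon/2\}$ by a positive distance $R_0$ and conclude that the gradient stays above $\epsilon/2$ throughout the window; it must then separately dispose of the region $\{\|\nabla_{\theta_i}f\|\ge\epsilon_0\}$ (its Cases 1 and 2). Your version instead leans on continuity of $\nabla f$ to keep the gradient large over a displacement of size $Lr$; this can be made rigorous because the appendix shows $f_{\text{PR}}$ and $f_{\text{NP}}$ satisfy the $\beta$-smooth descent inequality and $a_i(\cdot)$ is locally Lipschitz under Assumption \ref{ass: assumptions for proving a being Lipschitz}, but you should state which quantitative modulus of continuity of $\nabla f$ you are invoking, since mere continuity does not yield uniform separation on a non-compact set. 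Second, your final ``upgrade'' from $\nabla f(\Theta^t)\to 0$ to convergence to the set $\{\nabla f=0\}$ uses Assumption \ref{ass: compact} exactly as the paper does, so no additional gap there.
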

The first part of the event $\mathcal E_i(\tau,T,r,s)$, which bounds the accumulated step size $ \eta^{t}$ by $r $, indicates that the gradient $\|\nabla_{\theta_i} f(\Theta^{t})\|> \frac{\epsilon}{2}$, i.e., the gradient remains large in $t\in [\tau, T(\tau))$. (Notice that the total displacement between $\Theta^{\tau}$ and $\Theta^{T(\tau)}$ can be controlled by bounding accumulated learning rate).
The second part $\sum_{\tau\leq t< T(\tau)} \eta_i^{t+1}>s$ ensures that we accumulate enough learning for the center $i\in [k]$ with large
gradients, so that $f(\Theta)$ decreases by at least a constant amount on this interval. 

To finish proving Theorem~\ref{thm: main}, it remains  to set stepsize in Algorithm \ref{alg: simplified single model update} to be $\eta^{t} = \frac{\eta_c}{t}$, and show that this satisfies the condition proposed in Lemma \ref{lemma: convergence of iterates}. 
The complete proof of Theorem \ref{thm: main} is given in Appendix \ref{app: main}.
\section{Experiments}
We experimentally illustrate the performance of MSGD using real world data. 
These experiments confirm our theoretical results and illustrate interesting phenomena that occur in the multi-learner setting.
Code for reproducing these results can be found at \url{https://github.com/sdean-group/MSGD}.

\subsection{Experimental Settings}
We illustrate the performance of MSGD in two distinct settings with different datasets and loss functions.

\begin{figure}[t]
\centerline{\includegraphics[width=0.5\textwidth]{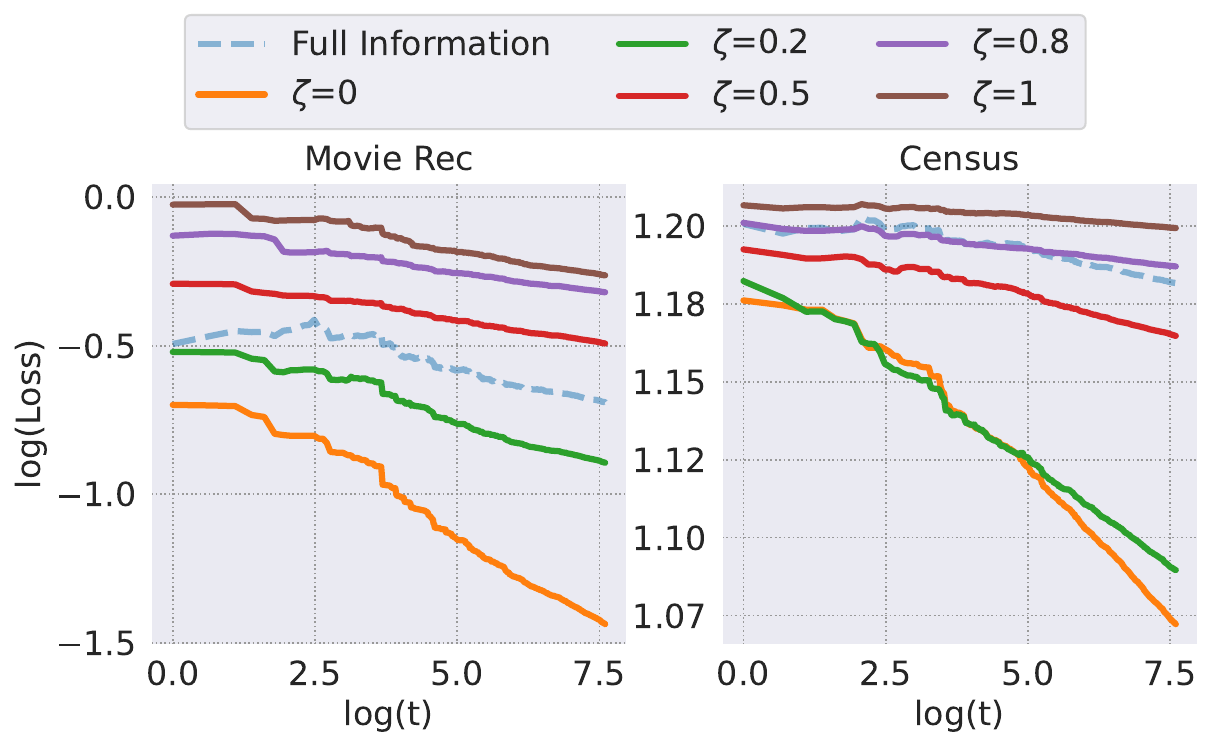}}
\caption{Convergence of objective function $f(\Theta)$ under MSGD or Full Information with $k=3$ services in the movie recommendation (left) and census data (right) tasks. }
\label{fig: convergence of f}
\end{figure}

\textbf{Movie Recommendation with Squared Loss}\quad Our first experimental setting is based on a widely used movie recommendation dataset Movielens10M \cite{harper2015movielens}, which contains 10 million ratings across 10k movies by 70k viewers. We preprocess the dataset with a similar procedure from~\citet{bose2023initializing}: We filter the total ratings and only keep the top 200 most rated movies and use the inbuilt matrix factorization function from Python toolkit \texttt{Surprise} \cite{hug2020surprise} to get $d =5$ dimensional user embeddings. This preprocessing procedure results in a population of 69474 users each with a five dimensional feature vector which we denote by $z$.
Then we consider a regression problem, where each model aims to predict the the user ratings $r$ of the $d_r=200$ movies.
Denote by $x=(z,r)$ each user's data and by $\Omega_x$ the set of movies which have been rated.
Let $m=|\Omega_x|$ denote the number of movies user $x$ has rated.
For each user, we define the following squared loss, where $\theta\in \mathbb{R}^{d\times d_r}$. 
\begin{equation*}
\ell(x, \theta) = \frac{1}{m}\sum_{i\in\Omega_x}(\theta_i^\top z - r_i)^2\:.
\end{equation*}
\textbf{Census Data with Logistic Loss}\quad
Our second setting is based on census data made available by \texttt{folktables}~\cite{ding2021retiring}.
We consider the ACSEmployment task, where to goal is to predict whether individuals are employed. The population is defined by the 2018 census data from Alabama, filtered to individuals between ages 16 and 90, resulting 47777 user in total. After splitting 0.2 of them for testing, we have 38221 data to sample from. 
The data contains $d=16$ features describing demographic characteristics like age, educated, marital status, etc. Denote each user data $x=(z,y)$, where $x\in \mathbb{R}^{d}$ and $y\in \{0,1\}$. We scale features $x$ so that they have zero mean and unit variance. 
We use logistic regression loss for this task, 
\begin{equation*}
\ell(x, \theta) =-y\log(\theta^\top z)-(1-y)\log(1-\theta^\top z)\:.
\end{equation*}
The model predicts $1$ if $\theta^{\top}z>0$ and 0 otherwise. 
\subsection{Results}
\begin{figure}[t]
\centerline{\includegraphics[width=0.5\textwidth]{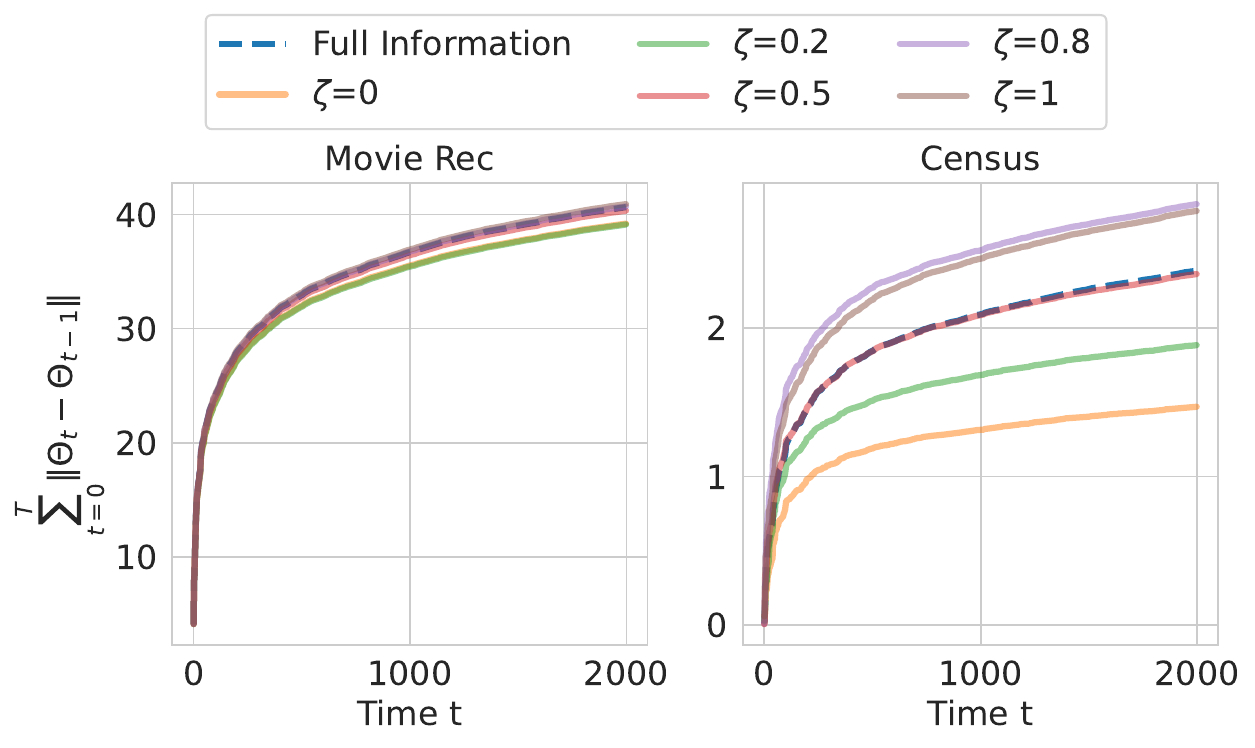}}
\caption{Convergence of iterates $\Theta$ under MSGD or Full Information with $k=3$ services in the movie recommendation (left) and census data (right) tasks. For MSGD, we show results for $\zeta = 0, 0.2, 0.5, 0.8, 1$ respectively.}
\label{fig: convergence of theta}
\end{figure}

We investigate the behavior of MSGD (Algorithm~\ref{alg: simplified single model update}) in the movie recommendation and unemployment prediction settings.
At each time step, we sample a user $x$ at random from the data described above. 
We assign this user to one of $k$ services according to bounded rational with parameters $\zeta$.
Then the selected service updates their parameter with the gradient of the loss on the user's data with step size $\eta^t=\frac{1}{t}$. We evaluate them on a held-out test set. 

We compare MSGD with a \textit{Full Information} algorithm in which user data is shared among services, and at each time step, all the models performs a gradient descent step with this user data, regardless of the user selection.

\textbf{Convergence of Loss Function}\quad
In Figure \ref{fig: convergence of f}, we show the convergence of the learning objective$f(\Theta)$ for settings with users with different levels of rationality. We use a log-log scale, and thus the linear trend in Figure \ref{fig: convergence of f} signifies convergence decreasing polynomially in $t$. Compared with the \textit{Full Information} setting, we find that MSGD performs better when users select services with high rationality, i.e., $\zeta$ is small. 
This is because the full information updates do not allow models to specialize to their own user sup-populations.
However, when $\zeta$ becomes large, MSGD attains higher overall loss than the full information setting. 
This is due to the fact that data sharing allows models to learn faster, since they receive comparably more data.
We conclude that, even with limited data, as long as users select services with sufficient rationality, MSGD can still achieve higher social welfare than when data is shared between all the services (i.e, full information).
\begin{figure}[t]
\centerline{\includegraphics[width=0.5\textwidth]{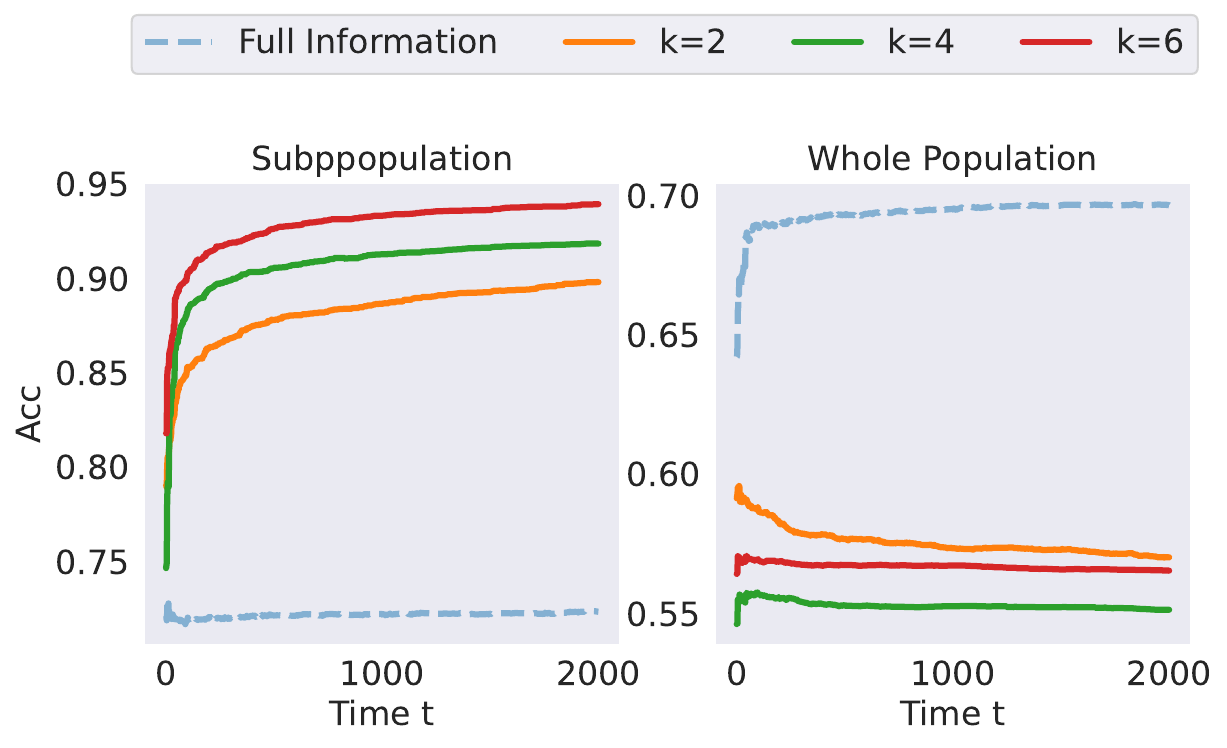}}
\caption{
Accuracy of MSGD or Full Information on the model-specific subpopulation $\mathcal D_i(\Theta)$ (left) and whole population $\mathcal P$ (right) for the ACSEmployment task on census data with perfectly rational users ($\zeta = 0$). For MSGD, we illustrate results of different total number of services $k =2, 4,6$.  }
\label{fig: acc subpopulation v.s. whole population: zeta0}
\end{figure}

\textbf{Convergence of Iterates}\quad In Figure \ref{fig: convergence of theta}, we plot the accumulated parameter update distance $\sum_{t= 0}^T \|\Theta_t-\Theta_{t-1}\|$ to show the convergence of iterates. Since $\Theta$ updates an average of $k=3$ times more often in the full information setting than in MSGD, the accumulated error is $k$ times larger.For fair comparison, we divide the accumulated error of the full information line by the number of services.  We find that, even though the full information setting receives data from a static distribution, it still converges slower compared to MSGD when users select with high rationality.

\textbf{Accuracy over Subpopulation vs. Whole Population} \quad 
So far, we have seen that MSGD is advantageous particularly when users are highly rational,
despite the fact that models have access to less data and act in an uncoordinated manner.
We explore this fact by comparing the induced sub-population performance of model $i$, i.e. the loss on $\mathcal D_i(\Theta)$, to the whole population performance, i.e. the loss on $\mathcal P$ with the ACSEmployment task on census data.
In Figure \ref{fig: acc subpopulation v.s. whole population: zeta0}, we plot the averaged accuracy for $k=2,4,6$ using census data.
Because all services update with the same data in Full Information, the only difference is their initialization, and since they converge to the minimizer of the loss on $\mathcal{P}$, changing $k$ has negligible effect. 
Compared to Full Information, MSGD achieves higher accuracy on induced subpopulations, and this accuracy increases as $k$ increases, as illustrated in
the left graph of Figure \ref{fig: acc subpopulation v.s. whole population: zeta0}. 
However, when evaluated on the whole population $\mathcal{P}$, MSGD performs worse than Full Information, and we even observe accuracy decreasing with more training steps.

In Figure \ref{fig: acc v.s. total number of services}, we illustrate the accuracy over the whole population and the induced subpopulations when the number of services increases with $\zeta=0$ and $\zeta=0.1$ respectively. Notice that when $k$ increases, services updates slower, to ensure services have already converged when calculating the accuracy, for each $k$, we use compute the average accuracy after $T=2000\times k$ total timesteps and plot the average over 3 trials. We observe that when the number of total services $k$ increases, the accuracy over the induced subpopulation increases at the cost of decreasing the accuracy over the whole population. {In practice, the number of service providers depends on an uncoordinated market of services, and choosing $k$ would be like a social planner or regulator intervening on the market to balance the trade-off between accuracy and specialization.}

In the appendix, we investigate the performance of MSGD in additional settings: MSGD under Boltzmann-rational users and minibatch MSGD.
These additional experiments show that despite a lack of theoretical understanding, MSGD also converges for Botzmann-rational users.
They also show that MSGD is able to perform better when minibatching is possible and it is not necessary to operate in a purely streaming setting.
More details can be found in Appendix \ref{app: additional exp}
\begin{figure}[t]
\centerline{\includegraphics[width=0.5\textwidth]{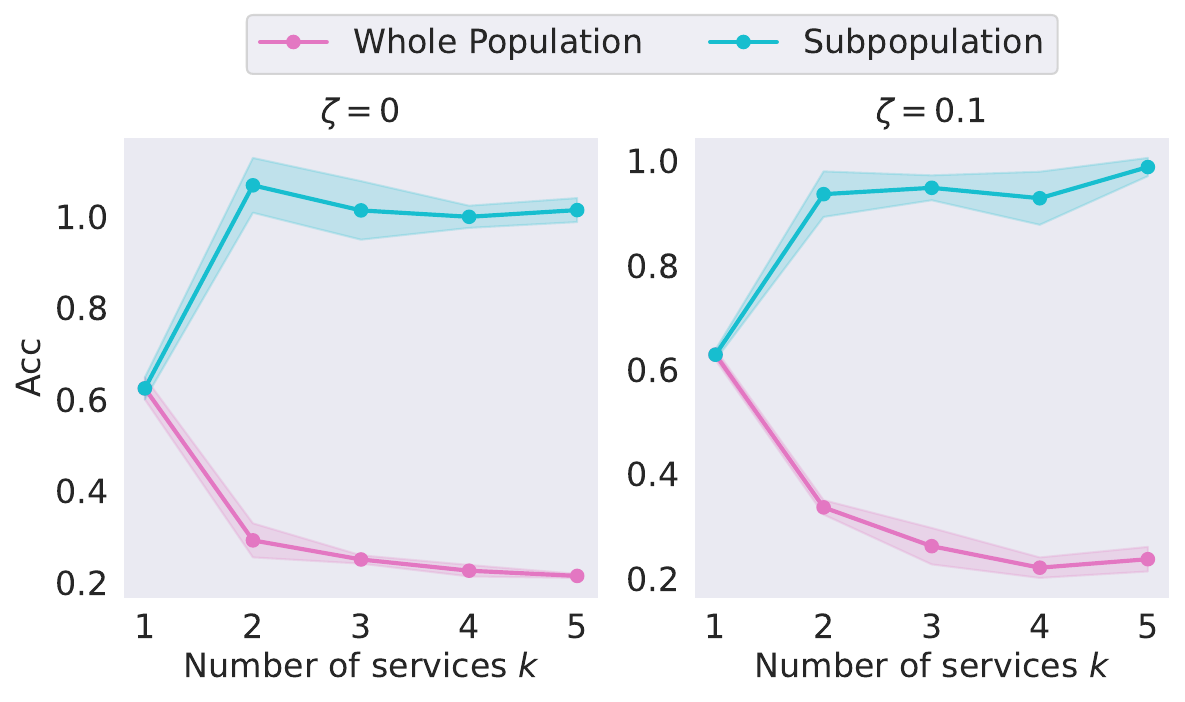}}
\caption{Accuracy of MSGD or Full Information in the census data (right) with fairly rational users and different total number of services $k$. The plot displays mean and standard deviation over three trials. }
\label{fig: acc v.s. total number of services}
\end{figure}

\section{Discussion }

In this paper, 
we consider a setting in which streaming users chose between multiple services, and commit their data to the model of their choosing.
We design a simple, efficient, and intuitive gradient descent algorithm that does not require any coordination between services. We prove that it guarantees convergence to the local optima of the overall objective function, and empirically explore its performance on two different real data settings.

There remain several interesting directions for future work.
One thread comes from considering alternative user behavior models.
{Though we experimentally show that MSGD also converges under the  Boltzmann-rational model, we leave as future work the theoretical analysis.}
Another direction is to consider alternative learning objectives, such as overall population accuracy or market share of each service.
This perspective would motivate greater coordination or explicit competition between the learning-based services,
rather than the simple decentralized updates that we study.

\clearpage

\section*{Acknowledgements}
This work was partly funded by NSF CCF 2312774 and NSF OAC-2311521, and a LinkedIn Research Award.

\section*{Impact Statement} 
This paper presents work whose goal is to advance the field of Machine Learning. There are many potential societal consequences of our work, most of which we do not specifically highlight here. 
We do remark that the MSGD results in specialized models: services optimize for accuracy on the sub-population of users who choose them.
There is a tension between such specialization, which enables more accurate models, and global accuracy, which ensures universal performance over an entire population.
It is important for real world deployments to consider this trade-off carefully in light of fairness, bias, accessibility, etc.
\bibliography{example_paper}
\bibliographystyle{icml2024}

\newpage
\appendix
\onecolumn

\section{Notations, Assumptions and Definitions}
We summarize the notation used in the paper and the proof in Table \ref{notation}. When it is clear from the context, we shall omit the $\Theta$ in the parenthesis and use a simpler notation $X=(X_1, X_2, \cdots, X_k)$, $a=(a_1, \cdots, a_k)$ and $\mathcal{D} = (\mathcal{D}_1, \cdots, \mathcal{D}_k)$. 
\subsection{Notations}
\begin{table}[h]
\vskip 0.15in
\begin{center}
\begin{small}
\begin{tabular}{c|c|c}
\toprule
Notation& Explanation & Value \\
\midrule
$\ell(\cdot, \cdot)$& Personalized loss function & $\ell(\cdot, \cdot)\geq 0$\\
\midrule
$\Theta^t$ & Collection of model parameters of all the services at time step $t$& $\Theta^t=(\theta_1^t, \cdots, \theta_k^t)\in \mathbb{R}^{k\times d}$\\
\midrule
$\theta_i^t$& Model parameter of service provider $i$ at time step $t$& $\theta_i^t\in \mathbb{R}^d$\\
\midrule
$\mathcal{P}$& Underlying data distribution &$\mathcal P\in\Delta(\mathcal X)$\\
\midrule
$x^{t+1}$ & Data arrives time step $t$& $x^{t+1}\sim \mathcal{P}$\\
\midrule 
$\zeta$& The probability that users pick service providers randomly & $\zeta \in [0, 1]$\\
\midrule
$X(\Theta) = (X_1(\Theta), \cdots, X_k(\Theta))$& Data subpopulation partitioning induced by $\Theta$& $\cup_{i\in [k]} X_{i} = \mathcal{X}$\\
\midrule
$a(\Theta) = (a_1(\Theta), \cdots, a_k(\Theta))$& Data subpopulation portion induced by $\Theta$& $\sum_{i=1}^k a_i$ = 1\\
\midrule
$\mathcal{D}(\Theta) = (\mathcal{D}_1(\Theta), \cdots, \mathcal{D}_k(\Theta))$&Subpopulation distribution induced by $\Theta$&$\mathcal{D}_i(\Theta)=\mathcal{P}|_{X_i(\Theta)}$\\
\bottomrule
\end{tabular}
\caption{Notation summary.}\label{notation}
\end{small}
\end{center}
\vskip -0.1in
\end{table}
In the proof, our notations are consistent with Algorithm \ref{alg: general verion}, which is the more verbose version of MSGD. 
\begin{algorithm}[h]
	\caption{Detailed version of MSGD\label{alg: general verion}}
	\begin{algorithmic}
		\STATE {\bfseries Input:} Rationality Parameter $\zeta$; loss function $\ell(\cdot,\cdot)\geq 0$; Initial model parameters $\Theta = (\theta_1, \cdots, \theta_k)$; Learning rate parameter $\{\eta^t\}_{t=1}^{\infty}$ for model that has actual update at each round.
       \FOR{$t=0, 1,2, \cdots, T$ }
\STATE   Receive data point $x^{t+1}\sim \mathcal{P}$,

\STATE\textcolor{lightgray}{User Side: \textbf{User Select a Service Provider:}}\\
\STATE
User rationally
picks the best model $i\in [k]$ where $i=\arg\min_{i\in [k]} \ell(x, \theta_i)$ w.p. $1-\zeta$; Otherwise user randomly picks some $i\in [k]$  w.p. $\zeta$. 
\STATE\textcolor{lightgray}{Learner Side: \textbf{Selected Learner Updates its Model:}}\\
\STATE Let $\eta_i^{t+1}=\eta^{t+1}$ if model $i$ is selected; else $\eta_i^{t+1}=0$
\STATE   Models update through
$\theta_{i}^{t+1}= \theta_{i}^t-\eta_i^{t+1} \cdot\nabla\ell(x^{t+1},\theta_i^t)$; 
        \ENDFOR\\
        \STATE \textbf{Return }
        $\Theta^{T+1} = (\theta_1^{T+1}, \cdots, \theta_k^{T+1})$
	\end{algorithmic}
\end{algorithm}

We also use the notation of filtration, which is given below:
\begin{definition}\label{def: filtration}
Let $(\mathcal{F}^t)_{t=0}^{\infty}$ be a filtration associated with Algorithm \ref{alg: general verion}. In particular, let $\mathcal{F}^0=\sigma(\Theta^0)$ be the $\sigma$-algebra generated by $\Theta^0$, $x^t$ and $\eta^t$ be the tuple where $x^t=(x_1^t, \cdots, x_k^t)$ and $\eta^t = (\eta_1^t, \cdots, \eta_k^t)$.
Let $\sigma$-algebra $\mathcal{F}^t =\sigma (x^t, \eta^t, \mathcal{F}^{t-1})$ contains all random events up to iteration $t$. 
Note that this does not explicitly contain $i^{t}$, the model chosen at time $t$, as this information is included in the definition of $\eta^t$.

In addition, assume
\begin{enumerate}
\item[(1)] If $a_i(\Theta^t)=0$, then $\eta_i^{t+1}=0$ almost surely. 
\item[(2)] $\eta^{t+1}$ and $x^{t+1}$ are conditionally independent given $\mathcal{F}^t$
\item[(3)] $0\leq \eta_i^{t+1}\leq 1$.
\end{enumerate}
\end{definition}

\subsection{Assumptions and Definitions}
In this paper, we use make some rather general assumptions on loss function $\ell(x,\theta)$ (in Assumption \ref{ass: loss function}), such as $L$-Lipschitz (Definition \ref{def: 1}) and $\beta$-smooth (Definition \ref{def: 2}), whose definitions are given below. 

\begin{definition}($L$-Lipschitz) \label{def: 1}
Given the loss function $\ell(\cdot,\cdot): \mathcal{X}\times \mathcal{C}\rightarrow \mathbb{R}$, it is $L$-Lipschitz if for all $x\in \mathcal{X}$ and $\theta, \theta^{'}\in \mathcal{C}$, we have
\begin{equation*}
|\ell(x, \theta)-\ell(x, \theta^{'})|\leq L \cdot \|\theta-\theta^{'}\|.
\end{equation*}
\end{definition}

\begin{definition}($\beta$-smooth)\label{def: 2}
Given the loss function $\ell(\cdot, \cdot):\mathcal{X}\times\mathcal{C}\rightarrow \mathbb{R}$, we say it is $\beta$-smooth if the gradient is $\beta$-Lipschitz, namely, $\forall x\in\mathcal{X}$ and $\theta, \theta^{'}\in \mathcal{C}$,
\begin{equation*}
\|\nabla\ell(x,\theta)-\nabla\ell (x, \theta^{'})\|\leq \beta \cdot \|\theta-\theta^{'}\|
\end{equation*}
\end{definition}
Note that, the above definition is equivalent to 
\begin{equation}
    \ell(x,\theta^{'}) \leq \ell(x, \theta)+\nabla_{\theta}\ell(x,\theta)^T (\theta^{'}-\theta) +\frac{\beta}{2}\|\theta-\theta^{'}\|^2.
\end{equation}

In the proof, we also use martingales.

\begin{definition}(Martingale Sequences)
Define a filtration as an increasing sequence of $\sigma$-fields 
$\emptyset =\mathcal{F}_0\subseteq \mathcal{F}_1\subseteq \cdots \subseteq \mathcal{F}_n$ on some probability space. Let $(X_i)$ be a sequence of random variables such that $X_i$ is measurable w.r.t. $\mathcal{F}_i$, then $(X_i)$ is a martingale w.r.t. $(\mathcal{F})_i$ if $\mathbb{E}[X_i|\mathcal{F}_{i-1}] = X_{i-1}$ for all $i$. 
\end{definition}
\begin{definition}(Martingale Difference Sequences)
We call $(X_i)$ is a martingale difference sequence  w.r.t. $(\mathcal{F})_i$ if $\mathbb{E}[X_i|\mathcal{F}_{i-1}] = 0, \forall i$. 
\end{definition}

\subsection{Background Results}
We also need the following theorem and lemmas for our proof. 
\begin{theorem}\label{thm: martingale convergence}(Martingale convergence theorem \cite{durrett2019probability}) Let $(M^t)_{t=0}^{\infty}$ be a (sub)martingale with 
\begin{equation*}
\sup_{t\in \mathbb{N}}\mathbb{E}[M^t_{+}]<\infty
\end{equation*}
where $M_{+}^t=\max\{0, M^t\}$. Then as $t\rightarrow \infty$, $M^t$ converges a.s. to a limit $M$ with $\mathbb{E}[|M|]<\infty$. 

\end{theorem}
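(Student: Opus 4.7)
The plan is to prove this classical result via Doob's upcrossing inequality. The idea is to control, for every pair of rationals $a<b$, the number of times the trajectory $(M^t)$ crosses upward from below $a$ to above $b$; if this number is finite for every such pair, then $\liminf_t M^t = \limsup_t M^t$ almost surely, which is exactly almost sure convergence in $[-\infty,\infty]$.

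Concretely, I would first fix $a<b$ and define the sequence of stopping times $\tau_1<\tau_2<\cdots$ recording the successive crossings of $M^t$ from $(-\infty,a]$ up to $[b,\infty)$, and let $U_n([a,b])$ be the number of completed upcrossings by time $n$. The key lemma to prove is Doob's upcrossing inequality for submartingales,
\begin{equation*}
(b-a)\,\mathbb{E}[U_n([a,b])] \;\leq\; \mathbb{E}\bigl[(M^n-a)_+\bigr] - \mathbb{E}\bigl[(M^0-a)_+\bigr]\:,
\end{equation*}
which I would establish by applying the optional stopping identity to the bounded nonnegative submartingale $(M^t-a)_+$ along the alternating stopping times, telescoping, and discarding the nonnegative ``incomplete crossing'' terms. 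Since $(M^n-a)_+ \leq (M^n)_+ + |a|$, the hypothesis $\sup_n \mathbb{E}[(M^n)_+]<\infty$ yields $\sup_n \mathbb{E}[U_n([a,b])]<\infty$, so by monotone convergence $U_\infty([a,b]):=\lim_n U_n([a,b])<\infty$ almost surely.

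Next I would take a countable union over pairs of rationals $a<b$: on the complementary event of measure zero, $U_\infty([a,b])<\infty$ for every such pair, and on this full-measure event no interval $[a,b]$ with $\liminf_t M^t<a<b<\limsup_t M^t$ can exist, forcing $\liminf = \limsup$. Hence $M^t$ converges almost surely to a $[-\infty,\infty]$-valued limit $M$. To upgrade this to $\mathbb{E}[|M|]<\infty$ (and hence $|M|<\infty$ a.s.), I would use the submartingale bound $\mathbb{E}[M^t]\geq \mathbb{E}[M^0]$ together with $|M^t|=2(M^t)_+ - M^t$ to conclude $\sup_t \mathbb{E}[|M^t|]\leq 2\sup_t \mathbb{E}[(M^t)_+]-\mathbb{E}[M^0]<\infty$, and then apply Fatou's lemma: $\mathbb{E}[|M|]=\mathbb{E}[\liminf_t |M^t|]\leq \liminf_t \mathbb{E}[|M^t|]<\infty$. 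The supermartingale case is either handled symmetrically or reduced to the submartingale case by considering $-M^t$.

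The main obstacle is establishing the upcrossing inequality rigorously: this requires carefully defining the alternating stopping times as $\mathcal{F}^t$-stopping times, verifying that the corresponding stopped increments of $(M^t-a)_+$ are nonnegative in expectation by the submartingale property applied at bounded stopping times, and correctly handling the last, possibly incomplete, crossing (which contributes a nonnegative term that is dropped). All other steps—Fatou, the rational union argument, and the identity $|M^t|=2(M^t)_+-M^t$—are routine once the upcrossing bound is in hand.
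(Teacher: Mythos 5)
Your proposal is correct, and it is precisely the classical upcrossing-inequality proof (Doob's upcrossing lemma, a countable union over rational pairs $a<b$, and Fatou's lemma via $|M^t|=2(M^t)_+-M^t$ together with $\mathbb{E}[M^t]\geq\mathbb{E}[M^0]$). The paper does not prove this statement at all --- it is quoted as a background result from the cited reference (Durrett), and the proof given there is essentially the one you outline, so there is nothing to compare beyond noting that your argument matches the standard source.
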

\begin{lemma}(Second Borel-Cantelli Lemma \cite{durrett2019probability}) \label{lemma: Second Borel-Cantelli Lemma}

Let $(\Omega, \mathcal{F}, \mathcal{P})$ be a probability space, let $(\mathcal{F}^t)_{t\geq 0}$ be a filtration with $\mathcal{F}^0=\{0, \Omega\}$ and $(B^t)_{t\geq 0}$ a sequence of events with $B^t\in \mathcal{F}^t$, then,
the event $\{B^t \text{ occurs infinitely often.}\}$ is the same as $\{\sum_{t=0}^{\infty} P(B^t|\mathcal{F}^{t-1})=\infty\}$.
\end{lemma}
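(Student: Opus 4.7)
The plan is to reduce the claim to an assertion about the martingale $M_n := S_n - A_n$, where $X_t := \mathbb{1}_{B^t}$, $p_t := P(B^t \mid \mathcal{F}^{t-1})$, $S_n := \sum_{t=0}^n X_t$, and $A_n := \sum_{t=0}^n p_t$. Since each $p_t$ is $\mathcal{F}^{t-1}$-measurable, $(M_n)$ is an $\mathcal{F}^n$-martingale, and its increments $X_t - p_t$ are bounded by $1$ in absolute value. The sequences $(S_n)$ and $(A_n)$ are nondecreasing, so their almost-sure limits $S_\infty, A_\infty \in [0,\infty]$ exist, and the event $\{B^t \text{ i.o.}\}$ equals $\{S_\infty = \infty\}$. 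It therefore suffices to show $\{S_\infty = \infty\} = \{A_\infty = \infty\}$ almost surely.

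First I would prove $\{A_\infty < \infty\} \subseteq \{S_\infty < \infty\}$ using optional stopping. For each $K > 0$, define $T_K := \inf\{n \geq 0 : A_{n+1} > K\}$; this is a stopping time because $A_{n+1}$ is $\mathcal{F}^n$-measurable, and by construction $A_{n \wedge T_K} \leq K$ for all $n$. Applying optional stopping to the bounded-increment martingale $M_n$ gives $\mathbb{E}[S_{n \wedge T_K}] = \mathbb{E}[A_{n \wedge T_K}] \leq K$, and monotone convergence as $n \to \infty$ yields $\mathbb{E}[S_{T_K}] \leq K$. Hence $S_\infty < \infty$ almost surely on $\{T_K = \infty\} = \{A_\infty \leq K\}$, and taking a union over positive integers $K$ gives the desired inclusion.

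For the reverse inclusion $\{A_\infty = \infty\} \subseteq \{S_\infty = \infty\}$, the plan is to prove $M_n/(1 + A_n) \to 0$ almost surely on $\{A_\infty = \infty\}$, which forces $S_n/(1+A_n) \to 1$ and hence $S_\infty = \infty$. Consider the predictably-weighted martingale $N_n := \sum_{t=0}^n (X_t - p_t)/(1+A_t)$. Its predictable quadratic variation obeys $\langle N \rangle_\infty \leq \sum_{t \geq 0} p_t/(1+A_t)^2$, and using the monotonicity estimate $(A_t - A_{t-1})/(1+A_t)^2 \leq \int_{A_{t-1}}^{A_t} (1+u)^{-2}\, du$ the sum telescopes into $\int_0^{A_\infty}(1+u)^{-2}\,du \leq 1$. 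Thus $N_n$ is $L^2$-bounded and converges almost surely. Kronecker's lemma applied with weights $1 + A_n \uparrow \infty$ then delivers $M_n/(1+A_n) \to 0$ on $\{A_\infty = \infty\}$, completing the proof.

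The main obstacle is the second direction: it requires extracting a strong-law-type statement for the martingale $M_n$ on the event where the compensator diverges. The key technical choice is the weighting by $1/(1+A_t)$, which simultaneously turns the predictable quadratic variation into a convergent ``telescoping'' integral and sets up the Kronecker step; once that convergence is obtained, the rest is essentially bookkeeping.
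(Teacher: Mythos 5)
Your proof is correct. Note, though, that the paper does not actually prove this lemma: it is stated as a background result and attributed to \citet{durrett2019probability}, whose own argument runs through a different device, namely the dichotomy for martingales with bounded increments (such a martingale a.s.\ either converges or satisfies $\limsup = +\infty$ and $\liminf = -\infty$), applied to $M_n = S_n - A_n$ to rule out the two ``mismatched'' events $\{S_\infty<\infty, A_\infty=\infty\}$ and $\{S_\infty=\infty, A_\infty<\infty\}$. Your route replaces that single structural theorem with two more hands-on arguments: optional stopping at $T_K=\inf\{n: A_{n+1}>K\}$ for the inclusion $\{A_\infty<\infty\}\subseteq\{S_\infty<\infty\}$, and the predictably weighted martingale $N_n=\sum_t (X_t-p_t)/(1+A_t)$ with the telescoping bound $\langle N\rangle_\infty\le\int_0^{A_\infty}(1+u)^{-2}\,du\le 1$ plus Kronecker's lemma for the converse. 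Both steps check out (the weight $1/(1+A_t)$ is $\mathcal F^{t-1}$-measurable, the stopped martingale identity needs no boundedness beyond integrability of each $M_n$, and Kronecker applies pathwise on $\{A_\infty=\infty\}$). What your approach buys is strictly more than the stated lemma: you obtain the ratio limit $S_n/(1+A_n)\to 1$ on $\{A_\infty=\infty\}$, i.e.\ the stronger ``conditional Borel--Cantelli'' of Freedman/Durrett's follow-up theorem, at the cost of invoking $L^2$-martingale convergence and Kronecker rather than the single dichotomy lemma. Two cosmetic points: the equality of events should be read as an almost-sure equality (both proofs only deliver that), and for $t=0$ one should fix the convention $\mathcal F^{-1}=\mathcal F^0$ so that $p_0=P(B^0)$.
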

\begin{lemma}(Azuma-Hoeffding Inequality)\label{lemma: Azuma-Hoeffding Inequality}
For a sequence of Martingale difference sequence random variable $\{Y_t\}_{t=1}^{\infty}$ w.r.t. filtration $\{\mathcal{F}_t\}_{t=1}^{\infty}$, if we have $a_t\leq Y_t \leq b_t$ for some constant $a_t, b_t, t=1, \cdots, n$, then
\begin{equation}
\text{Pr}(\sum_{t=1}^n Y_t>s)\leq \exp\left(\frac{-2 s^2}{\sum_{t=1}^n (b_t-a_t)^2}\right)
\end{equation}
\end{lemma}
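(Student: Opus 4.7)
The plan is to prove this via the standard Chernoff-bound technique, combined with a conditional moment generating function estimate (Hoeffding's lemma) applied iteratively through the filtration. Denote $S_n = \sum_{t=1}^n Y_t$. First I would apply Markov's inequality to the exponential moment: for any $\lambda > 0$,
\begin{equation*}
\text{Pr}(S_n > s) \;=\; \text{Pr}(e^{\lambda S_n} > e^{\lambda s}) \;\leq\; e^{-\lambda s}\,\mathbb{E}[e^{\lambda S_n}].
\end{equation*}
The task then reduces to bounding $\mathbb{E}[e^{\lambda S_n}]$ and optimizing over $\lambda$.

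Next I would establish the key conditional bound, namely Hoeffding's lemma in conditional form: for any $t$, since $a_t \leq Y_t \leq b_t$ and $\mathbb{E}[Y_t \mid \mathcal{F}_{t-1}] = 0$,
\begin{equation*}
\mathbb{E}[e^{\lambda Y_t} \mid \mathcal{F}_{t-1}] \;\leq\; \exp\!\left(\frac{\lambda^2 (b_t - a_t)^2}{8}\right).
\end{equation*}
The standard argument uses convexity of $y \mapsto e^{\lambda y}$ to write $e^{\lambda Y_t} \leq \frac{b_t - Y_t}{b_t - a_t} e^{\lambda a_t} + \frac{Y_t - a_t}{b_t - a_t} e^{\lambda b_t}$, take conditional expectation (which kills the $Y_t$ terms by the martingale difference property), and then bound the resulting $\log$ by a Taylor expansion around $\lambda=0$. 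This is the main technical step; the calculation is routine but is where the factor of $8$ (and hence eventually the $2$ in the exponent of the final bound) is produced.

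I would then iterate through the filtration using the tower property: writing $\mathbb{E}[e^{\lambda S_n}] = \mathbb{E}[e^{\lambda S_{n-1}}\,\mathbb{E}[e^{\lambda Y_n}\mid \mathcal{F}_{n-1}]]$ and applying the conditional bound, I obtain $\mathbb{E}[e^{\lambda S_n}] \leq \mathbb{E}[e^{\lambda S_{n-1}}] \cdot \exp(\lambda^2(b_n-a_n)^2/8)$, and induction gives
\begin{equation*}
\mathbb{E}[e^{\lambda S_n}] \;\leq\; \exp\!\left(\frac{\lambda^2}{8}\sum_{t=1}^n (b_t-a_t)^2\right).
\end{equation*}
Combining with the Markov step yields $\text{Pr}(S_n > s) \leq \exp(-\lambda s + \tfrac{\lambda^2}{8}\sum_t (b_t-a_t)^2)$.

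Finally, I would optimize the free parameter $\lambda$. Taking the derivative of the exponent with respect to $\lambda$ and setting it to zero gives $\lambda^{\star} = 4s / \sum_{t=1}^n(b_t-a_t)^2$, and substituting back produces exactly the desired bound $\exp(-2s^2 / \sum_{t=1}^n (b_t-a_t)^2)$. The main obstacle is the proof of Hoeffding's lemma itself, since the rest is mechanical; everything else reduces to careful conditioning on the filtration and a one-dimensional optimization.
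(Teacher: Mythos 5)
Your proposal is correct: the Markov/Chernoff step, the conditional Hoeffding lemma via convexity of $y \mapsto e^{\lambda y}$ and the martingale-difference property, the tower-property induction, and the optimization $\lambda^{\star} = 4s/\sum_{t=1}^n (b_t-a_t)^2$ all check out and yield exactly the stated bound $\exp(-2s^2/\sum_{t=1}^n(b_t-a_t)^2)$. The paper itself states this lemma as a standard background result and gives no proof, so there is nothing to compare against; your argument is the canonical textbook derivation and is complete in all essential steps.
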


\section{Properties of Learning Objective}\label{app: properties}
\subsection{Decomposition of $f(\Theta)$}\label{app: decomposition}
The learning objective $f(\Theta)$ can be decomposed as follows:
\begin{equation*}
\begin{aligned}
f(\Theta) 
=& \underset{x\sim \mathcal P}{\mathbb{E}}\Big[\sum_{i=1}^k \Big ((1-\zeta) \mathbb 1_i(x,\Theta) +\frac{\zeta}{k}\Big)\ell(x,\theta_i)\Big]\\
=& \sum_{i=1}^k (1-\zeta) \underset{x\sim \mathcal P}{\mathbb{E}}[\mathbb 1_i(x,\Theta)\ell(x,\theta_i)] + \sum_{i=1}^k \frac{\zeta}{k}\underset{x\sim \mathcal P}{\mathbb{E}}[\ell(x,\theta_i)]\\
=& (1-\zeta) \sum_{i=1}^k a_i(\Theta) \cdot \underset{x\sim \mathcal{D}_i(\Theta)}{\mathbb{E}}[\ell(x, \theta_i)] + \frac{\zeta}{k} \cdot\sum_{i=1}^k\underset{x\sim \mathcal{P}}{\mathbb{E}}[\ell(x, \theta_i)]\\
=& (1-\zeta) \cdot f_{\text{PR}}(\Theta) + \zeta \cdot f_{\text{NP}}(\Theta)
\end{aligned}
\end{equation*}
where 
\begin{equation*}
f_{\text{PR}}(\Theta) = \sum_{i=1}^k a_i \cdot \underset{x\sim \mathcal{D}_i}{\mathbb{E}}[\ell(x, \theta_i)]
\end{equation*}
is the learning objective when all users have perfect rationality, while 
\begin{equation*}
f_{\text{NP}}(\Theta) = \frac{1}{k}\sum_{i=1}^k\underset{x\sim \mathcal{P}}{\mathbb{E}}[\ell(x, \theta_i)]
\end{equation*}
is the learning objective when users have no preference and choose randomly over all the models. 
Thus, $f(\Theta)$ can be decomposed as 
a linear combination of $f_{\text{PR}}$ and $f_{\text{NP}}$, controlled by parameter $\zeta$, which captures users' rationality.

\subsection{Example of $f_{\text{PR}}(\Theta)$ being Non-Convex.}\label{app: illustration of f being non-convex}
In Figure \ref{fig: showing f(PR) is non-convex}, we give a simple illustration of $f_{\text{PR}}(\Theta)$ to be non-convex. Here, we show it mathematically. Let $x\sim U(0, 1)$, where $U(0, 1)$ is a uniform distribution over $[0, 1]$, and let the loss function to be $\ell(x, \theta) = (x-\theta)^2$. Let $\Theta = (\theta_1, \theta_2)$ with $\theta_1, \theta_2\in \mathbb{R}$ and $\theta_1<\theta_2$. Then 
\begin{equation*}
\begin{aligned}
f_{\text{PR}}(\Theta)& = \mathbb{E}_{x\sim U(0,1)}[\min\{(x-\theta_1)^2, (x-\theta_2)^2\}]\\
& =\int_{0}^{\frac{\theta_1+\theta_2}{2}}(x-\theta_1)^2 dx +\int_{\frac{\theta_1+\theta_2}{2}}^1 (x-\theta_2)^2 dx\\
& =\frac{(\theta_1+\theta_2)^2(\theta_1-\theta_2)}{4}+\theta_2^2-\theta_2+\frac{1}{3}
\end{aligned}
\end{equation*}
The Hessian of $f(\Theta)$:
\begin{equation*}
\nabla^2 f_{\text{PR}}(\Theta) =
\begin{bmatrix}
  \frac{3\theta_1+\theta_2}{2}& \frac{\theta_1-\theta_2}{2}\\
  \frac{\theta_1-\theta_2}{2}& -\frac{\theta_1+3\theta_2}{2}+2
\end{bmatrix}
\end{equation*}
Without too much constraints on $\Theta$, the Hessian is generally not positive semi-definite, and thus the function is non-convex.

\subsection{Boundedness of $f(\Theta)$. }\label{app: upper boundedness}
Here we prove that $f(\Theta)$ is upper bounded by $f_{\text{NP}}(\Theta)$ and lower bounded by $f_{\text{PR}}(\Theta)$ using Proposition \ref{prop: upper bound function}. 
Specifically, let \begin{equation*}
F(\Theta; \Theta^{'}) = \sum_{i=1}^ka_i(\Theta^{'}) \cdot \underset{x\sim \mathcal{D}_i(\Theta^{'})}{\mathbb{E}}[\ell(x,\theta_i)],
\end{equation*}
which is a family of functions parameterized by $\Theta^{'}$. Then we have $f_{\text{PR}}(\Theta)\leq F(\Theta; \Theta^{'})$ for all $\Theta^{'}$ (Proposition \ref{prop: upper bound function}). 
Let $F(\Theta; \Theta^{'}_i) = \underset{{x\sim \mathcal{P}}}{\mathbb{E}}[\ell(x, \theta_i)]$, namely, 
\begin{align*}
a_j(\Theta^{'}_i)=\begin{cases}
    1, & \text{if } j= i\\
    0, & \text{if } j\neq i, 
\end{cases}
,
~
\mathcal{D}_j(\Theta_i^{'})=\begin{cases}
    \mathcal{P}, & \text{if } j= i\\
    0, & \text{if } j\neq i
\end{cases}
\end{align*}
Then $f_{\text{PR}}(\Theta)\leq \frac{1}{k}\sum_{i=1}^k F(\Theta; \Theta_i^{'})= f_{\text{NP}}(\Theta)$. Thus, $f_{\text{PR}}(\Theta) \leq (1-\zeta) \cdot f_{\text{PR}}(\Theta) + \zeta \cdot f_{\text{NP}}(\Theta)\leq f_{\text{NP}}(\Theta)$, i.e., $f_{\text{PR}}(\Theta) \leq f(\Theta)\leq f_{\text{NP}}(\Theta)$.
\section{Omitted Proofs}
\subsection{Proof of Theorem \ref{thm: main}}
\begin{proof}\label{app: main}
\textbf{(1) Convergence of Objective Function $f(\Theta)$.}

Recall Lemma \ref{lemma: analytic upper bound of f}  gives the following inequality for the updates of $f(\Theta)$:
\begin{equation}\label{eq: 11}
f(\Theta^{t+1}) \leq f(\Theta^t)-A^{t+1}+B^{t+1}-C^{t+1}
\end{equation}

Let $(\mathcal{F}^t)_{t=0}^{\infty}$ be the filtration given in Definition \ref{def: filtration}.

Let $(M^{t})_{t=1}^{\infty} $ be defined by:
\begin{equation*}
M^{t+1} = f(\Theta^{t+1})-\sum_{\tau=0}^t B^{\tau+1}.
\end{equation*}
Then Eq.~\eqref{eq: 11} becomes 
\begin{equation*}
M^{t+1}\leq M^t  -A^{t+1} -C^{t+1}
\end{equation*}
Take the expectation conditioned on $\mathcal{F}^t$:

\begin{align}
\mathbb{E}[M^{t+1}|\mathcal{F}^t]&\leq \mathbb{E}[M^{t}|\mathcal{F}^t]-\mathbb{E}[A^{t+1}|\mathcal{F}^t]-\mathbb{E}[C^{t+1}|\mathcal{F}^t]\notag\\
&\leq \mathbb{E}[M^t|\mathcal{F}^t]-\mathbb{E}[C^{t+1}|\mathcal{F}^t]\label{eq: 13}\\
&=\mathbb{E}[M^t|\mathcal{F}^t]
 \label{eq: 14}\\
&=M^t,\notag
\end{align}
where Eq.~\eqref{eq: 13} is due to $(A^t)_{t=1}^{\infty}$ being non-negative while Eq.~\eqref{eq: 14} is because  $(C^t)_{t=0}^{\infty}$ being a martingale difference sequence (Lemma \ref{lemma: C being martingale}). Thus, $M^t$ is an $\mathcal{F}^t$-supermartingale. Thus, we have showed that $-M^t$ is an $\mathcal{F}^t$-submartingale.

Moreover, since $(B^t)_{t=1}^{\infty}$ is non-negative, we have
\begin{equation*}
\begin{aligned}
-M^t=& -f(\Theta^{t})+\sum_{\tau=1}^{t-1} B^{\tau}\\
\leq& -f(\Theta^t)+\sum_{\tau=1}^{\infty} B^{\tau}\\
\leq& 0 +\sum_{\tau=1}^{\infty} B^{\tau}<\infty,
\end{aligned}
\end{equation*}
where we also used the fact that $f(\Theta^t)\geq 0$ and $\sum_{\tau=1}^{\infty} B^{\tau}<\infty$ (Lemma \ref{lemma: C being martingale}). 
By the martingale convergence theorem (Theorem \ref{thm: martingale convergence}), $(-M^t)_{t=1}^{\infty}$ converges almost surely. Hence, $f(\Theta^t)$ converges almost surely.

\textbf{(2) Convergence of Iterates $\Theta$. }
Based on Lemma \ref{lemma: convergence of iterates}, what we have left to do is to verify our learning rate in Algorithm \ref{alg: simplified single model update} satisfies the two conditions in Lemma \ref{lemma: convergence of iterates}. In order to satisfy Lemma \ref{lemma: convergence of iterates}, we take $t_0$ and $s$ such that $s<\frac{\epsilon\eta_cr}{4L}$ and $t_0\geq \left(\frac{2\ln 6}{s}\eta_c^2\right)\cdot (e^r -1)$.

To begin with, we show that when $\|\nabla_{\theta_i}f(\Theta^{\tau})\|\in [\epsilon, \epsilon_0)$, the first condition $\sum_{\tau\leq t<T(\tau)}\eta^{t+1}<r$ can be satisfied with probability 1.
Define map $T_{r}:\mathbb{N}\rightarrow \mathbb{N}$ so that $T_r(\tau)$ is a unique number s.t.,
\begin{equation}
\sum_{\tau\leq t<T_{r}(\tau)}\frac{1}{t+1}<\frac{r}{\eta_c}\leq \sum_{\tau\leq t\leq T(\tau)} \frac{1}{t+1}
\end{equation}
Then, we have $\sum_{\tau\leq t<T_r(\tau)}\eta^{t+1}=\sum_{\tau\leq t<T_r(\tau)}\frac{\eta_c}{t+1}<r$.
Now we prove that, conditioned on $\mathcal{F}^{\tau} $ and $\|\nabla_{\theta_i}f(\Theta^{\tau})\|\in [\epsilon, \epsilon_0)$,  with some probability, $\sum_{\tau\leq t<T(\tau)}\eta_i^{t+1}>s$ occurs. 

Define $\mu = \sum_{\tau\leq t\leq T(\tau)}\mathbb{E}[\eta_i^{t+1}|\mathcal{F}^t]$ and note that $\eta_i^{t+1}-\mathbb{E}[\eta_i^{t+1}|\mathcal{F}^{t}]$ is a martingale difference sequence with increments
\begin{equation*}
    |(1-\zeta)\cdot a_i^t +\zeta \cdot \frac{1}{k} -1|\cdot \eta^{t+1}\leq \eta^{t+1}=\frac{\eta_c}{t+1},
\end{equation*}
where we have used that $\mathbb{E}[\eta_i^{t+1}|\mathcal{F}^t] = [(1-\zeta)\cdot a_i^t +\zeta \cdot \frac{1}{k}]\cdot \eta^{t+1}$. 
Now, we apply Azuma-Hoeffding's inequality (Lemma \ref{lemma: Azuma-Hoeffding Inequality}), which implies that
\begin{equation*}
\begin{aligned}
&\text{Pr}\left( \sum_{\tau\leq t<T(\tau)}\eta_i^{t+1}>\mu-s\bigg|\mathcal{F}^t, \|\nabla_{\theta_i} f(\Theta^{\tau})\|\in [\epsilon, \epsilon_0)
\right)\\
>&1-\exp\left(\frac{-2s^2}{4\sum_{\tau\leq t
<T(\tau)}\frac{\eta_c^2}{(t+1)^2}}\right)\\
=&1-\exp(-\frac{1}{2}s^2 v)>\frac{5}{6}
\end{aligned}
\end{equation*}
where $v=\left(\sum_{\tau\leq t<T(\tau)}\frac{\eta_c^2}{(t+1)^2}\right)^{-1}$ and we use the fact that 
\begin{equation*}
v=\left(\sum_{\tau\leq t<T(\tau)}\frac{\eta_c^2}{(t+1)^2}\right)^{-1}\overset{(i)}{\geq} \frac{1}{e^{r}-1}\frac{(\tau+1)^2}{(\tau+1)\eta_c^2}\overset{(ii)}{>}\frac{2\ln 6}{s}
\end{equation*}
$(i)$ is because $T(\tau)-\tau \leq (e^\tau -1)\cdot (\tau+1)$ from Corollary \ref{corollary: 1} while $(ii)$ is due to $\tau>t_0\geq (\frac{2\ln 6}{s}\eta_c^2)\cdot (e^r-1)$.  

In the following, we claim and prove that: conditioned on $\mathcal{F}^t$ and $ \|\nabla_{\theta_i}f(\Theta^{\tau})\|\in [\epsilon, \epsilon_0)$, we have $\mu>2s$.
Because 
\begin{align*}
\|\nabla_{\theta_i}f(\Theta^{\tau})\|=&\| (1-\zeta) \cdot a_i^{\tau}(\Theta)\cdot\underset{x\sim \mathcal{D}_i (\Theta)}{\mathbb{E}}[\nabla_{\theta_i}\ell(x,\theta_i)] +  \frac{\zeta }{k}\cdot \underset{x\sim \mathcal{P}}{\mathbb{E}}[\nabla_{\theta_i} \ell(x, \theta_i)]\|\\
\leq &|(1-\zeta)\cdot a^{\tau}_i\cdot L +\frac{\zeta}{k} L|
\end{align*}
Then, we must have $(1-\zeta)\cdot a_i^{\tau}+\frac{\zeta}{k}\geq \frac{\epsilon}{L}$.

Since $a_i(\Theta)$ is locally $L_a$ sensitive (Lemma \ref{lemma: a(Theta) being l_a sensitive}), for $\tau\leq t\leq T(\tau)$, we have $|a_i(\Theta^{t})-a_i(\Theta^{\tau})|\leq L_{a}\|\Theta^{\tau}-\Theta^t\|\leq L_{a}\cdot L\cdot \sum_{\tau\leq t^{'}<t}\eta^{t^{'}}\leq L_{a}Lr$. By setting $r<\frac{\epsilon}{2L^2L_{a}(1-\zeta)}$, we have $(1-\zeta)\cdot a_i^{t}+\frac{\zeta}{k}\geq \frac{\epsilon}{2L}$ for all $\tau\leq t<T(\tau)$.

Then we have 
\begin{equation*}
\begin{aligned}
\mu &= 
\sum_{\tau\leq t< T(\tau)}\mathbb{E}[\eta_i^{t+1}|\mathcal{F}^{\tau}, \|\nabla_{\theta_i}f(\Theta^{\tau})\|\in [\epsilon, \epsilon_0)]\\
&= \sum_{\tau\leq t< T(\tau)}((1-\zeta)\cdot a_i^t+\frac{\zeta}{k})\cdot \frac{\eta_c}{t+1}\\
&\geq \frac{\epsilon \eta_c}{2L}\cdot \sum_{\tau\leq t< T(\tau)}\frac{1}{t+1}\\
&\overset{(i)}{\geq} \frac{\epsilon \eta_c}{2L}\cdot\ln \frac{T(\tau)+1}{\tau+1}\\
& \overset{(ii)}{\geq}\frac{\epsilon \eta_c}{2L}\ln \frac{\tau(\alpha+1) +1}{\tau+1}\\
&\overset{(iii)}{=}\frac{\epsilon \eta_c}{2L}\ln \frac{\tau e^{r}+1}{\tau+1}>2s
\end{aligned}
\end{equation*}
where (i) is from Lemma \ref{lemma: 1}, (ii) is from Corollary \ref{corollary: 1} and (iii) is because $s<\frac{\epsilon \eta_c r}{4L}$.
Therefore, we have proved that \begin{equation*}
\text{Pr}\left( \sum_{\tau\leq t<T(\tau)}\eta_i^{t+1}>\mu-s>s\bigg|\mathcal{F}^t, \|\nabla_{\theta_i} f(\Theta^{\tau})\|\in [\epsilon, \epsilon_0)
\right)>\frac{5}{6}
\end{equation*}

\end{proof}
\subsection{Proof of Lemma \ref{lemma: gradient of learning objective}}\label{app: calculate gradient}
\begin{proof}
As shown in Appendix \ref{app: decomposition}, the objective function $f(\Theta)$ can be decomposed as $f(\Theta) = (1-\zeta)f_{\text{PR}}(\Theta)+\zeta f_{\text{NP}}(\Theta)$, the derivative of $f_{\text{NP}}(\Theta)$ can be easily computed as $\nabla_{\theta_i} f_{\text{NP}}(\Theta) = \frac{1}{k}\underset{x\sim \mathcal{P}}{\mathbb{E}}[\nabla \ell(x, \theta_i)]$. Thus, we will mainly focus on the derivative of $f_{\text{PR}}(\Theta)$, i.e., the learning objective when $\zeta = 0$.

In the following, we will get a closer look at $f_{\text{PR}}$ and then use similar technique from \cite{so2022convergence} by taking the directional derivatives. First, note that $f_{\text{PR}}(\Theta)$, although expressed in terms of $a(\Theta)$ and $\mathcal{D} (\Theta)$ in Eq.~\eqref{eq: 1}, can also be written in terms of indicator functions on $X(\Theta)$, 
\begin{equation}\label{eq: 9}
\begin{aligned}
f_{\text{PR}}(\Theta)& = \sum_{i=1}^k a_i \cdot \underset{x\sim \mathcal{D}_i}{\mathbb{E}}[\ell(x, \theta_i)]\\
&= \underset{x\sim \mathcal{P}}{\mathbb{E}} [\min_{i\in [k]}\ell(x,\theta_i)|\Theta]\\
& =\underset{x\sim\mathcal{P}}{\mathbb{E}}[\sum_{i=1}^k\ell(x,\theta_i) \cdot \mathbb{1}_{X_i}(x) |\Theta],
\end{aligned}
\end{equation}
where $\mathbb{1}_{X_i}(x)$ is the indicator function, 
\begin{equation*}
\mathbb{1}_{X_i}(x)= 
\begin{cases}
    1,& \text{if } x\in X_i(\Theta) \\
    0,              & \text{otherwise}
\end{cases}
\end{equation*}
Fix $\Theta \in \mathbb{R}^{k\times d}$, 
Let $\gamma>0$ be a scalar,  and $v \in \mathbb{R}^{k\times d}$ with $\|v\|=1$. Denote $\tilde{\Theta}=\Theta+\gamma v$ and the subpopulation partition induced by $\tilde{\Theta}$ as $X(\tilde{\Theta}) =(X_1(\tilde{\Theta})\cdots, X_k(\tilde{\Theta}))=(\tilde{X}_1\cdots, \tilde{X}_k)$. Follow Eq.~\eqref{eq: 9}, 
$f_{\text{PR}}(\Theta)=\underset{x\sim\mathcal{P}}{\mathbb{E}}[\sum_{i=1}^k\ell(x,\theta_i) \cdot \mathbb{1}_{X_i}(x) |\Theta]$ while
$f_{\text{PR}}(\tilde{\Theta})$ can be decoupled as 
\begin{equation*}
\begin{aligned}
f_{\text{PR}}(\tilde{\Theta}) 
=&\underset{x\sim\mathcal{P}}{\mathbb{E}}[\sum_{i=1}^k\ell(x,\tilde{\theta}_i) \cdot \mathbb{1}_{\tilde{X}_i}(x) |\tilde{\Theta}]\\
=& \underset{x\sim \mathcal{P}}{\mathbb{E}} [\sum_{i=1}^k\ell(x, \tilde{\theta}_i)\cdot \mathbb{1}_{X_i}(x)+\sum_{i=1}^k\ell(x, \tilde{\theta}_i)\cdot \mathbb{1}_{\tilde{X}_i\backslash X_i}(x)-\sum_{i=1}^k\ell(x, \tilde{\theta}_i)\cdot \mathbb{1}_{X_i\backslash \tilde{X}_i}(x)|\tilde{\Theta},\Theta],
\end{aligned}
\end{equation*}
where we used the fact that $\tilde{X}_i = [X_i \cup (\tilde{X}_i\backslash X_i)]\backslash (X_i \backslash \tilde{X}_i)$.

Now we compute the directional derivative $D_{v} f_{\text{PR}}(\Theta)$ along direction $v$:

\begin{align}
D_v f_{\text{PR}}(\Theta)=&\lim_{\gamma\rightarrow 0}\frac{1}{\gamma}(f_{\text{PR}}(\Theta+\gamma v)-f_{\text{PR}}(\Theta))\notag\\
=&\lim_{\gamma\rightarrow 0}\frac{1}{\gamma}(f_{\text{PR}}(\tilde{\Theta})-f_{\text{PR}}(\Theta))\notag\\
=& \lim_{\gamma\rightarrow 0}\frac{1}{\gamma} \left( \underset{x\sim \mathcal{P}}{\mathbb{E}} \left[\sum_{i=1}^k\ell(x, \tilde{\theta}_i)\cdot \mathbb{1}_{X_i}(x)+\sum_{i=1}^k\ell(x, \tilde{\theta}_i)\cdot \mathbb{1}_{X_i\backslash \tilde{X}_i}(x)-\sum_{i=1}^k\ell(x, \tilde{\theta}_i)\cdot \mathbb{1}_{\tilde{X}_i\backslash X_i}(x)\bigg|\tilde{\Theta}, \Theta\right]\right.\notag\\&\left.-\underset{x\sim\mathcal{P}}{\mathbb{E}}\left[\sum_{i=1}^k\ell(x,\theta_i) \cdot \mathbb{1}_{X_i}(x) \bigg|\Theta\right]\right)\notag\\
=& \lim_{\gamma\rightarrow 0}\frac{1}{\gamma} \left(\underset{x\sim \mathcal{P}}{\mathbb{E}} \left[\sum_{i=1}^k\ell(x, \tilde{\theta}_i)\cdot \mathbb{1}_{X_i}(x)\bigg|\tilde{\Theta},\Theta\right]-\underset{x\sim\mathcal{P}}{\mathbb{E}}\left[\sum_{i=1}^k\ell(x,\theta_i) \cdot \mathbb{1}_{X_i}(x) \bigg|\Theta\right]\right)\notag\\
&+ \lim_{\gamma\rightarrow 0}\frac{1}{\gamma} \left( \underset{x\sim \mathcal{P}}{\mathbb{E}} \left[\sum_{i=1}^k\ell(x, \tilde{\theta}_i)\cdot \mathbb{1}_{X_i\backslash \tilde{X}_i}(x)-\sum_{i=1}^k\ell(x, \tilde{\theta}_i)\cdot \mathbb{1}_{\tilde{X}_i\backslash X_i}(x)\bigg|\tilde{\Theta}, \Theta\right]\right)\label{eq: 4}
 \end{align}
We look at the first two terms and the last two terms of Eq.~\eqref{eq: 4} separately. We show that the first two terms is the directional derivative of a surrogate function that is easy to compute. Then we show that the last two terms is 0. 
To simplify the notation, we introduced a family of surrogate objectives parameterized by  $\Theta^{'}$, 
\begin{equation}\label{eq: 3}
\begin{aligned}
F(\Theta;\Theta^{'}) &= \sum_{i=1}^k a_i(\Theta^{'})\cdot  \underset{x\sim \mathcal{D}_i(\Theta^{'})}{\mathbb{E}}[\ell(x,\theta_i)]\\
&=\underset{x\sim\mathcal{P}}{\mathbb{E}}[\sum_{i=1}^k\ell(x,\theta_i) \cdot \mathbb{1}_{X_i(\Theta^{'})}(x) |\Theta, \Theta^{'}],
\end{aligned}
\end{equation}
Compared to $f_{\text{PR}}$, $F(\Theta; \Theta^{'})$ simply fixes the subpopulation of which the expectation is taken over, making it independent of $\Theta$. Once the subpopulation is fixed, the derivative is easy to compute. 

Note that $\{F(\cdot, \Theta^{'}): \Theta^{'}\in \mathbb{R}^{k\times d}\}$ forms a family of convex, $L$-Lipschitz and $\beta$-smooth functions since it is a sum of convex, $L$-Lipschitz and $\beta$-smooth function $\ell(\cdot, \cdot)$. Moreover, when taking the parameter $\Theta^{'}$ as $\Theta$, we have
\begin{equation*}
F(\Theta; \Theta) = f_{\text{PR}}(\Theta)
\end{equation*}

Then, the first two terms of Eq.~\eqref{eq: 4} can be written as:
\begin{equation}\label{eq: 5}
\begin{aligned}
&\lim_{\gamma\rightarrow 0}\frac{1}{\gamma} \left(\underset{x\sim \mathcal{P}}{\mathbb{E}} \left[\sum_{i=1}^k\ell(x, \tilde{\theta}_i)\cdot \mathbb{1}_{X_i}(x)\bigg|\tilde{\Theta}, \Theta\right]-\underset{x\sim\mathcal{P}}{\mathbb{E}}\left[\sum_{i=1}^k\ell(x,\theta_i) \cdot \mathbb{1}_{X_i}(x) \bigg|\Theta\right]\right)\\
=&\lim_{\gamma\rightarrow 0}\frac{1}{\gamma} \left(\underset{x\sim \mathcal{P}}{\mathbb{E}}\left[\sum_{i=1}^k \left(\ell(x,\tilde{\theta}_i)-\ell(x,\theta_i)\right) \cdot \mathbb{1}_{X_i}(x)\bigg|\tilde{\Theta}, \Theta
\right]
\right)\\
=& \lim_{\gamma\rightarrow 0}\frac{1}{\gamma}\left( F(\tilde{\Theta}; \Theta)-F(\Theta; \Theta)\right) \\
=& \lim_{\gamma\rightarrow 0}\frac{1}{\gamma}\left( F(\Theta+\gamma v; \Theta)-F(\Theta; \Theta)\right) \\
=& D_vF(\Theta; \Theta)\\
\end{aligned}
\end{equation}

where the directional derivative $D_vF(\Theta; \Theta)$ is taken only over the first argument (i.e. the partition $X_i$ is fixed).

Now, let's look at the rest two terms in Eq.~\eqref{eq: 4}. Note that for any point $x\in X_i\backslash \tilde{X}_i$, there must exist some $j\in [k], j\neq i$, such that $x\in \tilde{X}_j$, which are users that prefer model $i$ most compared to other models, but prefers other models (for example, some $j \in [k]$) on the new parameter tuple $\tilde{\Theta}$.

Thus $\sum_{i=1}^k \ell(x, \tilde{\theta}_i)\cdot \mathbb{1}_{X_i\backslash \tilde{X}_i}(x) = \sum_{i,j \in [k], i\neq j}\ell(x, \tilde{\theta}_i)\cdot \mathbb{1}_{X_i\rightarrow \tilde{X}_j}(x)$, where
\begin{equation*}
 \mathbb{1}_{X_i\rightarrow \tilde{X}_j }(x)= 
\begin{cases}
    1,& \text{if }  x \in (X_i \backslash \tilde{X}_i) \cap (\tilde{X}_j\backslash X_j )\\
    0,              & \text{otherwise},
\end{cases}
\end{equation*}
indicating users that prefer model $i$ user parameter tuple $\Theta$ but would choose  model $j$ under parameter tuple $\tilde{\Theta}$. 

Similarly,  for any point $x\in \tilde{X}_i \backslash X_i$, there must exists some $j\in [k], j\neq i$, such that $x\in X_j$, thus, 
$\sum_{i=1}^k \ell(x, \tilde{\theta}_i)\cdot \mathbb{1}_{\tilde{X}_i\backslash X_i}(x) = \sum_{i,j \in [k], i\neq j}\ell(x, \tilde{\theta}_i)\cdot \mathbb{1}_{X_j\rightarrow \tilde{X}_i}(x)$, where
\begin{equation*}
 \mathbb{1}_{X_j\rightarrow \tilde{X}_i }(x)= 
\begin{cases}
    1,& \text{if }  x \in (X_j \backslash \tilde{X}_j) \cap (\tilde{X}_i\backslash X_i )\\
    0,              & \text{otherwise},
\end{cases}
\end{equation*}
indicating users attracted from other services $j\neq i$ to $i$ due to the parameter update from $\Theta$ to $\tilde{\Theta}$.
Therefore, the rest two terms in Eq.~\eqref{eq: 4} can be rewritten as
\begin{equation*}
\begin{aligned}
&\lim_{\gamma\rightarrow 0}\frac{1}{\gamma} \left( \underset{x\sim \mathcal{P}}{\mathbb{E}} \left[\sum_{i=1}^k\ell(x, \tilde{\theta}_i)\cdot \mathbb{1}_{X_i\backslash \tilde{X}_i}(x)-\sum_{i=1}^k\ell(x, \tilde{\theta}_i)\cdot \mathbb{1}_{\tilde{X}_i\backslash X_i}(x)\bigg|\tilde{\Theta},\Theta\right]\right)\\
=& \lim_{\gamma\rightarrow 0}\frac{1}{\gamma}\left( \underset{x\sim \mathcal{P}}{\mathbb{E}} \left[
\sum_{i, j\in [k], i\neq j} \left(\ell(x, \tilde{\theta}_i)- \ell(x, \tilde{\theta}_j)\right) \cdot \mathbb{1}_{X_i\rightarrow \tilde{X}_j}(x)\bigg| \tilde{\Theta}, \Theta
\right]\right)
\end{aligned}
\end{equation*}
For any $x\in  (X_i \backslash \tilde{X}_i) \cap (\tilde{X}_j\backslash X_j )$, i.e., $\mathbb{1}_{X_i\rightarrow \tilde{X}_j}(x)=1$,
according  to the definition, we have 
$\ell(x,\theta_i)\leq \ell(x,\theta_j)$(Under parameter $\Theta$, user $x$ prefers model $i$) and $\ell(x, \tilde{\theta}_j)\leq \ell(x, \tilde{\theta}_i )$ (Under parameter $\tilde{\Theta}$, user $x$ prefers model $j$).

Thus, 
\begin{equation*}
\begin{aligned}
& |\ell(x,\tilde{\theta}_i )-\ell(x,\tilde{\theta}_j)| &\\
=&\ell(x,\tilde{\theta}_i )-\ell(x,\tilde{\theta}_j)& \text{Since }~ \ell(x, \tilde{\theta}_j)\leq \ell(x, \tilde{\theta}_i )\\
=&\ell(x,\theta_i + \gamma v_i)-\ell(x,\theta_j+\gamma v_j)& \\
=& \ell(x, \theta_i) +\gamma v_i \cdot \nabla\ell(x, \theta_i)-\ell(x,\theta_j)-\gamma v_j\cdot \nabla \ell(x,\theta_j)+o(\gamma^2) & \text{(Taylor expansion)  }\\
\leq & \gamma \cdot (v_i\cdot \nabla \ell(x,\theta_i)-v_j\cdot \nabla \ell(x,\theta_j)) +o(\gamma^2)  & \text{Since }~ \ell(x,\theta_i)\leq \ell(x,\theta_j)
\end{aligned}
\end{equation*}

It follows that
\begin{equation}\label{eq: 6}
\begin{aligned}
& \lim_{\gamma\rightarrow 0}\frac{1}{\gamma}\left( \underset{x\sim \mathcal{P}}{\mathbb{E}} \left[
\sum_{i, j\in [k], i\neq j} \left|\ell(x, \tilde{\theta}_i)- \ell(x, \tilde{\theta}_j)\right| \cdot \mathbb{1}_{X_i\rightarrow \tilde{X}_j}(x)\bigg| \tilde{\Theta},\Theta
\right]\right)\\
\leq &  \lim_{\gamma\rightarrow 0}\frac{1}{\gamma}\left( \underset{x\sim \mathcal{P}}{\mathbb{E}} \left[
\sum_{i, j\in [k], i\neq j} \left(\gamma \cdot (v_i\cdot \nabla \ell(x,\theta_i)-v_j\cdot \nabla \ell(x,\theta_j)) +o(\gamma^2)\right) \cdot \mathbb{1}_{X_i\rightarrow \tilde{X}_j}(x)\bigg| \tilde{\Theta},\Theta
\right]\right)\\
=& \lim_{\gamma\rightarrow 0}\left( \underset{x\sim \mathcal{P}}{\mathbb{E}} \left[
\sum_{i, j\in [k], i\neq j} \left(v_i\cdot \nabla \ell(x,\theta_i)-v_j\cdot \nabla \ell(x,\theta_j)\right) \cdot \mathbb{1}_{X_i\rightarrow \tilde{X}_j}(x)\bigg| \tilde{\Theta},\Theta
\right]\right)\\
=&0 ~~~~~~~~~~~~~~((X_i \backslash \tilde{X}_i) \cap (\tilde{X}_j\backslash X_j ) ~\text{decreases to some measure zero set when }~ \gamma \rightarrow 0)
\end{aligned}
\end{equation}

Combining Eq.~\eqref{eq: 3}, Eq.~\eqref{eq: 5} and Eq.~\eqref{eq: 6}, we have
\begin{equation*}
D_v f_{\text{PR}}(\Theta) = D_v F(\Theta; \Theta),
\end{equation*}
which implies that $\nabla_{\Theta}f_{\text{PR}}(\Theta) = \nabla_{\Theta}F(\Theta; \Theta^{'}) $ when $\Theta^{'} = \Theta$.
Note that
\begin{equation*}
\nabla_{\theta_i}F(\Theta; \Theta^{'}) = a_i(\Theta^{'})\cdot\underset{x\sim \mathcal{D}_i(\Theta^{'})}{\mathbb{E}}[\nabla_{\theta_i}\ell(x,\theta_i)]
\end{equation*}
when take the derivative of $F(\Theta; \Theta^{'})$ w.r.t. $\Theta$.

We thus have 
\begin{equation*}
\nabla_{\theta_i} f_{\text{PR}}(\Theta) = a_i(\Theta)\cdot\underset{x\sim \mathcal{D}_i(\Theta)}{\mathbb{E}}[\nabla_{\theta_i}\ell(x,\theta_i)]
\end{equation*}
\end{proof}
\subsection{Proof of Lemma \ref{lemma: analytic upper bound of f}}
\begin{proof}

To simplify the notation, denote model update at each round as $\theta_i^{t+1} = \theta_i^t -\eta_i^{t+1}\cdot \nabla\ell(x^{t+1}, \theta_i^t)$, where $\eta_i^{t+1}\neq 0$ only when model $i$ was selected by the user, i.e.,  $\eta_i^{t+1}=\begin{cases}\eta^{t+1}& \text{, If model } i ~\text{is selected at time } t\\
0&\text{, Otherwise}.
\end{cases}$

\begin{equation*}
\begin{aligned}
f_{\text{PR}}(\Theta^{t+1})\leq & f_{\text{PR}}(\Theta^t) +\langle \nabla f_{\text{PR}}(\Theta^t), \Theta^{t+1}-\Theta^t\rangle +\frac{\beta}{2}\|\Theta^{t+1}-\Theta^t\|^2\\
 \leq& f_{\text{PR}}(\Theta^t)+\sum_{i=1}^k \langle \nabla_{\theta_i} {f_{\text{PR}}}(\Theta^t), \theta_i^{t+1}-\theta_i^t\rangle+\frac{\beta}{2}\sum_{i=1}^k\|\theta_i^{t+1}-\theta_i^t\|^2\\
=& f_{\text{PR}}(\Theta^t)+\sum_{i=1}^k \langle \nabla_{\theta_i} f_{\text{PR}}(\Theta^t), -\eta_i^{t+1} \nabla\ell(x^{t+1}, \theta_i^t)\rangle +\frac{\beta}{2} \sum_{i=1}^k (\eta_i^{t+1})^2 \cdot \|\nabla\ell(x^{t+1}, \theta_i^t)\|^2\\
=& f_{\text{PR}}(\Theta^t)+\sum_{i=1}^k \eta_i^{t+1}\cdot \langle \nabla_{\theta_i} f_{\text{PR}}(\Theta^t), \frac{\nabla_{\theta_i} f_{\text{PR}}(\Theta^t)}{a_i(\Theta^t)}-  \nabla\ell(x^{t+1}, \theta_i^t)- \frac{\nabla_{\theta_i} f_{\text{PR}}(\Theta^t)}{a_i(\Theta^t)}\rangle +\frac{\beta}{2} \sum_{i=1}^k (\eta_i^{t+1})^2 \cdot \|\nabla\ell(x^{t+1}, \theta_i^t)\|^2\\
=& f_{\text{PR}}(\Theta^t) -\underbrace{\sum_{i=1}^k \eta_i^{t+1} \cdot \frac{1}{a_i(\Theta^t)} \cdot \|\nabla_{\theta_i} f_{\text{PR}}(\Theta^t)\|^2 }_{A_{\text{PR}}^{t+1}}\\
&+\underbrace{\frac{\beta}{2} \sum_{i=1}^k (\eta_i^{t+1})^2 \cdot \|\nabla\ell(x^{t+1}, \theta_i^t)\|^2}_{B_{\text{PR}}^{t+1}}
- \underbrace{\sum_{i=1}^k \eta_i^{t+1} \langle \nabla_{\theta_i} f_{\text{PR}}(\Theta^t), \nabla \ell(x^{t+1}, \theta_i^t)-\frac{\nabla_{\theta_i} f_{\text{PR}}(\Theta^t)}{a_i(\Theta^t)}\rangle}_{C_{\text{PR}}^{t+1}}\\
\end{aligned}
\end{equation*}
Similarly, for $f_{\text{NP}}(\Theta)$, we have
\begin{equation*}
\begin{aligned}
f_{\text{NP}}(\Theta^{t+1})\leq & f_{\text{NP}}(\Theta^t) +\langle \nabla f_{\text{NP}}(\Theta^t), \Theta^{t+1}-\Theta^t\rangle +\frac{\beta}{2}\|\Theta^{t+1}-\Theta^t\|^2\\
 \leq& f_{\text{NP}}(\Theta^t)+\sum_{i=1}^k \langle \nabla_{\theta_i} {f_{\text{NP}}}(\Theta^t), \theta_i^{t+1}-\theta_i^t\rangle+\frac{\beta}{2}\sum_{i=1}^k\|\theta_i^{t+1}-\theta_i^t\|^2\\
=& f_{\text{NP}}(\Theta^t)+\sum_{i=1}^k \langle \nabla_{\theta_i} f_{\text{NP}}(\Theta^t), -\eta_i^{t+1} \nabla\ell(x^{t+1}, \theta_i^t)\rangle +\frac{\beta}{2} \sum_{i=1}^k (\eta_i^{t+1})^2 \cdot \|\nabla\ell(x^{t+1}, \theta_i^t)\|^2\\
=& f_{\text{NP}}(\Theta^t)+\sum_{i=1}^k \eta_i^{t+1}\cdot \langle \nabla_{\theta_i} f_{\text{NP}}(\Theta^t), \nabla_{\theta_i} f_{\text{NP}}(\Theta^t)-  \nabla\ell(x^{t+1}, \theta_i^t)- \nabla_{\theta_i} f_{\text{NP}}(\Theta^t)\rangle +\frac{\beta}{2} \sum_{i=1}^k (\eta_i^{t+1})^2 \cdot \|\nabla\ell(x^{t+1}, \theta_i^t)\|^2\\
=& f_{\text{NP}}(\Theta^t) -\underbrace{\sum_{i=1}^k \eta_i^{t+1} \cdot \|\nabla_{\theta_i} f_{\text{NP}}(\Theta^t)\|^2 }_{A_{\text{NP}}^{t+1}}\\
&+\underbrace{\frac{\beta}{2} \sum_{i=1}^k (\eta_i^{t+1})^2 \cdot \|\nabla\ell(x^{t+1}, \theta_i^t)\|^2}_{B_{\text{NP}}^{t+1}}
- \underbrace{\sum_{i=1}^k \eta_i^{t+1} \langle \nabla_{\theta_i} f_{\text{NP}}(\Theta^t), \nabla \ell(x^{t+1}, \theta_i^t)-\nabla_{\theta_i} f_{\text{NP}}(\Theta^t)\rangle}_{C_{\text{NP}}^{t+1}}
\end{aligned}
\end{equation*}
Adding them together, we have 
\begin{equation*}
\begin{aligned}
f(\Theta^{t+1})&=(1-\zeta) f_{\text{PR}}(\Theta^{t+1}) + \zeta f_{\text{NP}}(\Theta^{t+1})\\
&\leq (1-\zeta)\cdot (f_{\text{PR}}(\Theta^{t})-A_{\text{PR}}^{t+1} +B_{\text{PR}}^{t+1}-C_{\text{PR}}^{t+1}) +\zeta \cdot  (f_{\text{NP}}(\Theta^{t})-A_{\text{NP}}^{t+1} +B_{\text{NP}}^{t+1}-C_{\text{NP}}^{t+1})\\
&= f(\Theta^t) -(1-\zeta)A^{t+1}_{\text{PR}}-\zeta A^{t+1}_{\text{NP}} + (1-\zeta)B_{\text{PR}}^{t+1} +\zeta B_{\text{NP}}^{t+1}-(1-\zeta) C_{\text{PR}}^{t+1}-\zeta C_{\text{NP}}^{t+1}
\end{aligned}
\end{equation*}

Letting $i$ denote the model chosen at time $t$. Let $A^{t+1}, B^{t+1}$ and $C^{t+1}$ be 
\begin{align}
A^{t+1}=&(1-\zeta)\frac{\|\nabla_{\theta_i}f_{\text{PR}}(\Theta^t)\|^2}{a_i(\Theta^t)}\notag +\zeta   \eta^{t+1} \|\nabla_{\theta_i}f_{\text{NP}}(\Theta^t)\|^2\notag\\
B^{t+1} =&\frac{\beta}{2}  (\eta^{t+1})^2  \|\nabla\ell(x^{t+1}, \theta_i^t)\|^2\notag\\
C^{t+1} = &(1-\zeta)  \eta^{t+1} \langle \nabla_{\theta_i} f_{\text{PR}}(\Theta^t), \nabla \ell(x^{t+1}, \theta_i^t)-\frac{\nabla_{\theta_i} f_{\text{PR}}(\Theta^t)}{a_i(\Theta^t)}\rangle+\zeta  \eta^{t+1} \langle \nabla_{\theta_i} f_{\text{NP}}(\Theta^t),\nabla \ell(x^{t+1}, \theta_i^t)- \nabla_{\theta_i} f_{\text{NP}}(\Theta^t)\rangle \notag
\end{align}

Then, we have
\begin{equation}\label{eq: 16}
f(\Theta^{t+1})\leq f(\Theta^t) -A^{t+1} +B^{t+1}-C^{t+1}.
\end{equation}

\end{proof}
\subsection{Proof of Lemma \ref{lemma: convergence of iterates}}

\begin{proof}
Since the set of stationary points $\{\nabla f(\Theta) =0\}$ is compact. (Assumption \ref{ass: compact}), and that $\nabla f$ is continuous, there exists $\epsilon_0$, s.t., $\{\|\nabla f\|\leq \epsilon_0\}$ is also compact. 

For any $\epsilon\in (0,\epsilon_0)$, note that $\{\|\nabla_{\theta_i }f\|\leq \frac{\epsilon}{2}\}$ and $\{\|\nabla_{\theta_i}f\|\in [\epsilon,\epsilon_0]\}$ are two closed disjoint compact subsets of $\{\|\nabla_{\theta_i }f\|\leq \epsilon_0\}$ and can be separated by some distance $R_0>0$. Without loss of generality, we assume that $r_0$ satisfy  $r_0\leq \frac{R_0}{L}$. Fix a $r\in (0, r_0)$. Given $(\epsilon, r)$, let $(T, t_0, s,c)$ be chosen so that 
\begin{equation}\label{eq: 12}
\text{Pr}\left(\sum_{\tau\leq t<T(\tau)} \eta^{t+1}<r , \sum_{\tau\leq t< T(\tau)} \eta_i^{t+1}>s\bigg|\mathcal{F}^{\tau}, \|\nabla_{\theta_i} f(\Theta^{\tau})\|\in [\epsilon, \epsilon_0)\right)>c
\end{equation}
holds for any $\tau>t_0$.
 Fixed a $\tau>t_0$, and denote two events 
\begin{equation*}
\begin{aligned}
&\mathcal{O}^{'}_1 = \left\{\sum_{\tau\leq t< T(\tau)} \eta^{t+1}<r\right\},&\mathcal{O}_1 =\left\{\|\Theta^t-\Theta^{\tau}\|<R_0, \forall t\in (\tau,T(\tau)]
\right\}\\
\end{aligned}
\end{equation*}
We first prove that $\mathcal{O}^{'}_1\Rightarrow \mathcal{O}_1$, by applying Lemma  \ref{lemma: learning rate bound iterates}, which implies that the displacement of $\Theta$ can be bounded by the stepsize.  
To see $\mathcal{O}^{'}_1\Rightarrow \mathcal{O}_1$, simply note that, 
for any $t$ such that $\tau< t\leq T(\tau)$, we have
\begin{equation*}
\begin{aligned}
\|\Theta^{t}-\Theta^{\tau}\|&\leq L\cdot \sum_{\tau \leq \tilde{t} < t} \eta^{\tilde{t} +1}& \text{(From Lemma \ref{lemma: learning rate bound iterates})}\\
&< L\cdot \sum_{\tau \leq \tilde{t} < T(\tau)} \eta^{\tilde{t} +1}\\
&<Lr& \text{(If event }\mathcal{O}_1 \text{ occurs.)} \\
&<R_0& \text{(Since } r_0\leq \frac{R_0}{L}\text{ and } r<r_0)
\end{aligned}
\end{equation*}

Fix any $i\in [k]$, denote the following events
\begin{equation*}
\begin{aligned}
&\mathcal{O}_2 =\left\{\sum_{\tau\leq t<T(\tau)} \eta_i^{t+1}>s
\right\}\\
&\mathcal{O}_{3} =\left\{\|\nabla_{\theta_i}f(\Theta^{\tau}\}\|>\frac{\epsilon}{2}\right\}\\
&\mathcal{O}_{4}=\left\{\|\nabla_{\theta_i}f(\Theta^{\tau}\}\|\in[\epsilon,\epsilon_0)\right\}\\
\end{aligned}
\end{equation*}
Then condition Eq.~\eqref{eq: 12} can be rewritten as
\begin{equation*}
\begin{aligned}
\text{Pr}\left( \mathcal{O}^{'}_1 \cap \mathcal{O}_2|\mathcal{F}^{\tau}, \mathcal{O}_4\right)>c,
\end{aligned}
\end{equation*}
which implies that
\begin{equation*}
\begin{aligned}
& \text{Pr}\left(\mathcal{O}_1\cap \mathcal{O}_3 \cap \mathcal{O}_2| \mathcal{F}^{\tau}, \mathcal{O}_4
\right)&\\
=&\text{Pr}\left(\mathcal{O}_1\cap \mathcal{O}_2| \mathcal{F}^{\tau}, \mathcal{O}_4
\right)& (\mathcal{O}_4\Rightarrow \mathcal{O}_3)
\\
\geq &\text{Pr}\left(\mathcal{O}^{'}_1\cap \mathcal{O}_2| \mathcal{F}^{\tau}, \mathcal{O}_4
\right)>c& (\mathcal{O}^{'}_1\Rightarrow \mathcal{O}_1)
\end{aligned}
\end{equation*}

Note that 
$\mathcal{O}_1\cap \mathcal{O}_3$ implies $\|\nabla_{\theta_i} f(\Theta^t)\|>\frac{\epsilon}{2}$ when $\tau \leq t\leq T(\tau)$.

For any $\delta<\frac{s\epsilon^2}{8}$, let $\mathcal{O}_5=\{f(\Theta^{T(\tau)})<f(\Theta^{\tau})-\delta\}$. In the following, we will prove that $\mathcal{O}_5$ occurs if
$\mathcal{O}_1\cap\mathcal{O}_2\cap \mathcal{O}_3$ occurs, i.e., the occurrence of  $\mathcal{O}_1\cap\mathcal{O}_2\cap \mathcal{O}_3$ implies $f(\Theta^t)$ decreases by at least a constant amount on the same interval.

Recall from Lemma \ref{lemma: analytic upper bound of f}, we have 
\begin{equation*}
\begin{aligned}
f(\Theta^{t})\leq & f(\Theta^{t-1})- A^{t}+B^{t} -C^{t}\\
\leq &f(\Theta^{t -2})-A^{t}+B^{t} -C^{t}-A^{t-1}+B^{t-1} -C^{t-1}\\
\leq &\cdots
\leq  f(\Theta^{\tau}) -\sum_{\tau \leq \tilde{t}< t}A^{\tilde{t}+1}+ \sum_{\tau \leq \tilde{t}< t}B^{\tilde{t}+1}- \sum_{\tau \leq \tilde{t}< t}C^{\tilde{t}+1}
\end{aligned}
\end{equation*}
Recall Lemma \ref{lemma: C being martingale} shows that $\sum_{t=0}^{\infty} N^{t+1} = \sum_{t=0}^{\infty}B^{t+1}- \sum_{t=0}^{\infty}C^{t+1}$ convergences almost surely. Therefore, for any $\delta>0$, there exists an $\mathbb{N}$-random variable $M_{\delta}$ such that $\sum_{\tau\leq \tilde{t}<t}N^{\tilde{t}+1}<\delta$ holds for all $\tau>M_{\delta}$.

Then,

\begin{align}
f(\Theta^{T(\tau)})&\leq f(\Theta^{\tau})-\sum_{\tau \leq t< T(\tau)}A^{t+1} +\sum_{\tau \leq t< T(\tau)}N^{t+1}\notag\\
&\leq f(\Theta^{\tau})-\sum_{\tau \leq t< T(\tau)}\sum_{i=1}^k \eta_i^{t+1} \cdot \frac{1}{a_i(\Theta^t)} \cdot \|\nabla f_{\theta_i}(\Theta^t)\|^2 +\delta\notag\\
& \leq f(\Theta^{\tau})-\sum_{\tau \leq t< T(\tau)}\eta_i^{t+1}\cdot \|\nabla f_{\theta_i}(\Theta^t)\|^2 +\delta\label{eq: 17}\\
& \leq f(\Theta^{\tau})-\frac{\epsilon^2}{4}\cdot\sum_{\tau \leq t< T(\tau)}\eta_i^{t+1} +\delta &  (\mathcal{O}_1\cap \mathcal{O}_3 \text{ occurs.)}\notag\\
&\leq f(\Theta^{\tau})-\frac{\epsilon^2s}{4}+\delta & (\mathcal{O}_2 \text{ occurs.)}\notag\\
&<  f(\Theta^{\tau})-\delta & (\text{Set }\delta<\frac{s\epsilon^2}{8})\notag
\end{align}
where in Eq.~\eqref{eq: 17}) we dropped the summation over $j\neq i$, and then dropped $\frac{1}{a_i(\Theta^t)}$ term.

Thus, $\text{Pr}\left(\mathcal{O}_1\cap\mathcal{O}_2\cap\mathcal{O}_3 |\mathcal{F}^{\tau}, \mathcal{O}_4
\right)>c$ indicates that 
\begin{equation*}
\text{Pr}(\mathcal{O}_5|\mathcal{F}^{\tau}, \mathcal{O}_4)>c, 
\end{equation*}
In other words, if $\|f_{\theta_i}(\Theta^{\tau})\|$ is large at iteration $\tau$, then with positive probability, $f(\Theta^{t})$ will decrease by at least $\delta$ from time step $\tau$ to $T(\tau)$, i.e.,
\begin{equation}
\text{Pr}\left(f(\Theta^{T(\tau)})< f(\Theta^{\tau}) -\delta\bigg|\mathcal{F}^{\tau}, \|\nabla_{\theta_i} f(\Theta^{\tau})\|\in [\epsilon, \epsilon_0)\right)>c
\end{equation}

Since $f(\Theta^{t})$ converges almost surely, this decrease of $\delta$ can only happen finite times, and by Borel-Cantelli lemma (Lemma \ref{lemma: Second Borel-Cantelli Lemma}), event $\mathcal{O}_4=\{\|\nabla_{\theta_i}f(\Theta^{\tau})\|\in[\epsilon, \epsilon_0)\}$ must also occur finitely often.

We prove this by contradiction: assume that $\mathcal{O}_4=\{\|\nabla_{\theta_i}f(\Theta^{\tau})\|\in[\epsilon, \epsilon_0)\}$ happens infinitely often, then we can define the infinite sequences of stopping times: $\tau_0=\max\{t_0, M_{\delta}\}$ and $\tau_{j+1} = \inf\{
t\geq T(\tau_j):\|\nabla_{\theta_i}f(\Theta^t)\|\in [\epsilon, \epsilon_0)\}$. Then by Borel-Cantelli lemma (Lemma \ref{lemma: Second Borel-Cantelli Lemma}), $f(\Theta^{t})$ decreases a constant amount of $\delta$ infinitely often, contradicting to the convergence of $f(\Theta^{t})$. To complete the proof, we also have to show that the iterates don't return to the set $\{\|\nabla_{\theta_i}f\|\geq \epsilon_0\}$. Consider the following two cases:

\paragraph{Case 1:} The iterates never leave the set $\{\|\nabla_{\theta_i}f\|\geq \epsilon_0\}$.

\paragraph{Case 2:} The iterates exits and re-enter the set $\{\|\nabla_{\theta_i}f\|\geq \epsilon_0\}$ infinitely often.


Suppose there exists $T$ such that if $\tau>T$, then $\|\nabla_{\theta_i} f(\Theta^{\tau})\|\geq \epsilon_0$, then by Lemma \ref{lemma: convergence of iterates}, we have 
\begin{equation}
    \text{Pr}\left(\sum_{\tau\leq t<T(\tau)}  \eta_i^{t+1}>s\bigg|\mathcal{F}^{\tau}, \tau>\max\{t_0, T\} \right)>c
\end{equation}
By Second Borel-Cantelli Lemma (Lemma \ref{lemma: Second Borel-Cantelli Lemma}), there are infinitely many intervals $\tau\leq t<T(\tau)$ on which $\sum_{\tau\leq t<T(\tau)}\eta_i^{t+1}> s$, so the total sum is infinite almost surely, i.e., $\sum_{t=0}^{\infty} \eta_i^t=\infty$. This leads to unbounded decrease in cost:
\begin{equation}
\lim\inf_{t\rightarrow \infty} f(\Theta^{t})\leq \lim_{t\rightarrow \infty} \left(f(\Theta^{\tau})-\epsilon_0^2 \sum_{\tau \leq \tilde{t}< t} \eta_i^{\tilde{t}+1}  +\delta \right)=-\infty,
\end{equation}
where we used the similar reduction as above:
\begin{align}
f(\Theta^{t})&\leq f(\Theta^{\tau})-\sum_{\tau \leq \tilde{t}< t}A^{\tilde{t}+1} +\sum_{\tau \leq \tilde{t}< t}N^{\tilde{t}+1}\notag\\
&\leq f(\Theta^{\tau})-\sum_{\tau \leq \tilde{t}< t}\sum_{i=1}^k \eta_i^{\tilde{t}+1} \cdot \frac{1}{a_i(\Theta^{\tilde{t}})} \cdot \|\nabla f_{\theta_i}(\Theta^{\tilde{t}})\|^2 +\delta\notag\\
& \leq f(\Theta^{\tau})-\sum_{\tau \leq \tilde{t}< t}\eta_i^{\tilde{t}+1}\cdot \|\nabla f_{\theta_i}(\Theta^{\tilde{t}})\|^2 +\delta\notag\\
& \leq f(\Theta^{\tau})-\epsilon^2_0\cdot\sum_{\tau \leq \tilde{t}< t}\eta_i^{\tilde{t}+1} +\delta \notag
\end{align}
Thus, \textbf{Case 1} is impossible.

As for \textbf{Case 2}, when the learning rate becomes sufficiently small, each time the iterates leave $\{\|\nabla_{\theta_i}f\|\geq \epsilon_0\} $, they must enter $\{\|\nabla_{\theta_i}f\|\in [\epsilon, \epsilon_0)\}$. Thus, the iterates eventually never return to $\{\|\nabla_{\theta_i} f\|\geq \epsilon_0\}$. 

By ruling out both \textbf{Case 1} and \textbf{Case 2}, we have shown that for all $\epsilon>0$, we almost surely have $\|\nabla_{\theta_i} f(\Theta^t)\|>\epsilon$ only finitely often.
\end{proof}

\section{Supporting Lemmas}
\begin{proposition}\label{prop: upper bound function}
Let 
\begin{equation*}
f_{\text{PR}}(\Theta) = \sum_{i=1}^ka_i(\Theta) \cdot \underset{x\sim \mathcal{D}_i(\Theta)}{\mathbb{E}}[\ell(x,\theta_i)]
\end{equation*}
be the perfect rationality objective in Eq.~\eqref{eq: 1},  and \begin{equation*}
F(\Theta; \Theta^{'}) = \sum_{i=1}^ka_i(\Theta^{'}) \cdot \underset{x\sim \mathcal{D}_i(\Theta^{'})}{\mathbb{E}}[\ell(x,\theta_i)],
\end{equation*}
be the surrogate function of introduced in Eq.~\eqref{eq: 3}, then, for all
$\Theta, \Theta^{'}\in \mathbb{R}^{k\times d}$
, we have $f_{\text{PR}}(\Theta)\leq F(\Theta; \Theta^{'})$.
\end{proposition}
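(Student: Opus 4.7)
The plan is to unify both expressions via their indicator-function representation on the underlying population $\mathcal{P}$, and then exploit the fact that $X(\Theta)$ is by construction the $\arg\min$ partition for the loss vector $(\ell(x,\theta_1),\ldots,\ell(x,\theta_k))$.

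First I would rewrite $f_{\text{PR}}(\Theta)$ and $F(\Theta;\Theta')$ as expectations over $\mathcal{P}$ using indicator functions, exactly as in Eq.~\eqref{eq: 9}:
\begin{equation*}
f_{\text{PR}}(\Theta)=\underset{x\sim\mathcal{P}}{\mathbb{E}}\Bigl[\sum_{i=1}^k \ell(x,\theta_i)\,\mathbb{1}_{X_i(\Theta)}(x)\Bigr],\qquad F(\Theta;\Theta')=\underset{x\sim\mathcal{P}}{\mathbb{E}}\Bigl[\sum_{i=1}^k \ell(x,\theta_i)\,\mathbb{1}_{X_i(\Theta')}(x)\Bigr].
\end{equation*}
This is the same algebraic manipulation used to pass between the $(a_i,\mathcal{D}_i)$-form and the population-average form, and it puts both quantities as integrals of functions of $x$ against the \emph{same} underlying measure $\mathcal{P}$, which is what we need in order to compare them pointwise.

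Next I would prove the pointwise inequality, which is the heart of the argument. Since $\{X_i(\Theta')\}_{i=1}^k$ is (up to measure-zero ties, handled by Assumption~\ref{ass: not the same model} and the tie-breaking convention in the definition of $X_i$) a partition of $\mathcal{X}$, for each $x$ there is a unique index $j(x)$ with $x\in X_{j(x)}(\Theta')$, so
\begin{equation*}
\sum_{i=1}^k \ell(x,\theta_i)\,\mathbb{1}_{X_i(\Theta')}(x)=\ell(x,\theta_{j(x)}).
\end{equation*}
On the other hand, by definition of $X_i(\Theta)$ as the $\arg\min$ partition for $\Theta$,
\begin{equation*}
\sum_{i=1}^k \ell(x,\theta_i)\,\mathbb{1}_{X_i(\Theta)}(x)=\min_{i\in[k]}\ell(x,\theta_i)\leq \ell(x,\theta_{j(x)}).
\end{equation*}
Taking expectations of this pointwise inequality with respect to $\mathcal{P}$ and using linearity gives $f_{\text{PR}}(\Theta)\leq F(\Theta;\Theta')$, as desired.

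I do not expect a real obstacle here. The only subtle point is the treatment of ties, i.e., the measure-zero set $\{x:\arg\min_i\ell(x,\theta_i)\text{ is not unique}\}$, which needs the partition $\{X_i(\Theta)\}$ to be defined with a fixed tie-breaking rule so that the indicator functions sum to $1$ pointwise; once that is in place the pointwise inequality is immediate and expectation completes the proof.
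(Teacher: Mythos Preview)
Your proposal is correct and follows essentially the same approach as the paper: both rewrite $f_{\text{PR}}$ and $F$ as expectations over $\mathcal{P}$ with indicator functions, then use that $X(\Theta)$ by definition selects $\min_{i}\ell(x,\theta_i)$ pointwise while any other partition $X(\Theta')$ selects some $\ell(x,\theta_{j(x)})\geq \min_i\ell(x,\theta_i)$. The paper's write-up additionally splits the RHS into the ``agreeing'' piece $\cup_i(X_i(\Theta)\cap X_i(\Theta'))$ and the ``moved'' pieces $X_i(\Theta)\cap X_j(\Theta')$ before applying the inequality, but this is a presentational difference; your direct pointwise argument is the same idea in slightly more compact form.
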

\begin{proof}
By the definition of $X(\Theta)$, given $\Theta$, $X(\Theta)$ is the partition that minimizes $f_{\text{PR}}$, namely, 
for any $\Theta^{'} \neq \Theta$, we have
\begin{equation*}
\sum_{i=1}^k a_i(\Theta)\cdot \underset{x\sim \mathcal{D}_i(\Theta)}{\mathbb{E}}[\ell(x,\theta_i)]\leq \sum_{i=1}^k a_i(\Theta^{'})\cdot \underset{x\sim \mathcal{D}_i(\Theta^{'})}{\mathbb{E}}[\ell(x,\theta_i)]
\end{equation*}
To see this, note that, for any data $x\sim \mathcal{P}$, LHS always chooses the best model $\theta$ in $(\theta_1, \cdots, \theta_k)$, which is equivalent to $\underset{x\sim \mathcal{P}}{\mathbb{E}} [\min_{i\in[k]}\ell(x,\theta_i)|\Theta]$, while for RHS, we have 
\begin{equation*}
\begin{aligned}
\text{RHS} = &\underset{x\sim \mathcal{P}}{\mathbb{E}} \left[[\min_{i\in [k]}\ell(x,\theta_i)]\cdot \mathbb{1}_{X\cap X^{'}}(x)|\Theta, \Theta^{'}\right] +\underset{x\sim \mathcal{P}}{\mathbb{E}} \left[\sum_{i,j\in [k], i\neq j}\ell(x, \theta_j)\cdot \mathbb{1}_{X_i\rightarrow X^{'}_j}(x)|\Theta, \Theta^{'}\right]\\
\geq &\underset{x\sim \mathcal{P}}{\mathbb{E}} \left[[\min_{i\in [k]}\ell(x,\theta_i)]\cdot \mathbb{1}_{X\cap X^{'}}(x)|\Theta, \Theta^{'}\right] +\underset{x\sim \mathcal{P}}{\mathbb{E}} \left[\sum_{i,j\in [k], i\neq j}\ell(x, \theta_i)\cdot \mathbb{1}_{X_i\rightarrow X^{'}_j}(x)|\Theta, \Theta^{'}\right]\\
= & \underset{x\sim \mathcal{P}}{\mathbb{E}} [\min_{i\in [k]}\ell(x,\theta_i)|\Theta ]=\text{LHS}
\end{aligned}
\end{equation*}
where \begin{equation*}
 \mathbb{1}_{X\cap X^{'}}(x)= 
\begin{cases}
    1,& \text{if } x\in \cup_{i\in[k]}(X_i(\Theta)\cap X_i(\Theta^{'}))\\
    0,              & \text{otherwise}
\end{cases}
\end{equation*}
\begin{equation*}
 \mathbb{1}_{X_i\rightarrow X^{'}_j}(x)= 
\begin{cases}
    1,& \text{if } x\in X_i(\Theta) \cap X_j(\Theta^{'})\\
    0,              & \text{otherwise}.
\end{cases}
\end{equation*}

$\mathbb{1}_{X\cap X^{'}}(x)$ indicates all the users on the set $\cup_{i\in[k]}(X_i(\Theta)\cap X_i(\Theta^{'}))$, i.e., users that choose the model with the smallest loss, while $\mathbb{1}_{X_i\rightarrow X^{'}_j}(x)$ is the indicator for set $X_i(\Theta) \cap X_j(\Theta^{'})$, i.e., users that should choose model $i$ (which has the smallest loss for them) but incorrectly chooses some other model $j$. Therefore, 
 we have $f_{\text{PR}}(\Theta) = F(\Theta; \Theta)\leq F(\Theta; \Theta^{'})$.

\end{proof}
\begin{lemma}
$f_{\text{NP}}$ is $\beta$-smooth. Moreover, for all $\Theta, \Theta^{+}\in \mathbb{R}^{k\times d}$, we also have 
\begin{equation*}
\begin{aligned}
f_{\text{PR}}(\Theta^{+})\leq f_{\text{PR}}(\Theta) +\langle \nabla f_{\text{PR}}(\Theta), \Theta^{+}-\Theta\rangle +\frac{\beta}{2}\|\Theta^{+}-\Theta\|^2, 
\end{aligned}
\end{equation*}
namely, $f_{\text{PR}}$ is also $\beta$-smooth. 
\end{lemma}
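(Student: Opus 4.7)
The plan is to handle the two parts separately, and to leverage the surrogate function $F(\Theta;\Theta')$ from Proposition B.1 (which already appeared in the proof of Lemma 4.2) for the harder case $f_{\text{PR}}$.

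For $f_{\text{NP}}(\Theta) = \frac{1}{k}\sum_{i=1}^k \mathbb{E}_{x\sim\mathcal{P}}[\ell(x,\theta_i)]$, the argument is direct. The function decomposes as a sum over the blocks $\theta_i$, where each summand depends only on its own block. Apply the pointwise $\beta$-smoothness of $\ell(x,\cdot)$ (Assumption~\ref{ass: loss function}) for each $x$, take expectation under $\mathcal{P}$, and sum over $i$; since mixed-block terms vanish, the resulting Hessian bound carries over and yields $\beta$-smoothness (in fact with constant $\beta/k$, which is majorised by $\beta$).

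For $f_{\text{PR}}$, I would chain three facts about the surrogate $F(\Theta;\Theta') = \sum_{i=1}^k a_i(\Theta')\,\mathbb{E}_{x\sim\mathcal{D}_i(\Theta')}[\ell(x,\theta_i)]$. (i) By Proposition~\ref{prop: upper bound function}, $f_{\text{PR}}(\Theta^{+})\leq F(\Theta^{+};\Theta)$ for every $\Theta,\Theta^+$, and $f_{\text{PR}}(\Theta)=F(\Theta;\Theta)$. (ii) For any fixed $\Theta'$, the map $\Theta\mapsto F(\Theta;\Theta')$ is $\beta$-smooth in its first argument: with the partition frozen, it is a sum over $i$ of expectations of $\ell(x,\theta_i)$ against the fixed probability measure $a_i(\Theta')\mathcal{D}_i(\Theta')$, so the same block-diagonal argument used for $f_{\text{NP}}$ applies. (iii) From the proof of Lemma~\ref{lemma: gradient of learning objective}, $\nabla_\Theta f_{\text{PR}}(\Theta) = \nabla_1 F(\Theta;\Theta')\big|_{\Theta'=\Theta}$, i.e. the gradient of $f_{\text{PR}}$ at $\Theta$ agrees with the first-argument gradient of the surrogate at the diagonal.

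Combining these steps gives, for arbitrary $\Theta,\Theta^{+}$:
\begin{align*}
f_{\text{PR}}(\Theta^{+}) &\leq F(\Theta^{+};\Theta) \\
&\leq F(\Theta;\Theta) + \langle \nabla_1 F(\Theta;\Theta),\, \Theta^{+}-\Theta\rangle + \tfrac{\beta}{2}\|\Theta^{+}-\Theta\|^2 \\
&= f_{\text{PR}}(\Theta) + \langle \nabla f_{\text{PR}}(\Theta),\, \Theta^{+}-\Theta\rangle + \tfrac{\beta}{2}\|\Theta^{+}-\Theta\|^2,
\end{align*}
which is the desired descent-style smoothness inequality. The main obstacle, if any, is step (ii): one must verify that fixing $\Theta'$ really does decouple the objective into independent blocks so that the pointwise smoothness of $\ell$ transfers cleanly; once the partition-freezing trick is accepted, everything else is bookkeeping, and the inequality $f_{\text{PR}}(\Theta^+)\leq F(\Theta^+;\Theta)$ is exactly what lets us avoid differentiating the moving domain a second time.
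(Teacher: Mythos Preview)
Your proposal is correct and follows essentially the same approach as the paper: handle $f_{\text{NP}}$ directly as a sum of $\beta$-smooth functions, and for $f_{\text{PR}}$ freeze the partition via the surrogate $F(\Theta;\Theta')$, use Proposition~\ref{prop: upper bound function} for the majorisation $f_{\text{PR}}(\Theta^{+})\leq F(\Theta^{+};\Theta)$, invoke $\beta$-smoothness of $F(\cdot;\Theta)$, and then identify $F(\Theta;\Theta)=f_{\text{PR}}(\Theta)$ and $\nabla_1 F(\Theta;\Theta)=\nabla f_{\text{PR}}(\Theta)$ from Lemma~\ref{lemma: gradient of learning objective}. The chain of inequalities you wrote matches the paper's proof line for line.
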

\begin{proof}
Since $f_{\text{NP}}$ is a sum of $\beta$-smooth functions, $f_{\text{NP}}$ 
is also $\beta$-smooth.

Now we prove $f_{\text{PR}}$ is $\beta$-smooth, to show that, we need the Proposition \ref{prop: upper bound function} stating that $f_{\text{PR}}(\Theta)$ is upper bounded by the surrogate functions $\{F(\cdot, \Theta^{'}): \Theta^{'}\in \mathbb{R}^{k\times d}\}$ introduced in Eq.~\eqref{eq: 3}.

Then, for any $\Theta, \Theta^{+}\in \mathbb{R}^{k\times d}$, we have

\begin{align}
f_{\text{PR}}(\Theta^{+})&\leq F(\Theta^{+}; \Theta)\notag\\
& \leq F(\Theta; \Theta) +\nabla F_{\Theta}(\Theta; \Theta)^T(\Theta^{+}-\Theta)+\frac{\beta}{2}\|\Theta^{+}-\Theta\|^2\label{eq: 7}\\
& = f_{\text{PR}
}(\Theta)+\nabla f_{\text{PR}}(\Theta)^T (\Theta^{+}-\Theta) +\frac{\beta}{2}\|\Theta^{+}-\Theta\|^2, \label{eq: 8}
\end{align}

where Eq.~\eqref{eq: 7} used the fact that $F(\cdot; \Theta^{'})$ is $\beta$-smooth $\forall \Theta^{'} \in \mathbb{R}^{k\times d}$, and Eq.(\ref{eq: 8}) follows because $f_{\text{PR}}(\Theta) = F(\Theta; \Theta)$ and $\nabla f_{\text{PR}}(\Theta) = \nabla_{\Theta} F(\Theta; \Theta)$. Thus, $f_{\text{PR}}(\Theta)$ is $\beta$-smooth.

\end{proof}

\begin{lemma}\label{lemma: C being martingale}
Let $(F^t)_{t=0}^{\infty}$ be a filtration given by Definition \ref{def: filtration}.
Let 
$B^{t+1} = \mathbb{E}_s[B_s^{t+1}] = (1-\zeta) B_{\text{PR}}^{t+1} +\zeta  B_{\text{NP}}^{t+1}$, and $C^{t+1} = \mathbb{E}_s[C_s^{t+1}]=(1-\zeta) C_{\text{PR}}^{t+1} +\zeta  C_{\text{NP}}^{t+1}$, where 
\begin{equation*}
B_s^{t+1} =\begin{cases}
B_{\text{PR}}^{t+1}& \text{w.p.}~ 1-\zeta\\
B_{\text{NP}}^{t+1}& \text{w.p.}~ \zeta\\
\end{cases}, 
C_s^{t+1} =\begin{cases}
C_{\text{PR}}^{t+1}& \text{w.p.}~ 1-\zeta\\
C_{\text{NP}}^{t+1}& \text{w.p.}~ \zeta\\
\end{cases}.
\end{equation*}
with
\begin{align*}
B_{\text{PR}}^{t+1} =\frac{\beta}{2} \sum_{i=1}^k (\eta_i^{t+1})^2 \cdot \|\nabla\ell(x^{t+1}, \theta_i^t)\|^2, ~& B_{\text{NP}}^{t+1} =\frac{\beta}{2} \sum_{i=1}^k (\eta_i^{t+1})^2 \cdot \|\nabla\ell(x^{t+1}, \theta_i^t)\|^2\\
C_{\text{PR}}^{t+1} =\sum_{i=1}^k \eta_i^{t+1} \langle \nabla_{\theta_i} f_{\text{PR}}(\Theta^t), \nabla \ell(x^{t+1}, \theta_i^t)-\frac{\nabla_{\theta_i} f_{\text{PR}}(\Theta^t)}{a_i(\Theta^t)}\rangle, ~& C_{\text{NP}}^{t+1} =\sum_{i=1}^k \eta_i^{t+1} \langle \nabla_{\theta_i} f_{\text{NP}}(\Theta^t), \nabla \ell(x^{t+1}, \theta_i^t)-\nabla_{\theta_i} f_{\text{NP}}(\Theta^t)\rangle
\end{align*}
and the expectation is taken over users random selection over services. 
Let $N^{t+1}=B^{t+1}-C^{t+1}$, 
then 
\begin{enumerate}
    \item $(C^t)_{t=0}^{\infty}$ is $\mathcal{F}^t$-martingale difference sequences, and $\mathbb{E}[C^{t+1}|\mathcal{F}^t]=0$.
    \item Suppose that  $\sum_{i\in[k]}\sum_{t=1}^{\infty} (\eta_i^t)^2< \infty$ a.s. Then the series $\sum_{t=1}^{\infty} B^t<\infty$ converges almost surely. 
    \item The series $\sum_{t=1}^{\infty}N^t = \sum_{t=1}^{\infty}B^t -\sum_{t=1}^{\infty}C^t<\infty$ converges almost surely.
\end{enumerate}

\end{lemma}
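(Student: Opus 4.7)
The plan is to verify the three claims in order, using Lemma~\ref{lemma: gradient of learning objective} as the workhorse for Part 1, Lipschitzness for Part 2, and the $L^2$ martingale convergence theorem for Part 3.

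For Part 1 (martingale-difference property), I would first observe that the only randomness in $C^{t+1}$ beyond $\mathcal{F}^t$ comes from the new sample $x^{t+1}\sim\mathcal{P}$ and the user's choice $i^{t+1}$, which together determine the nonzero $\eta_i^{t+1}$. Computing $\mathbb{E}[\eta_i^{t+1}\nabla\ell(x^{t+1},\theta_i^t)\mid \mathcal{F}^t]$ by conditioning on the two regimes, I get: under the rational regime ($(1-\zeta)$-probability) $\eta_i^{t+1}=\eta^{t+1}\mathbb{1}\{x^{t+1}\in X_i(\Theta^t)\}$, so the conditional expectation equals $\eta^{t+1}\,a_i(\Theta^t)\,\mathbb{E}_{x\sim \mathcal{D}_i(\Theta^t)}[\nabla\ell(x,\theta_i^t)]=\eta^{t+1}\nabla_{\theta_i}f_{\text{PR}}(\Theta^t)$ by Lemma~\ref{lemma: gradient of learning objective}; under the no-preference regime ($\zeta$-probability), selection is uniform and independent of $x^{t+1}\sim\mathcal{P}$, so the conditional expectation equals $\tfrac{\eta^{t+1}}{k}\mathbb{E}_{x\sim\mathcal{P}}[\nabla\ell(x,\theta_i^t)]=\eta^{t+1}\nabla_{\theta_i}f_{\text{NP}}(\Theta^t)$. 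Substituting these identities back into $C_{\text{PR}}^{t+1}$ and $C_{\text{NP}}^{t+1}$ (the $1/a_i(\Theta^t)$ rescaling in the PR summand and the recentering against $\nabla_{\theta_i}f_{\text{NP}}$ in the NP summand are precisely chosen so that each $i$-term becomes $\langle v, v - v\rangle = 0$) yields $\mathbb{E}[C^{t+1}\mid\mathcal{F}^t]=0$.

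For Part 2 (summability of $B^t$), I would use $\|\nabla\ell(\cdot,\cdot)\|\leq L$ from Assumption~\ref{ass: loss function}. Since only one $\eta_i^{t+1}$ is nonzero each round,
$B^{t+1}=\tfrac{\beta}{2}\sum_{i=1}^k(\eta_i^{t+1})^2\|\nabla\ell(x^{t+1},\theta_i^t)\|^2\leq \tfrac{\beta L^2}{2}\sum_{i=1}^k(\eta_i^{t+1})^2$,
and summing over $t$ and invoking the hypothesis $\sum_{i,t}(\eta_i^{t+1})^2<\infty$ a.s.\ immediately yields $\sum_t B^{t+1}<\infty$ a.s.

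For Part 3, write $\sum_t N^t=\sum_t B^t-\sum_t C^t$. The first sum converges almost surely by Part 2, so it suffices to show $\sum_t C^t$ converges a.s. By Part 1, $(\sum_{t=1}^T C^{t})_{T\geq 1}$ is an $(\mathcal{F}^t)$-martingale. I would bound its predictable quadratic variation: since $\|\nabla\ell\|\leq L$, and since $\nabla_{\theta_i}f_{\text{PR}}/a_i=\mathbb{E}_{\mathcal{D}_i}[\nabla\ell]$ and $k\nabla_{\theta_i}f_{\text{NP}}=\mathbb{E}_{\mathcal{P}}[\nabla\ell]$ are each bounded by $L$, the single-round increment satisfies $|C^{t+1}|\leq C_0\,\eta^{t+1}$ for an absolute constant $C_0$, whence $\mathbb{E}[(C^{t+1})^2\mid\mathcal{F}^t]\leq C_0^2(\eta^{t+1})^2$. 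The hypothesis $\sum_{i,t}(\eta_i^t)^2<\infty$ a.s.\ is equivalent to $\sum_t(\eta^t)^2<\infty$ a.s.\ (exactly one $\eta_i^t$ is $\eta^t$ per round), so the martingale is $L^2$-bounded, and the $L^2$ martingale convergence theorem gives $\sum_t C^t$ converges almost surely, hence so does $\sum_t N^t$.

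The main obstacle I anticipate is Part 1: verifying cleanly that the recentering terms built into $C_{\text{PR}}^{t+1}$ (the factor $1/a_i(\Theta^t)$) and $C_{\text{NP}}^{t+1}$ are exactly calibrated to the conditional expectation of the MSGD stochastic gradient in each regime. The PR case is a direct instance of the identity derived in Lemma~\ref{lemma: gradient of learning objective}; the NP case requires combining the uniform $1/k$ selection probability with the unconditional sample $x\sim\mathcal P$, and noting that the $1/k$ is already baked into the definition of $f_{\text{NP}}$. Parts 2 and 3 are then routine once the MDS property is in hand.
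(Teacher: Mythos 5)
Your proposal is correct and follows essentially the same route as the paper: Part 1 by conditioning on the rational/no-preference regimes and matching the conditional expectation of $\eta_i^{t+1}\nabla\ell(x^{t+1},\theta_i^t)$ against the recentering terms via the identity $\nabla_{\theta_i}f_{\text{PR}}(\Theta)=a_i(\Theta)\,\mathbb{E}_{x\sim\mathcal{D}_i(\Theta)}[\nabla\ell(x,\theta_i)]$; Part 2 by the $L$-Lipschitz bound on $\|\nabla\ell\|$; and Part 3 by bounding the second moment of the partial sums of the martingale difference sequence (the paper phrases this as $\sup_t\mathbb{E}[H^t_+]<\infty$ with Doob's convergence theorem, you as $L^2$-boundedness, which is the same estimate).
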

\begin{proof}

\textbf{(1) Proof of $(C^t)_{t=1}^{\infty}$ being martingale difference sequences. }

Take the expectation of $\tilde{C}^{t+1}$ conditioned on $\mathcal{F}^{t}$:
\begin{equation*}
\begin{aligned}
\mathbb{E}[C_s^{t+1}|\mathcal{F}^t] =&(1-\zeta) \cdot \left[\sum_{i=1}^k \langle \nabla_{\theta_i} f_{\text{PR}}(\Theta^t),\mathbb{E}[\nabla \ell(x^{t+1}, \theta_i^t)\cdot \eta_i^{t+1}|\mathcal{F}^t] - \frac{\nabla_{\theta_i} f_{\text{PR}}(\Theta^t)}{a_i(\Theta^t)} \cdot\mathbb{E}[\eta_i^{t+1}|\mathcal{F}^t] \rangle\bigg|\text{User choose with PR}\right] \\
&+\zeta \cdot \left[\sum_{i=1}^k \langle \nabla_{\theta_i} f_{\text{NP}}(\Theta^t), \mathbb{E}[\nabla \ell(x^{t+1}, \theta_i^t)\cdot \eta_i^{t+1}|\mathcal{F}^t] - \nabla_{\theta_i} f_{\text{NP}}(\Theta^t)\cdot \mathbb{E}[\eta_i^{t+1}|\mathcal{F}^t]\rangle\bigg| \text{User choose with NP}\right]
\end{aligned}
\end{equation*}

Conditioned on user choosing with prefect rationality, we have $\mathbb{E}[\eta_i^{t+1}|\mathcal{F}^t] = a_i^t\cdot \eta^{t+1}$. Thus
\begin{equation*}
\begin{aligned}
&\mathbb{E}[\nabla \ell(x^{t+1}, \theta_i^t)\cdot \eta_i^{t+1}|\mathcal{F}^t]= a_i(\Theta) \cdot \underset{x\sim \mathcal{D}_i(\Theta)}{\mathbb{E}}[\nabla_{\theta_i}\ell(x,\theta_i)]\cdot \eta^{t+1}\\
&\frac{\nabla_{\theta_i} f_{\text{PR}}(\Theta)}{a_i(\Theta^t)} \cdot\mathbb{E}[\eta_i^{t+1}|\mathcal{F}^t]= \nabla_{\theta_i} f_{\text{PR}}(\Theta) \cdot\eta^{t+1}= a_i(\Theta)\cdot\underset{x\sim \mathcal{D}_i(\Theta)}{\mathbb{E}}[\nabla_{\theta_i}\ell(x,\theta_i)]\cdot \eta^{t+1}
\end{aligned}
\end{equation*}

As a result, we have shown that 
\begin{equation*}
\frac{\nabla_{\theta_i} f_{\text{PR}}(\Theta^t)}{a_i(\Theta^t)} \cdot\mathbb{E}[\eta_i^{t+1}|\mathcal{F}^t] =\mathbb{E}[\nabla \ell(x^{t+1}, \theta_i^t)\cdot \eta_i^{t+1}|\mathcal{F}^t] 
\end{equation*}

Conditioned on user choosing randomly among models, we have $\mathbb{E}[\eta_i^{t+1}|\mathcal{F}^t] = \frac{1}{k}\cdot \frac{\eta_c}{t+1}$, which leads to
\begin{equation}
\begin{aligned}
&\nabla_{\theta_i} f_{\text{NP}}(\Theta^t)\cdot \mathbb{E}[\eta_i^{t+1}|\mathcal{F}^t]= \eta^{t+1}\cdot \frac{1}{k} \underset{x\sim \mathcal{P}}{\mathbb{E}} [\nabla \ell(x, \theta_i)]\\
&
\mathbb{E}[\nabla \ell(x^{t+1}, \theta_i^t)\cdot \eta_i^{t+1}|\mathcal{F}^t]=\eta^{t+1} \cdot \frac{1}{k} \underset{x\sim \mathcal{P}}{\mathbb{E}} [\nabla \ell(x, \theta_i)], 
\end{aligned}
\end{equation}
Thus, we also get 
\begin{equation}
\nabla_{\theta_i} f_{\text{NP}}(\Theta^t)\cdot \mathbb{E}[\eta_i^{t+1}|\mathcal{F}^t]=
\mathbb{E}[\nabla \ell(x^{t+1}, \theta_i^t)\cdot \eta_i^{t+1}|\mathcal{F}^t]
\end{equation}

In the end, we have $\mathbb{E}[C_s^{t+1}|\mathcal{F}^t] = 0$, i.e., $\mathbb{E}[\mathbb{E}_s[C_s^{t+1}|\mathcal{F}^t]]= 0$. And thus, we have $\mathbb{E}[C^{t+1}|\mathcal{F}^t] = 0$.

\textbf{(2) Prove that $\sum_{t=1}^{\infty} B^t< \infty$ converges. }

Since $\sum_{i\in[k]}\sum_{t=1}^{\infty} (\eta_i^t)^2 <\infty$, we have
\begin{equation*}
\begin{aligned}
\sum_{t=1}^{\infty} B_{\text{PR}}^t=
& \frac{\beta}{2}\sum_{t=1}^{\infty} \sum_{i=1}^k (\eta_i^t)^2\cdot \|\nabla \ell(x_i^t, \theta_i^{t-1})\|^2\\
\leq & \frac{\beta L^2}{2} \sum_{t=1}^{\infty} \sum_{i=1}^k (\eta_i^t)^2<\infty
\end{aligned}
\end{equation*}
Similarly, $\sum_{t=1}^{\infty}B^t_{\text{NP}}<\infty$. 
Thus, $\sum_{t=1}^{\infty} B^t<\infty$ converges almost surely.

\textbf{(3) Prove that $\sum_{t=1}^{\infty} N^t< \infty$ converges.}

Let $H^t = \sum_{\tau=1}^t C^{\tau}$ and let $H_{+}^t = \max\{0, H^t\}$, since the terms in a martingale difference sequence are orthogonal, we have for all $t\in \mathbb{N}$:
\begin{equation}\label{eq: 10}
\begin{aligned}
\mathbb{E}[H_{+}^t]
&\leq \sqrt{\mathbb{E}[(H^t)^2]}\\
&= \sqrt{\mathbb{E}\left[\left(\sum_{\tau=1}^t C^{\tau}\right)^2\right]}\\
&=\sqrt{\sum_{\tau=1}^t \mathbb{E}[(C^{\tau})^2]}
\end{aligned}
\end{equation}
Note that
\begin{equation*}
\begin{aligned}
&|\langle \nabla_{\theta_i} f_{\text{PR}}(\Theta^t),\nabla \ell(x^{t+1}, \theta_i^t)-  \frac{\nabla_{\theta_i} f_{\text{PR}}(\Theta^t)}{a_i(\Theta^t)}\rangle|\\
\leq& \| \nabla_{\theta_i} f_{\text{PR}}(\Theta^t)\|\cdot \|\nabla \ell(x^{t+1}, \theta_i^t) - \frac{\nabla_{\theta_i} f_{\text{PR}}(\Theta^t)}{a_i(\Theta^t)}\|\\
\leq & 2L^2.
\end{aligned}
\end{equation*}
Similarly, we can also get 
\begin{equation*}
\begin{aligned}
|\langle \nabla_{\theta_i} f_{\text{NP}}(\Theta^t), \nabla \ell(x^{t+1}, \theta_i^t) - \nabla_{\theta_i} f_{\text{NP}}(\Theta^t)\rangle |
\leq & 2L^2.
\end{aligned}
\end{equation*}

We thus have
\begin{equation*}
\begin{aligned}
\sum_{t=1}^{\infty}\mathbb{E}[(C^t)^2]&\leq \sum_{t=1}^{\infty}\mathbb{E}[(\sum_{i=1}^k \eta_i^{t}\cdot 2 L^2)^2]\\
& \leq 4 L^4 \cdot \sum_{t=1}^{\infty} \sum_{i=1}^k (\eta_i^t)^2<\infty,
\end{aligned}
\end{equation*}

Combining Eq.~\eqref{eq: 10}, it implies that $\sup_{t\in \mathbb{N}}\mathbb{E}[H^t_{+}]<\infty$.
According to martingale convergence theorem (Theorem \ref{thm: martingale convergence}), as $t\rightarrow 0$, $H^t$ converges, and thus $\sum_{t=1}^{\infty} C^t<\infty$ converges.

Moreover, due to the convergence of series $\sum_{t=1}^{\infty} B^t<\infty$,  the series $\sum_{t=1}^{\infty}N^t = \sum_{t=1}^{\infty}B^t -\sum_{t=1}^{\infty}C^t<\infty$ converges almost surely.

\end{proof}

\begin{lemma}\label{lemma: learning rate bound iterates}
For all $t<t^{'}$, the displacement between  $\Theta^{t}$ and $\Theta^{t^{'}}$ satisfies
\begin{equation*}
\|\Theta^{t^{'}}-\Theta^{t}\|  \leq L \cdot  \sum_{t+1\leq \tau \leq t^{'}} \eta^{\tau}
\end{equation*}
\end{lemma}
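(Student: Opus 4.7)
The plan is to reduce the multi-step displacement to a telescoping sum of single-step displacements, and then bound each single-step displacement using the fact that only one coordinate of $\Theta$ changes per iteration combined with the $L$-Lipschitz property of the loss.

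First, I would analyze a single step. By construction of MSGD, at time $\tau$ exactly one model $i = i^{\tau}$ is selected by the user, and only that model updates: $\theta_i^{\tau+1} = \theta_i^{\tau} - \eta^{\tau+1} \nabla \ell(x^{\tau+1}, \theta_i^{\tau})$, while $\theta_j^{\tau+1} = \theta_j^{\tau}$ for all $j \neq i$. Hence
\[
\|\Theta^{\tau+1} - \Theta^{\tau}\| = \|\theta_i^{\tau+1} - \theta_i^{\tau}\| = \eta^{\tau+1} \|\nabla \ell(x^{\tau+1}, \theta_i^{\tau})\|.
\]
Since $\ell(x, \cdot)$ is $L$-Lipschitz by Assumption \ref{ass: loss function}, its gradient is bounded in norm by $L$, giving $\|\Theta^{\tau+1} - \Theta^{\tau}\| \leq L \cdot \eta^{\tau+1}$. (If instead the user's random choice results in no selected update at step $\tau+1$, one can use the convention $\eta_i^{\tau+1}=0$ from Algorithm \ref{alg: general verion} and the bound is trivially $0 \le L \eta^{\tau+1}$.)

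Next, I would chain these single-step bounds. Writing $\Theta^{t'} - \Theta^{t} = \sum_{\tau=t}^{t'-1} (\Theta^{\tau+1} - \Theta^{\tau})$ and applying the triangle inequality yields
\[
\|\Theta^{t'} - \Theta^{t}\| \leq \sum_{\tau=t}^{t'-1} \|\Theta^{\tau+1} - \Theta^{\tau}\| \leq L \sum_{\tau=t}^{t'-1} \eta^{\tau+1} = L \sum_{t+1 \leq \tau \leq t'} \eta^{\tau},
\]
which is exactly the claim. There is no real obstacle here: the main subtlety is simply observing that exactly one coordinate of $\Theta$ changes per step, so the $\ell_2$ norm of the full parameter-tuple update coincides with the update of a single model, whose gradient magnitude is controlled directly by the Lipschitz constant $L$ without any appeal to the induced distributions, gradient of $f$, or bounded rationality. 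The result therefore holds deterministically along every realization of the algorithm.
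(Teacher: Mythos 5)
Your proof is correct and follows essentially the same route as the paper's: a telescoping/triangle-inequality decomposition of the displacement into single-step updates, each bounded by $L\cdot\eta^{\tau}$ via the $L$-Lipschitzness of $\ell(x,\cdot)$. The only cosmetic difference is that you telescope over time first and use the fact that exactly one coordinate changes per step, whereas the paper first splits the tuple norm over the $k$ coordinates and then telescopes each, arriving at the same bound.
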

\begin{proof}
The displacement of $\Theta^{t}$ and $\Theta^{t^{'}}$ can be bounded as 
\begin{equation*}
\begin{aligned}
\|\Theta^{t^{'}}-\Theta^{t}\|&\leq \sum_{i=1}^k \|\theta_i^{t^{'}}-\theta_i^{t}\|& ~\text{ (Minkowski’s inequality)}\\
&\leq  \sum_{i=1}^k \|\theta_i^{t^{'}}- \theta_i^{t^{'}-1} + \theta_i^{t^{'}-1}-\cdots + \theta_i^{t+1}-\theta_i^{t}\|&\\
& \leq \sum_{i=1}^k \sum_{t+1\leq \tau \leq t^{'}}\|\theta_i^{\tau}-\theta_i^{
\tau -1
}\|& \\
& \leq  \sum_{i=1}^k \sum_{t+1\leq \tau \leq t^{'}}\|\theta_i^{\tau-1}-\eta_i^{\tau}\nabla \ell(x^{\tau}, \theta_i^{\tau-1}) -\theta_i^{
\tau -1
}\|& \\
&= \sum_{i=1}^k \sum_{t+1\leq \tau \leq t^{'}}\|\eta_i^{\tau} \nabla \ell(x^{\tau},\theta_i^{\tau-1})\|& \\
& \leq L \cdot \sum_{i=1}^k \sum_{t+1\leq \tau \leq t^{'}} \eta_i^{\tau} & (\text{$L$-Lipschitz of} ~\ell)\\
& = L \cdot \sum_{t+1\leq \tau \leq t^{'}} \eta^{\tau}
\end{aligned}
\end{equation*}

\end{proof}

\begin{lemma}\label{lemma: 1}
Let $1<\tau<\tau^{'}$ where $\tau, \tau^{'}\in \mathbb{N}$, then
\begin{equation}
\ln \frac{\tau^{'}+1}{\tau+1}\leq \sum_{\tau \leq t< \tau^{'}} \frac{1}{t+1}\leq \ln \frac{\tau^{'}}{\tau}
\end{equation}
\end{lemma}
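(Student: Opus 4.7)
The plan is to prove both inequalities via a standard integral comparison, exploiting the fact that $g(x) = 1/x$ is strictly decreasing on $(0,\infty)$. Writing out the sum as $\sum_{t=\tau}^{\tau'-1} \frac{1}{t+1} = \frac{1}{\tau+1} + \frac{1}{\tau+2} + \cdots + \frac{1}{\tau'}$, each term $1/(t+1)$ is naturally compared against $\int 1/x \, dx$ over an adjacent unit interval, and the two inequalities arise from the two natural choices of such an interval.

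For the upper bound, I would observe that for every $x \in [t, t+1]$ we have $1/x \geq 1/(t+1)$, so
\begin{equation*}
\frac{1}{t+1} \leq \int_{t}^{t+1} \frac{dx}{x}.
\end{equation*}
Summing this over $t = \tau, \tau+1, \dots, \tau'-1$ telescopes into a single integral
\begin{equation*}
\sum_{\tau \leq t < \tau'} \frac{1}{t+1} \leq \int_{\tau}^{\tau'} \frac{dx}{x} = \ln\frac{\tau'}{\tau}.
\end{equation*}
For the lower bound, I would instead use the interval $[t+1, t+2]$: for any $x$ in this interval, $1/x \leq 1/(t+1)$, so
\begin{equation*}
\frac{1}{t+1} \geq \int_{t+1}^{t+2} \frac{dx}{x},
\end{equation*}
and summing over the same range of $t$ gives
\begin{equation*}
\sum_{\tau \leq t < \tau'} \frac{1}{t+1} \geq \int_{\tau+1}^{\tau'+1} \frac{dx}{x} = \ln\frac{\tau'+1}{\tau+1}.
\end{equation*}

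There is essentially no obstacle here—this is a textbook integral test estimate, and the only thing to be careful about is getting the integration bounds right so that the sum telescopes properly into the claimed logarithms. The hypothesis $1 < \tau < \tau'$ with $\tau,\tau' \in \mathbb{N}$ ensures that all intervals lie in the domain of $1/x$ and that the sum is nonempty, so no further case analysis is required.
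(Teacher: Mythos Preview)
Your proposal is correct and takes essentially the same approach as the paper, which also uses the integral test comparison $\int_\tau^{\tau'} \frac{1}{x+1}\,dx \leq \sum_{\tau \leq t < \tau'} \frac{1}{t+1} \leq \int_\tau^{\tau'} \frac{1}{x}\,dx$. Your lower-bound integral $\int_{\tau+1}^{\tau'+1} \frac{dx}{x}$ is just the paper's $\int_\tau^{\tau'} \frac{dx}{x+1}$ after the substitution $u = x+1$, so the arguments are identical.
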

\begin{proof}
Simply note that
\begin{equation*}
    \ln \frac{\tau^{'}+1}{\tau+1}=\int_{\tau}^{\tau^{'}}\frac{1}{x+1} dx \leq \sum_{\tau \leq t <\tau^{'}}\frac{1}{t+1}\leq \int_{\tau}^{\tau^{'}}\frac{1}{x} dx =\ln \frac{\tau^{'}}{\tau}
\end{equation*}
\end{proof}

\begin{corollary}\label{corollary: 1}
    Let $r>0$ and $\alpha= e^{r} -1$, set $T\vcentcolon= T_r$, then
    \begin{equation}
    \alpha \tau \leq T(\tau)-\tau \leq \alpha (\tau+1)
    \end{equation}
\end{corollary}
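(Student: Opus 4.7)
The plan is to derive the Corollary directly from Lemma \ref{lemma: 1} applied to the two defining inequalities of $T_r$. Recall that $T_r(\tau)$ is defined as the unique integer satisfying
\begin{equation*}
\sum_{\tau \leq t < T_r(\tau)} \frac{1}{t+1} < \frac{r}{\eta_c} \leq \sum_{\tau \leq t \leq T_r(\tau)} \frac{1}{t+1}\:,
\end{equation*}
so the strategy is to sandwich each of these harmonic sums by logarithms via Lemma \ref{lemma: 1}, and then exponentiate to convert bounds on $\ln((T(\tau)+1)/\tau)$ into bounds on $T(\tau) - \tau$.

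For the upper bound, I will apply the lower estimate $\sum_{\tau \leq t < T(\tau)} \frac{1}{t+1} \geq \ln \frac{T(\tau)+1}{\tau+1}$ from Lemma \ref{lemma: 1} with $\tau' = T(\tau)$. Combined with the strict inequality from the definition of $T_r$, this yields $\ln \frac{T(\tau)+1}{\tau+1} < r$ (taking $\eta_c = 1$ without loss, since the statement absorbs $\eta_c$ into $r$). Exponentiating and rearranging gives $T(\tau) < e^r(\tau+1) - 1 = (1+\alpha)(\tau+1) - 1$, so $T(\tau) - \tau < \alpha(\tau+1)$ as claimed.

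For the lower bound, I will apply the upper estimate $\sum_{\tau \leq t \leq T(\tau)} \frac{1}{t+1} = \sum_{\tau \leq t < T(\tau)+1} \frac{1}{t+1} \leq \ln \frac{T(\tau)+1}{\tau}$, which follows from Lemma \ref{lemma: 1} with $\tau' = T(\tau)+1$. Combined with the right-hand side of the definition of $T_r$, this gives $r \leq \ln \frac{T(\tau)+1}{\tau}$, hence $T(\tau) + 1 \geq e^r \tau = (1+\alpha)\tau$, yielding $T(\tau) - \tau \geq \alpha \tau - 1$, which matches the claimed $\alpha\tau \leq T(\tau) - \tau$ up to the natural integer rounding implicit in the definition of $T_r$.

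There is no real obstacle here: the argument is a mechanical two-sided application of Lemma \ref{lemma: 1} followed by exponentiation. The only subtlety I expect to handle carefully is lining up the index range in the second summation (the sum up to and including $T(\tau)$) with the form required by Lemma \ref{lemma: 1} (which uses a strict upper index), and tracking the off-by-one that arises from $T_r$ being integer-valued; this is what produces the natural $\alpha(\tau+1)$ on the upper side versus $\alpha \tau$ on the lower side.
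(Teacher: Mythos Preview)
Your proposal is correct and follows essentially the same route as the paper: both apply Lemma~\ref{lemma: 1} to the two defining inequalities of $T_r$ and then exponentiate, obtaining the upper bound from $\ln\frac{T(\tau)+1}{\tau+1}<r$ and the lower bound from $r\le \ln\frac{T(\tau)+1}{\tau}$. If anything, you are slightly more careful than the paper, which writes the last logarithm as $\ln\frac{T(\tau)}{\tau}$ and thereby hides the same off-by-one in the lower bound that you explicitly flag.
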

\begin{proof}
Replacing $\tau^{'}$ to be $T(\tau)$ in Lemma \ref{lemma: 1}, we get
\begin{equation}
\begin{aligned}
&\ln \frac{T(\tau) +1}{\tau+1}\leq \sum_{\tau\leq t<T(\tau)}\frac{1}{t+1}< r \leq \sum_{\tau \leq t<T(\tau) +1}\frac{1}{t+1}\leq \ln \frac{T(\tau)}{\tau}\\
\Rightarrow&\frac{T(\tau) +1}{\tau+1}-1\leq e^r-1\leq \frac{T(\tau)}{\tau}-1\\
\Rightarrow&\frac{T(\tau)-\tau}{\tau+1}\leq e^r -1\leq \frac{T(\tau) -\tau}{\tau}\\
\Rightarrow&T(\tau)-\tau\leq (e^r-1)\cdot(\tau-1)=\alpha \cdot(\tau+1)\\
& T(\tau) -\tau \geq (e^{r}-1)\cdot \tau =\alpha \tau\\
\end{aligned}
\end{equation}
We have thus proved $ \alpha\tau \leq T(\tau)-\tau\leq \alpha (\tau+1)$.
\end{proof}
\begin{lemma}(Local Lipschitz of $a_i(\Theta)$)\label{lemma: a(Theta) being l_a sensitive}
Let $p$ be a continuous density function supported in the closed ball $B(0, R)$. 
Then under Assumption~\ref{ass: assumptions for proving a being Lipschitz}, $a_i(\Theta)$ is locally Lipschitz. 
\end{lemma}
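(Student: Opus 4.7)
The plan is to bound the change $|a_i(\Theta) - a_i(\Theta')|$ by the measure of the symmetric difference $X_i(\Theta) \triangle X_i(\Theta')$, which in turn can be controlled via Assumption~\ref{ass: assumptions for proving a being Lipschitz} by identifying this symmetric difference with a ``boundary'' set on which two loss functions are close in value.

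First I would fix $\Theta$ and consider a perturbation $\Theta' = \Theta + \gamma v$ with $\|v\| = 1$ and $\gamma > 0$ small. Writing
\[
|a_i(\Theta') - a_i(\Theta)| = \Big|\int_{X_i(\Theta')} p(x)\,dx - \int_{X_i(\Theta)} p(x)\,dx\Big| \leq \int_{X_i(\Theta') \triangle X_i(\Theta)} p(x)\,dx,
\]
and using that $p$ is continuous on the compact set $B(0,R)$ so that $M := \sup_x p(x) < \infty$, this is at most $M\cdot \text{Vol}(X_i(\Theta') \triangle X_i(\Theta))$. So it remains to bound the Lebesgue measure of this symmetric difference.

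Next I would show that every point $x$ in the symmetric difference lies in $\bigcup_{j\neq i}\{x : |\ell(x,\theta_i)-\ell(x,\theta_j)| \leq 2L\gamma\}$. Indeed, if $x\in X_i(\Theta)\setminus X_i(\Theta')$, then $\ell(x,\theta_i)\leq \ell(x,\theta_j)$ for all $j$, while there exists some $j\neq i$ with $\ell(x,\theta_j')\leq \ell(x,\theta_i')$. By $L$-Lipschitzness of $\ell(x,\cdot)$,
\[
\ell(x,\theta_j) - L\gamma \leq \ell(x,\theta_j') \leq \ell(x,\theta_i') \leq \ell(x,\theta_i) + L\gamma,
\]
so $0\leq \ell(x,\theta_j)-\ell(x,\theta_i)\leq 2L\gamma$. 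The case $x \in X_i(\Theta')\setminus X_i(\Theta)$ is symmetric. Applying Assumption~\ref{ass: assumptions for proving a being Lipschitz} to each pair $(\theta_i,\theta_j)$ (which are distinct by Assumption~\ref{ass: not the same model}), for all sufficiently small $\gamma$ we obtain $\text{Vol}\{x: |\ell(x,\theta_i)-\ell(x,\theta_j)|\leq 2L\gamma\} \leq 2L\gamma$. Summing over $j\neq i$ gives $\text{Vol}(X_i(\Theta')\triangle X_i(\Theta)) \leq 2L(k-1)\gamma$, so
\[
|a_i(\Theta')-a_i(\Theta)| \leq 2ML(k-1)\gamma = L_a\|\Theta'-\Theta\|,
\]
with $L_a = 2ML(k-1)$. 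This establishes local Lipschitzness with an explicit constant valid for all sufficiently small perturbations.

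The main subtlety is step three, where one must correctly characterize which points lie in the symmetric difference. The argument relies on combining the Lipschitz condition with the definition of the partition so that every ``switching'' point is forced into the codimension-one boundary region where two losses are within $O(\gamma)$ of each other. Once this geometric observation is made, Assumption~\ref{ass: assumptions for proving a being Lipschitz} and the boundedness of $p$ finish the estimate immediately; the continuity of $p$ on $B(0,R)$ is only used to produce the finite bound $M$.
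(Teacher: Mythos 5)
Your proposal is correct and follows essentially the same route as the paper's proof: bound $|a_i(\Theta)-a_i(\Theta')|$ by $\sup p$ times the Lebesgue measure of the symmetric difference, show via $L$-Lipschitzness that every switching point satisfies $|\ell(x,\theta_i)-\ell(x,\theta_j)|\leq 2L\|\Theta-\Theta'\|$ for some $j\neq i$, and invoke Assumption~\ref{ass: assumptions for proving a being Lipschitz} on each pair. The only cosmetic difference is that the paper first treats $k=2$ and then extends by decomposing $X_i(\Theta')\setminus X_i(\Theta)$ over the other services, whereas you handle general $k$ directly with a union bound.
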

\begin{proof}
First, we prove a simple case where $k=2$ and we only have two services $i=1, j=2$. 
Given two sets  of model parameters $\Theta,  \Theta^{'}$, the difference of the induced portion of model $i$ is
\begin{equation*}
a_i(\Theta)-a_i(\Theta^{'})= \int_{X_i(\Theta)\backslash X_i(\Theta^{'})}p(x) dx -\int_{X_i(\Theta^{'})\backslash X_i(\Theta) }p(x) dx
\end{equation*}

Since $p$ is continuous on the closed ball $B(0, R)$, it attains maximum $p_{\text{max}}=\sup p(x)<\infty$. Then
\begin{equation*}
|a_i(\Theta)-a_i(\Theta^{'})|\leq p_{\text{max}} \cdot \left( \lambda(X_i(\Theta)\backslash X_i(\Theta^{'}))+\lambda (X_i(\Theta^{'})\backslash X_i(\Theta))
\right)
\end{equation*}
where $\lambda$ is the Lebesgue measure. 

Let's look at $X_1(\Theta^{'})\backslash X_1(\Theta)$, which are users move from model 2 to model 1 when $\Theta$ is perturbed to $\Theta^{'}$. For any point $x$ in $X_1(\Theta^{'})\backslash X_1(\Theta)$, we have
\begin{equation*}
\ell(x, \theta_1^{'})<\ell(x, \theta_2^{'}), \ell(x, \theta_2)<\ell(x, \theta_1)
\end{equation*}
Thus,
\begin{equation*}
\begin{aligned}
&\ell(x,\theta_2^{'})-\ell(x, \theta_1^{'})\\
=& \ell(x,\theta_2^{'})-\ell(x, \theta_2)+\ell(x, \theta_2)-\ell(x, \theta_1)+\ell(x, \theta_1)-\ell(x, \theta^{'}_1)\\
\leq & L\cdot \|\theta_2^{'}-\theta_2\|+L\cdot \|\theta_1-\theta_1^{'}\|+\ell(x, \theta_2)-\ell(x, \theta_1)
\end{aligned}
\end{equation*}
Therefore, we have
\begin{align*}
0<\ell(x, \theta_1)-\ell(x, \theta_2)<L\cdot \|\theta_2^{'}-\theta_2\|+L\cdot \|\theta_1-\theta_1^{'}\|\leq 2L\|\Theta-\Theta^{'}\|
\end{align*}
Let $S = \{x: |\ell(x, \theta_1)-\ell(x, \theta_2)|\leq 2L\|\Theta-\Theta^{'}\|\}$, then, from our assumption, we have $\lambda(S)\leq 2L\|\Theta-\Theta^{'}\|$ by letting $d=2L\|\Theta-\Theta^{'}\|$. Since $\lambda(S)\geq \lambda(X_1(\Theta^{'})\backslash X_1(\Theta))$, we have $\lambda(X_1(\Theta^{'})\backslash X_1(\Theta))\leq 2L \|\Theta-\Theta^{'}\|$. Similarly, we have $\lambda(X_1(\Theta)\backslash X_1(\Theta^{'})\leq 2L \|\Theta-\Theta^{'}\|$.
Thus, 
\begin{equation*}
|a_i(\Theta)-a_i(\Theta^{'})|\leq 4 p_{\text{max}}L\|\Theta-\Theta^{'}\|
\end{equation*}
Now that we have prove the lemma for $k=2$, to extend it to general $k$, simply note that $X_j(\Theta^{'})\backslash X_j(\Theta) =\cup_{l\neq j} (X_j(\Theta^{'})\cap X_l(\Theta)   )$, and that 
$\lambda(X_j(\Theta^{'})\backslash X_j(\Theta) )\leq \sum_{l\neq j} \lambda(X_j(\Theta^{'})\cap X_l(\Theta)   ).$
\end{proof}
\section{Additional Experiments} \label{app: additional exp}
\paragraph{More experiments of MSGD for ACSEmployment task on census data.}
As a supplement to Figure \ref{fig: acc subpopulation v.s. whole population: zeta0}, we compare the accuracy of MSGD and full information on the model specific subpopulation and the whole population  when $\zeta =0.1$ (Figure \ref{fig: acc subpopulation v.s. whole population: zeta0.1}) as well as the losses when $\zeta=0$ and $0.1$ (Figure \ref{fig: vs full information on loss}).
While Figure \ref{fig: acc subpopulation v.s. whole population: zeta0.1} shows a similar trend as of Figure \ref{fig: acc subpopulation v.s. whole population: zeta0}, we find that even evaluated on subpopulation, the increased the number of services from $k=2$ to $k=4$ decreases the accuracy on subpopulation. A plausible reason is that, besides the changes initial landscape due to the added service, the increase in $k$ also reduces the average number of data each service provider receives, causing the relationship of the number of services and the accuracy over subpopulation less observable. 
\begin{figure}
\centerline{\includegraphics[width=0.5\textwidth]{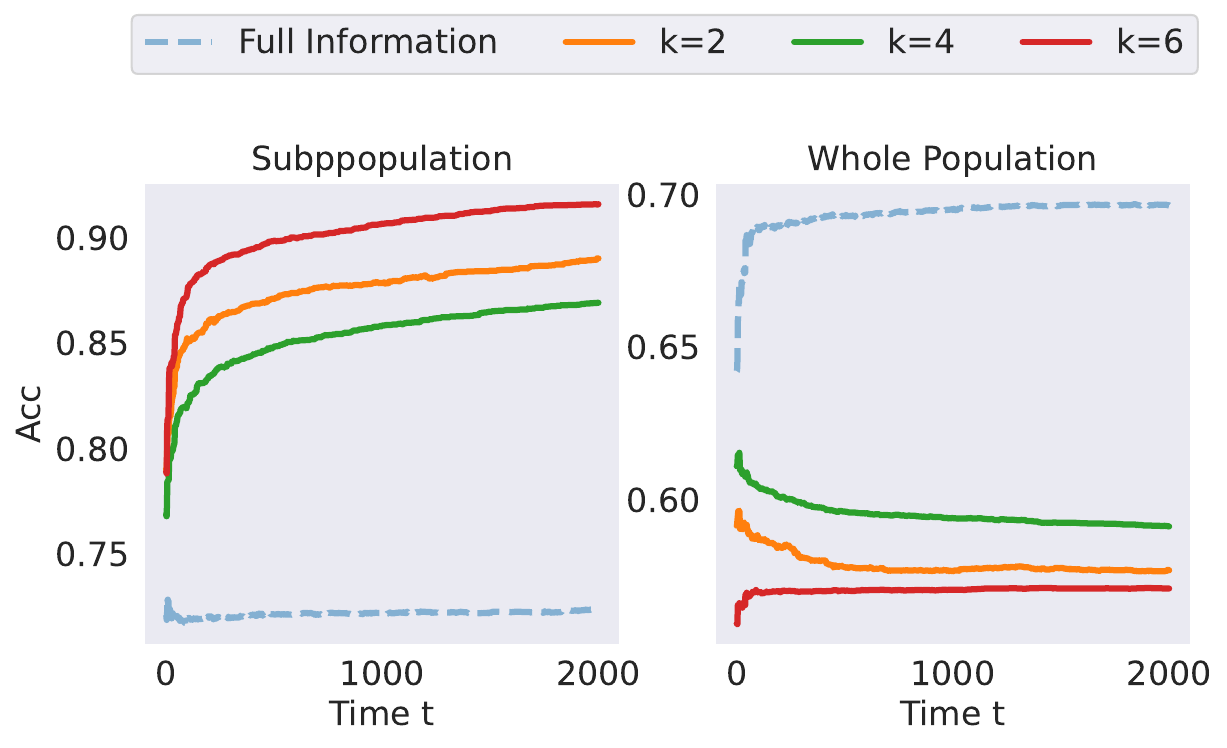}}
\caption{
Accuracy of MSGD or Full Information on the model-specific subpopulation $\mathcal D_i(\Theta)$ (left) and whole population $\mathcal P$ (right) for the ACSEmployment task on census data with perfectly rational users ($\zeta = 0.1$). For MSGD, we illustrate results of different total number of services $k =2, 4,6$.  }
\label{fig: acc subpopulation v.s. whole population: zeta0.1}
\end{figure}

Since services are trained with loss functions, we compare the overall loss of MSGD and Full information  over both subpopulation and whole population in Figure \ref{fig: vs full information on loss}. We found that, the loss of full information decreases both in terms of subpopulation and the whole population. Meanwhile, even though MSGD decreases faster than full information in subpopulation loss, its loss over the whole population increases, which means they becomes worse and worse for general population. (Similar to what we see about the accuracy).
\begin{figure}[ht]
\begin{subfigure}{.5\textwidth}
  \centering
  \includegraphics[width=1\linewidth]{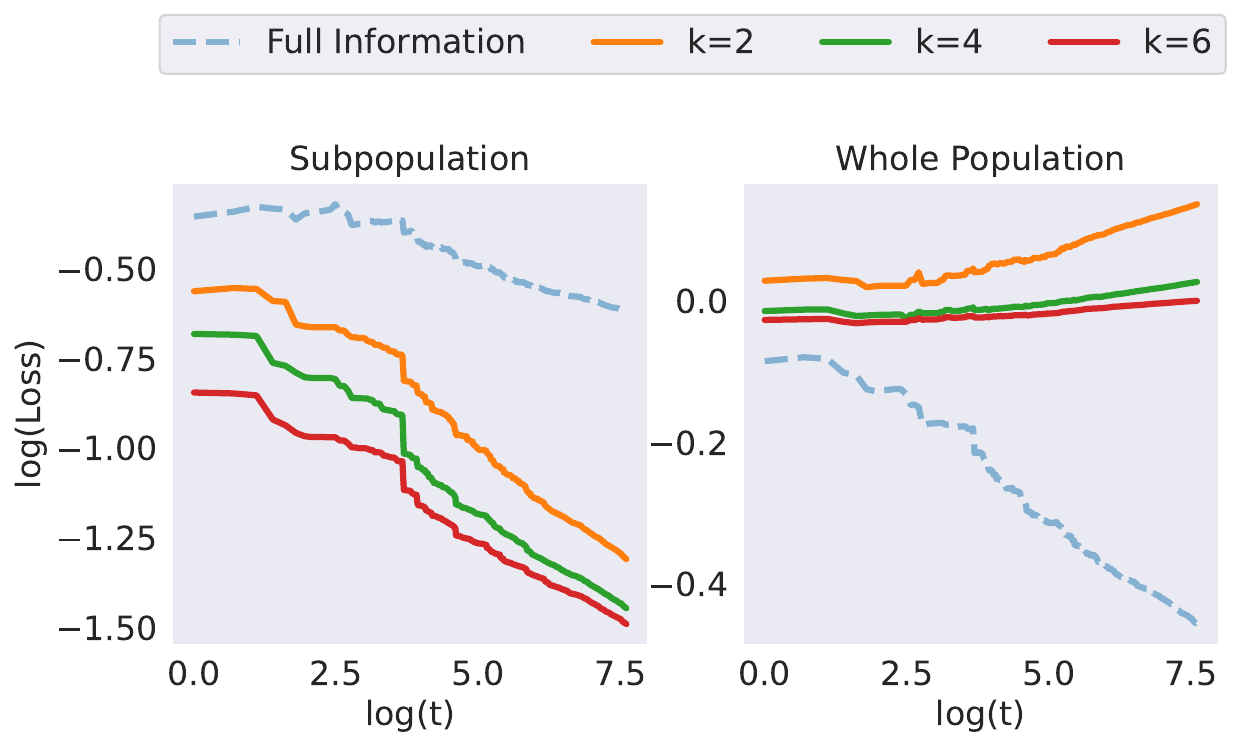}  
  \caption{Perfectly rational user: $\zeta = 0$}
\end{subfigure}
\begin{subfigure}{.5\textwidth}
  \centering
  \includegraphics[width=1\linewidth]{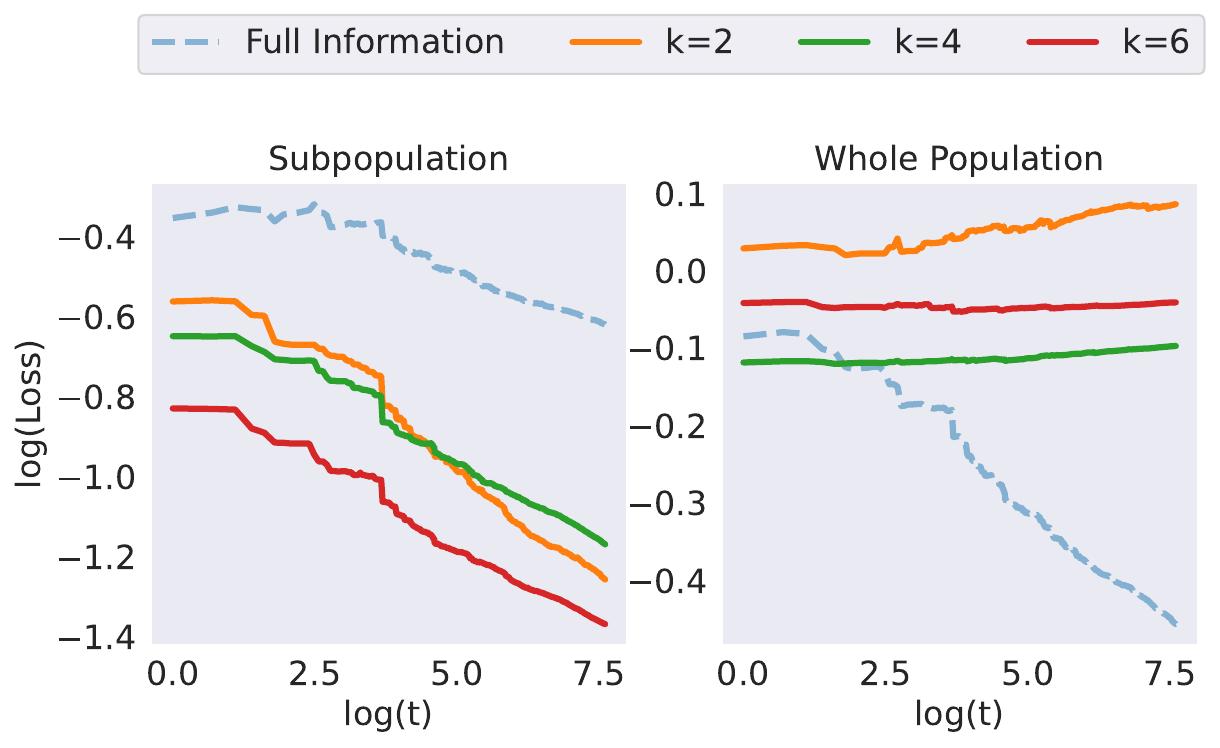}  
  \caption{Relatively rational user: $\zeta = 0.1$.}
\end{subfigure}
\caption{Log-log scaled loss of MSGD or Full Information on the model-specific subpopulation $\mathcal D_i(\Theta)$ and whole population $\mathcal P$ for the ACSEmployment task on census data. For MSGD, we illustrate results of different total number of services $k =2, 4,6$. }
\label{fig: vs full information on loss}
\end{figure}

\paragraph{MSGD in Different Settings}
\begin{figure}
\centerline{\includegraphics[width=1\textwidth]{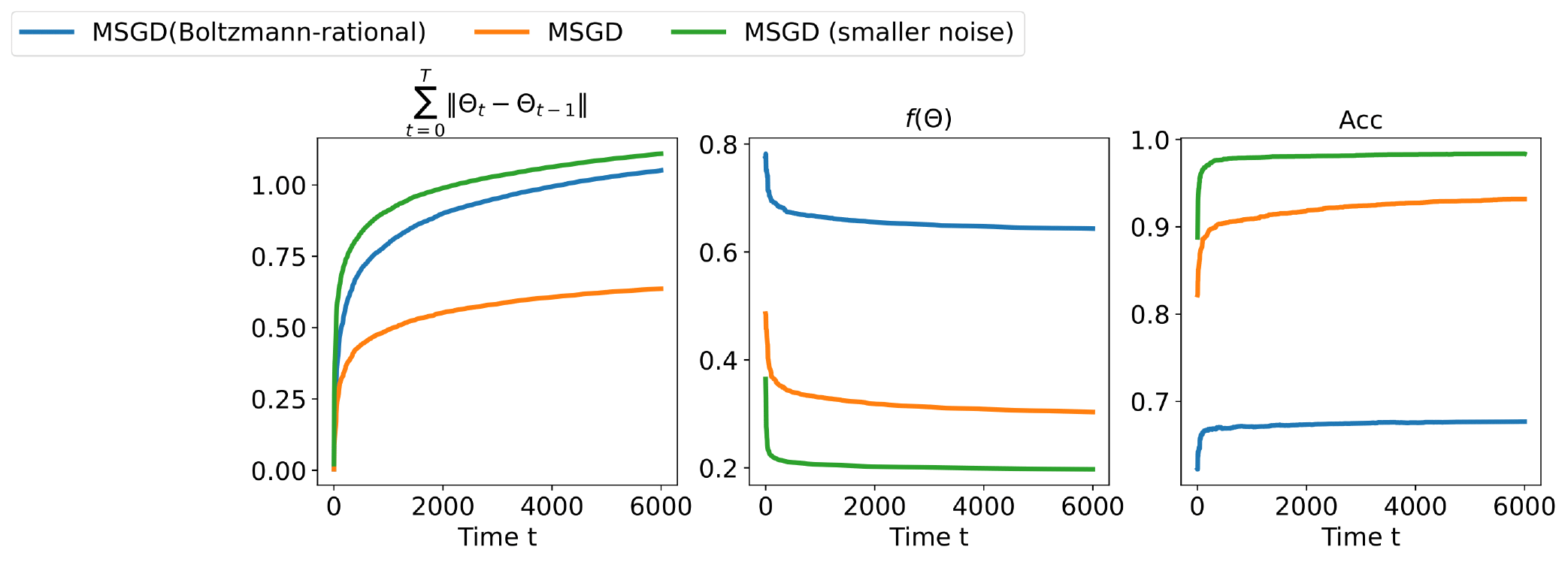}}
\caption{Comparing 
the iterates (left), objective function (middle) and prediction accuracy (right) of MSGD and two variants with Census dataset. }
\label{fig: more comparison}
\end{figure}
{We show in Figure \ref{fig: more comparison} the following two variants of the setting to demonstrate and better understand the advantage and limitations of MSGD.
\begin{enumerate}
    \item MSGD (Boltzmann-rational): When user behaves under  Boltzmann-rational model with $\alpha = 0.5$.
    \item MSGD (smaller noise): When we have more accurate user gradient. (Instead of one, 6 users arrive in each time step.)
\end{enumerate}
}
{The experiments are with Census dataset, for MSGD baseline, we use $\zeta= 0.1$.
From these additional experiments, we can see that
our MSGD algorithm can work well and converge even when user have more diverse behavior such as  Boltzmannn Rational model.
Having smaller noise on the online gradient can further improve the performance of our algorithm, which suggests that our algorithm, though designed for streaming user setting, can work well for both our setting and the more complete data setting that has been studied in previous papers.}
\paragraph{MSGD under Boltzmann-rational Model }
{
Though it is theoretically difficult to analyze the convergence of MSGD under Boltzmann-rational model,  we conduct additional empirical experiments of MSGD under Boltzmann-rational model on Census dataset with $\alpha =0, 0.5, 1$. The results are shown in figure \ref{fig: boltzmann rational model}, from which,  we do see that the iterates, $f(\Theta)$ and accuracy still converges, which indicates that our MSGD algorithm is robust enough to work well even when user behaviors deviates from our setting. }
\begin{figure}
\centerline{\includegraphics[width=1\textwidth]{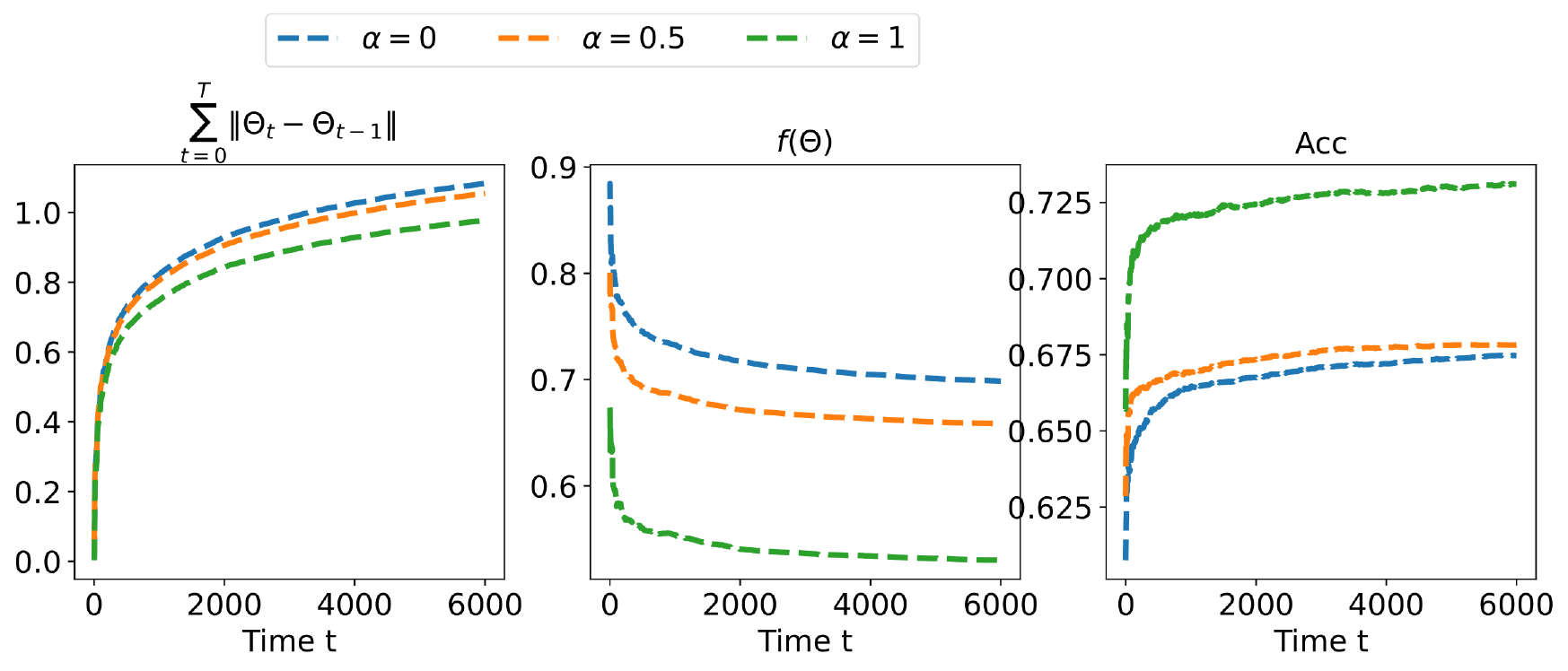}}
\caption{The convergence of the iterates (left), objective function (middle) and prediction accuracy (right) of MSGD under Boltzmann-rational model on Census dataset with $\alpha =0, 0.5, 1$.}
\label{fig: boltzmann rational model}
\end{figure}
\newpage
\section{Examples of Assumption \ref{ass: assumptions for proving a being Lipschitz}}\label{appedix: assumption intuition}

Generally, there are two kinds of scenarios where $\ell(x,\theta)$ and $\ell(x, \theta^{'})$ could be close to each other: One case is $\theta$ being close to $\theta^{'}$, the expression in Assumption \ref{ass: assumptions for proving a being Lipschitz} may only hold for small $d$. 
As long as $\theta$ are not arbitrarily close to $\theta^{'}$, the assumption states that we can still always find a small enough $d_0$ such that the expression holds. 
Another case is when $\theta$ is distinguishable from $\theta^{'}$, but there still exist some users who are ambiguous on which service to choose (i.e., ${\{x: |\ell(x, \theta)-\ell(x, \theta^{'})|<d}\}$). We give an example of these scenarios in Figure \ref{fig: assumption}.
Assumption \ref{ass: assumptions for proving a being Lipschitz} states that the volume of these ambiguous users can be controlled by $d$.

\begin{figure}[h]
\centerline{\includegraphics[width=0.7\textwidth]{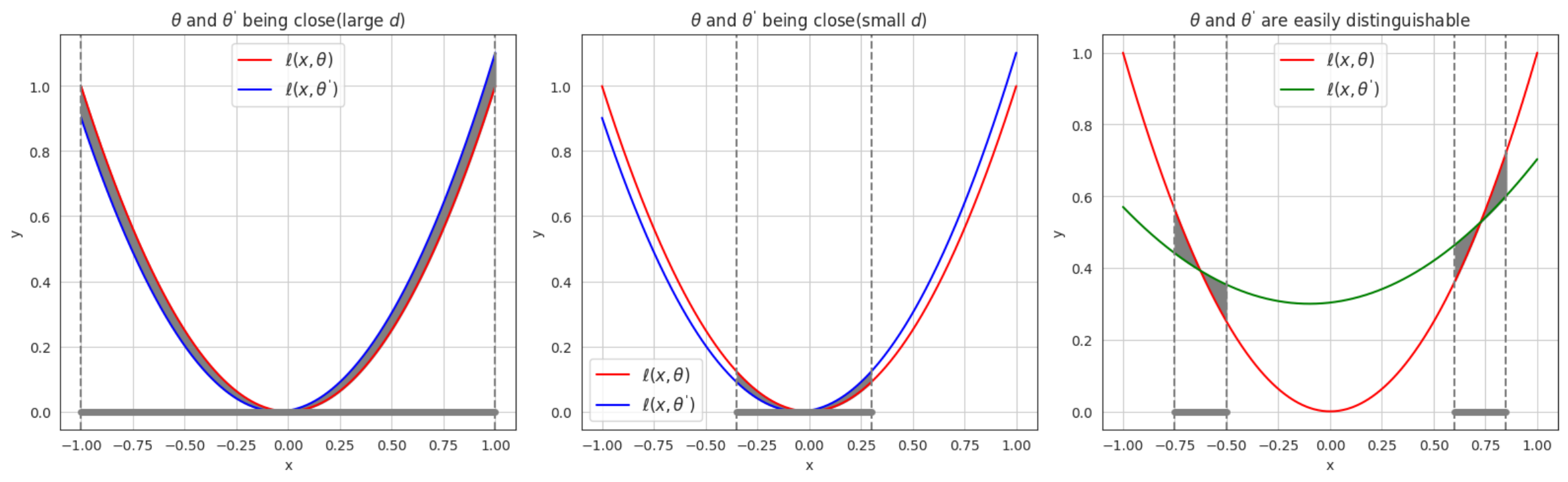}}
\caption{Examples of Assumption \ref{ass: assumptions for proving a being Lipschitz}. \textbf{Left}: example of $\theta$ being close to $\theta^{'}$, Assumption \ref{ass: assumptions for proving a being Lipschitz} is hard to hold with large $d$. \textbf{Middle}: $\theta$ being close to $\theta^{'}$ and Assumption \ref{ass: assumptions for proving a being Lipschitz} holds with sufficiently small $d$.
\textbf{Right}: $\theta$ is distinguishable from $\theta^{'}$, but there are still some users who are ambiguous about which service to choose (i.e., $\{{x: |\ell(x, \theta)-\ell(x, \theta^{'})|<d}\}$).
 }
\label{fig: assumption}
\end{figure}

\end{document}